\documentclass{article}
\usepackage{iclr2024_conference,times}


\usepackage{amsmath,amsfonts,bm}









\def\eqref#1{equation~\ref{#1}}









\def\1{\bm{1}}










\DeclareMathAlphabet{\mathsfit}{\encodingdefault}{\sfdefault}{m}{sl}
\SetMathAlphabet{\mathsfit}{bold}{\encodingdefault}{\sfdefault}{bx}{n}













\DeclareMathOperator*{\argmin}{arg\,min}

\usepackage[hidelinks]{hyperref}
\usepackage{url}

\usepackage{booktabs}
\usepackage{amsfonts}
\usepackage{nicefrac}
\usepackage{graphicx}
\usepackage{amsthm}
\usepackage{colortbl}
\usepackage{multirow}
\usepackage{wrapfig}
\usepackage[normalem]{ulem}
\usepackage[ruled,vlined,linesnumbered]{algorithm2e}
\usepackage[para]{footmisc}
\definecolor{Gray}{gray}{0.9}

\usepackage{enumitem}
\usepackage{mathrsfs}

\newcolumntype{g}{>{\columncolor{Gray}}c}

\DeclareMathOperator{\smape}{s\textsc{mape}}

\DeclareMathOperator{\mase}{\textsc{mase}}

\DeclareMathOperator{\relu}{\textsc{ReLu}}

\newtheorem{thm}{Theorem}[section]
\newtheorem{pro}[thm]{Proposition}

\newtheorem{lem}[thm]{Lemma}
\numberwithin{equation}{section}

\theoremstyle{definition}
\newtheorem{rem}[thm]{Remark}
\newtheorem{dfn}[thm]{Definition}

\title{Feature-aligned N-BEATS with Sinkhorn Divergence}

\author{Joonhun Lee\thanks{Equal contribution}\hphantom{$^*$}, Myeongho Jeon$^*$ \& Myungjoo Kang\thanks{Co-corresponding authors}\\
    Seoul National University\\
    {\tt\small \{niceguy718,andyjeon,mkang\}@snu.ac.kr}
    \And
    Kyunghyun Park$^\dagger$ \\
    Nanyang Technological University\\
    {\tt\small kyunghyun.park@ntu.edu.sg}
}

\iclrfinalcopy

\begin{document}
\maketitle

\setlength{\parskip}{0.5\baselineskip}

\begin{abstract}
    We propose Feature-aligned N-BEATS as a domain-generalized time series forecasting model.
    It is a nontrivial extension of N-BEATS with doubly residual stacking principle (Oreshkin et al. \cite{oreshkin2019n}) into a representation learning framework.
    In particular, it revolves around marginal feature probability measures induced by the intricate composition of residual and feature extracting operators of N-BEATS in each stack and aligns them stack-wise via an approximate of an optimal transport distance referred to as the Sinkhorn divergence.
    The training loss consists of an empirical risk minimization from multiple source domains, i.e., forecasting loss, and an alignment loss calculated with the Sinkhorn divergence, which allows the model to learn invariant features stack-wise across multiple source data sequences while retaining N-BEATS's interpretable design and forecasting power.
    Comprehensive experimental evaluations with ablation studies are provided and the corresponding results demonstrate the proposed model's forecasting and generalization capabilities.
\end{abstract}

\section{Introduction}
Machine learning models typically presume that the loss minimization from training data results in reasonable performance on a target environment, i.e., empirical risk minimization \cite{vapnik1991principles}.
However, when using such models in the real world, the target environment is likely to deviate from the training data, which poses a significant challenge for a well-adaptive model to the target environment.
This is related to the concept of \textit{domain shift} \cite{quinonero2008dataset}.

A substantial body of research has been dedicated to developing frameworks that can accommodate the domain shift issue \cite{ben2010theory,ben2006analysis,ganin2016domain}. In particular, classification tasks have been the predominant focus \cite{li2018domain,li2018deep,wang2022generalizing,wang2018deep,zhou2021domain}.
As an integral way for modeling sequential data in broad domains such as finance, operation research, climate modeling, and biostatistics, {\it time series forecasting} has been a big part of machine learning fields. Nevertheless, the potential domain shift issue for common forecasting tasks has not been considered intensively compared to classification tasks, but only a few articles addressing this can be named \cite{hu2020datsing, jin2022domain}.

The goal of this article is to propose a resolution for the domain shift issue within time series forecasting tasks, namely a {\it domain-generalized} time series forecasting model.
In particular, the proposed model is built upon a deep learning model which is N-BEATS \cite{oreshkin2019n,oreshkin2021meta}, and a representation learning toolkit which is the {\it feature alignment}.
N-BEATS revolves around a doubly residual stacking principle and enhances the forecasting capabilities of multilayer perceptron (MLP) architectures without resorting to any traditional machine learning methods.
On the other hand, it is well-known that aligning marginal feature measures enables machine learning models to capture invariant features across distinctive domains \cite{bengio2013representation}.
Indeed, in the context of classification tasks, many references \cite{li2018domain, matsuura2020domain, muandet2013domain, zhao2020domain} demonstrated that the feature alignment mitigates the domain shift issue.

It is important to highlight that the model is not a straightforward combination of the established components but a nontrivial extension that poses several challenges. First, N-BEATS does not allow the feature alignment in a `one-shot' unlike the aforementioned references.
This is because it is a hierarchical multi-stacking architecture in which each stack consists of several blocks and is connected to each other by residual operations and feature extractions. In response to this, we devise the \textit{stack-wise} alignment that is a minimization of divergences between marginal feature measures on a stack-wise basis. The stack-wise alignment enables the model to learn feature invariance with an ideal frequency of propagation. Indeed, instead of aligning every block for each stack, single alignment for each stack mitigates gradient vanishing/exploding issue \cite{pascanu2013difficulty} via sparsely propagating loss while preserving the interpretability of N-BEATS and ample semantic coverage \cite[Section~3.3]{oreshkin2019n}.

Second, the stack-wise alignment demands an efficient and accurate method for measuring divergence between measures. Indeed, the alignment is inspired by the error analysis of general domain generalization models given in \cite[Theorem 1]{albuquerque2019generalizing}, in which empirical risk minimization loss and pairwise $\mathcal{H}$-divergence loss between marginal feature measures are the trainable components among the total error terms without any target domain information. In particular, since the feature alignment requires the calculation of pairwise divergences for multiple stacks (due to the doubly residual stacking principle), the computational load steeply increases as either the number of source domains or that of stacks increases. On the other hand, from the perspective of accuracy and efficiency, the $\mathcal{H}$-divergence is notoriously challenging to be used in practice \cite{ben2010theory,le2019deep,li2018extracting,shui2022novel}.

For a suitable toolkit, we adopt the Sinkhorn divergence which is an efficient approximation for the classic optimal transport distances \cite{feydy2019interpolating,genevay2018learning,ramdas2017wasserstein}.
This choice is motivated by the substantial theoretical evidences of optimal transport distances. Indeed, in the adversarial framework, optimal transport distances have been essential for theoretical evidences and calculation of divergences between pushforward measures induced by a generator and a target measure \cite{genevay2018learning,shen2018wasserstein,xu2020cot,zhou2021domain}.
In particular, the computational efficiency of the Sinkhorn divergence and fluent theoretical results by \cite{di2020optimal,dudley1969speed,feydy2019interpolating,genevay2018learning} are crucial for our choice among other optimal transport distances. Thereby, the training objective is to minimize the empirical risk and the stack-wise Sinkhorn divergences (Section \ref{sec:training}).

{\bf Contributions.} To provide an informative procedure of stack-wise feature alignment, we introduce a concrete mathematical formulation of N-BEATS (Section \ref{sec:background}), which enables to define the pushforward feature measures induced by the intricate residual operations and the feature extractions for each stack (Section \ref{sec:margin_measure}). From this, we make use of theoretical properties of optimal transport problems to show a representation learning bound for the stack-wise feature alignment with the Sinkhorn divergence (Theorem \ref{thm:sinkhorn_div}), which justifies the feasibility of Feature-aligned N-BEATS. To embrace comprehensive domain generalization scenarios, we use real-world data and evaluate the proposed method under three distinct protocols based on the domain shift degree. We show that the model consistently outperforms other forecasting models. In particular, our method exhibits outstanding generalization capabilities under severe domain shift cases (Table \ref{table:result}). We further conduct ablation studies to support the choice of the Sinkhorn divergence in our model (Table \ref{table:regularizer}).

{\bf Related literature.} For time series forecasting, deep learning architectures including recurrent neural networks \cite{bandara2020forecasting, cao2018brits, hewamalage2021recurrent, rangapuram2018deep, salinas2020deepar} and convolutional neural networks \cite{chen2020probabilistic,liu2022scinet} have achieved significant progress.
Recently, a prominent shift has been observed towards transformer architectures leveraging self-attention mechanisms \cite{lim2021temporal,madhusudhanan2022u,woo2022etsformer,wu2021autoformer,zhou2021informer,zhou2022fedformer}.
Despite their innovations, concerns have been raised regarding the inherent permutation invariance in self-attention, which potentially leads to the loss of temporal information \cite{zeng2023transformers}. 
On the other hand, \cite{challu2023nhits,oreshkin2019n} empirically show that MLP-based architectures would mitigate such a disadvantage and even surpass the transformer-based models.

Regarding the domain shifts for time series modeling, \cite{hu2020datsing} proposed a technique that selects samples from source domains resembling the target domain, and employs regularization to encourage learning domain invariance.
\cite{jin2022domain} designed a shared attention module paired with a domain discriminator to capture domain invariance. \cite{oreshkin2021meta} explored domain generalization from a meta-learning perspective without the information on the target domain. Nonetheless, an explicit toolkit and concrete formulation for domain generalization are not considered therein.

The remainder of the article is organized as follows. In Section \ref{sec:background}, we set the domain generalization problem in the context of time series forecasting, review the doubly residual stacking architecture of N-BEATS, and introduce the error analysis for domain generalization models. Section \ref{sec:method} is devoted to defining the marginal feature measures inspiring the stack-wise alignment, introducing the Sinkhorn divergence together with the corresponding representation learning bound, and presenting the training objective with the corresponding algorithm. In particular, Figure \ref{fig:n-beats} therein illustrates the overall architecture of Feature-aligned N-BEATS. In Section \ref{sec:exp}, comprehensive experimental evaluations are provided. Section \ref{sec:conclude} concludes the paper. Other technical descriptions, visualized results, and ablation studies are given in Appendix.

\section{Background} \label{sec:background}
{\bf Notations.}~Let ${\cal X}:= \mathbb{R}^\alpha$ and ${\cal Y}:= \mathbb{R}^{\beta}$ be the input and output spaces, respectively, where $\alpha$ and $\beta$ denote the lookback and forecast horizons, respectively. Let ${\cal Z}:= \mathbb{R}^\gamma$ be the latent space with $\gamma$ representing the feature dimension. We further denote by $\widetilde{{\cal Z}} \subset {\cal Z}$ a subspace of ${\cal Z}$. All the aforementioned spaces are equipped with the Euclidean norm $\lVert\cdot\rVert$. Define by ${\cal P}:={\cal P}({\cal X}\times {\cal Y})$ the set of all Borel joint probability measures on ${\cal X}\times {\cal Y}$. For any $\mathbb{P}\in {\cal P}$, denotes by $\mathbb{P}_{\cal X}$ and $\mathbb{P}_{\cal Y}$ corresponding marginal probability measures on ${\cal X}$ and ${\cal Y}$, respectively. We further define by ${\cal P}({\cal X})$ and ${\cal P}(\widetilde{{\cal Z}})$ the sets of all Borel probability measures on ${\cal X}$ and $\widetilde{{\cal Z}}$, respectively.

{\bf Domain generalization in time series forecasting.} There are multiple source domains $\{{\cal D}^k\}_{k=1}^K$ with $K\geq 2$ and target (unseen) domain $\mathcal{D}^T$.
Assume that each ${\cal D}^k$ is equipped with $\mathbb{P}^k\in {\cal P}$ and the same holds for ${\cal D}^T$ with $\mathbb{P}^T\in{\cal P}$ and that sequential data for each domain are sampled from corresponding joint distribution. Let $l:{\cal Y}\times {\cal Y}\rightarrow \mathbb{R}_+$ be a loss function. Then, the objective is to derive a prediction model $\mathfrak{F}:\mathcal{X} \rightarrow {\cal Y}$  such that $\mathfrak{F}(\mathbf{s}_{t-\alpha+1}, \cdots, \mathbf{s}_{t})\approx\mathbf{s}_{t + 1},\cdots \mathbf{s}_{t+\beta}$ for ${\bf s}=(\mathbf{s}_{t-\alpha+1}, \cdots, \mathbf{s}_{t})\times(\mathbf{s}_{t + 1},\cdots \mathbf{s}_{t+\beta}) \sim\mathbb{P}^T$,
by leveraging on $\{\mathbb{P}^k\}_{k=1}^K$, i.e.,
\begin{equation}\label{eq:NBEATS}
    \inf_{\mathfrak{F}}\;\; {\cal L}(\mathfrak{F}),\quad \text{with}\quad  {\cal L}(\mathfrak{F}):=\frac{1}{K} \sum_{k=1}^{K} \mathbb{E}_{(x,y)\sim \mathbb{P}^k} \left[ l\big(\mathfrak{F}(x),y \big)\right].
\end{equation}

{\bf Doubly residual stacking architecture.} The main architecture of N-BEATS equipped with the doubly residual stacking principle from \cite{challu2023nhits,oreshkin2019n} is summarized as follows: for $M,L\in \mathbb{N}$, the model comprises $M$ stacks, with each stack consisting of $L$ blocks. The blocks share the same model weight within each respective stack and are recurrently operated based on the double residual stacking principle. More precisely, an $m$-th stack derives the principle in a way that for $x_{m,1}\in {\cal X}$,
\begin{equation}\label{eq:branch0}
    \hat{y}_m := \sum_{l=1}^L ({\xi}^m_\downarrow \circ\psi^m )(x_{m,l}),\;\; x_{m,l} := x_{m,l-1}-  ({\xi}^m_\uparrow \circ \psi^m)(x_{m,l-1}),\quad l=2,\dots,L,
\end{equation} 

where $\psi^m:\mathcal{X} \rightarrow \mathcal{Z}$ extracts features $\psi^m(x_{m,l})\in{\cal Z}$ from the inputs $x_{m,l} \in {\cal X}$ for each layer $l$, and $({\xi}^m_\downarrow,{\xi}^m_\uparrow): {\cal Z}\rightarrow {\cal Y}\times {\cal X}$ generates both forecasts $({\xi}^m_\downarrow \circ\psi^m )(x_{m,l}) \in {\cal Y}$ and backcasts $({\xi}^m_\uparrow\circ\psi^m)(x_{m,l})\in {\cal X}$ branches.
Note that $\hat{y}_m \in {\cal Y}$ represents the $m$-th forecast obtained through the hierarchical aggregation of each block's forecast, and that the last backcast $x_{m,L}\in {\cal X}$, derived by a residual sequence from blocks, serves as an input for the next stack, except for the case $m=M$.

Once the hierarchical aggregation of all stacks and the residual operations are completed, the model $\mathfrak{F}$ for the doubly residual stacking architecture is given as follows: {for $(x,y)\sim \mathbb{P}^T$} and $x_{1,1}:=x$,
\begin{equation} \label{eq:branch}
    y \approx \mathfrak{F}(x;\Psi,\Xi_\downarrow,\Xi_\uparrow):= \sum_{m=1}^M \hat{y}_m,
    \quad x_{m,1} := x_{m-1,L},\quad m=2,\dots,M,
\end{equation} 
subject to $\hat{y}_m$ and $x_{m-1,L}$ given in (\ref{eq:branch0}), where 
\begin{equation} \label{eq:operators}
    \Psi:=\{\psi^m\}_{m=1}^M,\;\;\Xi_\downarrow:=\{\xi^m_\downarrow\}_{m=1}^M,
    \quad \Xi_\uparrow:=\{\xi^m_\uparrow\}_{m=1}^M,
\end{equation}
are implemented by fully connected layers.
For further details, refer to Appendix \ref{apen:margin_measure}.

{\bf Domain-invariant feature representation.} After the investigation on the error analysis for domain adaptation models by \cite{zhao2019learning},
an extended version for domain generalization models is provided by \cite{albuquerque2019generalizing}. This provides us an insight for developing a domain generalization toolkit within the context of doubly residual stacking models.

In the following, we restate Theorem 1 in \cite{albuquerque2019generalizing}. To that end, we introduce some notations. Let ${\cal H}$ be the set of hypothesis functions $h:{\cal X}\rightarrow [0,1]$ and let $\widetilde{\cal H}:=\{\operatorname{sgn}(|h(\cdot)-h'(\cdot)|-t):h,h'\in {\cal H},\;t\in[0,1]\}$. The ${\cal H}$-divergence is defined by $d_{{\cal H}}(\mathbb{P}_{{\cal X}}',\mathbb{P}_{{\cal X}}''):=2\sup_{h \in{\cal H}}|\mathbb{E}_{x\sim \mathbb{P}_{\cal X}'}[{\bf 1}_{\{h(x)=1\}}]-\mathbb{E}_{x\sim \mathbb{P}_{\cal X}''}[{\bf 1}_{\{h(x)=1\}}]|$ 
for any ${\mathbb{P}}_{{\cal X}}',{\mathbb{P}}_{{\cal X}}''\in{\cal P}({\cal X})$. The $\widetilde{{\cal H}}$-divergence $d_{\widetilde{\cal H}}(\cdot,\cdot)$ is defined analogously, with ${\cal H}$ replaced by $\widetilde{\cal H}$. Furthermore, denote by $R^k(\cdot):{\cal H}\rightarrow \mathbb{R}$ and $R^T(\cdot):{\cal H}\rightarrow \mathbb{R}$ the expected risk under the source measures $\mathbb{P}^k$, $k=1,\dots,K$, and the target measure $\mathbb{P}^T$, respectively.

\begin{pro} \label{pro:error_analy}
    Let $\Delta_K$ be a (K-1)-dimensional simplex such that each component $\pi$ represents a convex weight. Set $\Lambda:=\{\sum_{k=1}^K\pi_i \mathbb{P}^k_{{\cal X}}|\pi\in \Delta_K\}$ and let  $\mathbb{P}^*:=\sum_{k=1}^K\pi_k^*\mathbb{P}_{{\cal X}}^k\in  \argmin_{\mathbb{P}'_{{\cal X}}\in \Lambda} d_{{\cal H}}(\mathbb{P}_{{\cal X}}^T,\mathbb{P}'_{{\cal X}})$. Then, the following holds: for any $h\in {\cal H}$,
    \[
        R^{T}(h) \leq \Sigma_{k=1}^K \pi_k^* R^{k}(h) +d_{{\cal H}}(\mathbb{P}_{{\cal X}}^{T},\mathbb{P}^*_{{\cal X}})+ \max_{i,j \in \{1,\dots,K\},\;i\neq j}d_{{\widetilde{\cal H}}}(\mathbb{P}_{{\cal X}}^i,\mathbb{P}^j_{{\cal X}})+ \lambda_{(\mathbb{P}_{{\cal X}}^T,\mathbb{P}_{{\cal X}}^*)},
    \]
    with $\lambda_{(\mathbb{P}_{{\cal X}}^T,\mathbb{P}^*_{{\cal X}})} := \min\{\mathbb{E}_{x\sim\mathbb{P}_{{\cal X}}^T}[|\sum_{k=1}^K\pi_k^*f^k(x)-f^{T}(x)|],\mathbb{E}_{x\sim\mathbb{P}_{{\cal X}}^*}[|\sum_{k=1}^K\pi_k^*f^k(x)-f^{T}(x)|]\}$, 
    where $f^k$, $k=1,\dots,K$, denotes a true labeling function under $\mathbb{P}^k$, i.e., $y=f^k(x)$ for $(x,y)\sim \mathbb{P}^k$, and similarly $f^{T}$ denotes a true labeling function under $\mathbb{P}^{T}$.
\end{pro}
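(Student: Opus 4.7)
My plan is to adapt the two-domain Ben-David style bound to the multi-source domain generalization setting by inserting the optimal mixture $\mathbb{P}^*$ as a pivot between the target and the individual sources. The argument has three ingredients that together produce the four terms in the stated inequality: a Ben-David style two-domain bound between the target and $\mathbb{P}^*$, a decomposition of the mixture risk into the convex combination $\sum_k \pi_k^{*} R^k(h)$ of source risks, and a conversion of the leftover labeler mismatch into a pairwise $\widetilde{{\cal H}}$-divergence between source marginals.

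First, I would introduce the auxiliary joint distribution with marginal $\mathbb{P}^*_{{\cal X}}$ and labeling function $f^{*}(x) := \sum_{k=1}^K \pi_k^{*} f^k(x)$, and denote its risk by $R^{\mathbb{P}^*}(h)$. A standard add-and-subtract argument together with the definition of the ${\cal H}$-divergence yields
\[
R^T(h) \;\leq\; R^{\mathbb{P}^*}(h) \;+\; d_{{\cal H}}(\mathbb{P}^T_{{\cal X}}, \mathbb{P}^*_{{\cal X}}) \;+\; \lambda_{(\mathbb{P}^T_{{\cal X}}, \mathbb{P}^*_{{\cal X}})},
\]
with the minimum inside $\lambda$ arising because the labeler gap $|\sum_k \pi_k^{*} f^k - f^T|$ can be integrated against either marginal. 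Second, by linearity of expectation $R^{\mathbb{P}^*}(h) = \sum_k \pi_k^{*} \mathbb{E}_{x \sim \mathbb{P}^k_{{\cal X}}}[l(h(x), f^{*}(x))]$; replacing $f^{*}$ by the source labeler $f^k$ inside each expectation and applying the triangle inequality to $l$ produces $\sum_k \pi_k^{*} R^k(h)$ together with residual terms that measure pairwise labeler disagreements $\mathbb{E}_{x \sim \mathbb{P}^k_{{\cal X}}}[|f^k(x) - f^j(x)|]$. Third, invoking the layer-cake identity $|a-b| = \int_0^1 \mathbf{1}_{\{|a-b|>t\}} \, dt$ together with the definition $\widetilde{{\cal H}} = \{\operatorname{sgn}(|h-h'|-t) : h, h' \in {\cal H}, \; t \in [0,1]\}$, every residual is bounded by some $d_{\widetilde{{\cal H}}}(\mathbb{P}^k_{{\cal X}}, \mathbb{P}^j_{{\cal X}})$, and a uniform bound over pairs yields the $\max_{i \neq j}$ term.

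The main obstacle I anticipate lies in the third step, namely converting the $L^1$-type labeler discrepancy into the particular $\widetilde{{\cal H}}$-divergence built from $\operatorname{sgn}(|h-h'|-t)$ rather than the more familiar ${\cal H}\Delta{\cal H}$-divergence; this is where the tailored construction of $\widetilde{{\cal H}}$ pays off and where the measurability of the sub-level sets, the range of the threshold $t$, and the assumption that the true labelers lie in (or can be approximated within) the hypothesis class must be handled carefully. All remaining steps are essentially linearity and triangle inequalities. Since this proposition is explicitly a restatement of \cite[Theorem~1]{albuquerque2019generalizing}, I would defer the full bookkeeping to that reference once the three-step skeleton above is in place.
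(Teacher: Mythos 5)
First, a point of reference: the paper itself contains no proof of this proposition. It is explicitly introduced with ``we restate Theorem 1 in \cite{albuquerque2019generalizing}'', and the appendix only proves Lemma \ref{lem:lipschitz} and Theorem \ref{thm:sinkhorn_div}. So your deferral to the cited reference at the end is consistent with what the paper does, and there is no in-paper argument to compare your skeleton against. Your first ingredient (the Ben-David-style two-domain bound between $\mathbb{P}^T$ and the pivot $\mathbb{P}^*$, with the $\min$ in $\lambda_{(\mathbb{P}^T_{\mathcal{X}},\mathbb{P}^*_{\mathcal{X}})}$ coming from integrating the labeler gap against either marginal) is the right opening move.

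There is, however, a genuine gap in your second and third steps. You decompose $R^{\mathbb{P}^*}(h)$ by replacing the mixture labeler $f^*$ with $f^k$ inside each expectation, which leaves residuals of the form $\mathbb{E}_{x\sim\mathbb{P}^k_{\mathcal{X}}}[|f^k(x)-f^j(x)|]$, and you then claim each such residual is bounded by $d_{\widetilde{\mathcal{H}}}(\mathbb{P}^k_{\mathcal{X}},\mathbb{P}^j_{\mathcal{X}})$. That step fails: a labeler disagreement evaluated under a \emph{single} marginal cannot be controlled by a divergence between \emph{two} marginals. Take $\mathbb{P}^i_{\mathcal{X}}=\mathbb{P}^j_{\mathcal{X}}$ with $f^i\neq f^j$; then $d_{\widetilde{\mathcal{H}}}(\mathbb{P}^i_{\mathcal{X}},\mathbb{P}^j_{\mathcal{X}})=0$ while your residual is strictly positive. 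The correct route keeps the labeler fixed and swaps the marginal instead: expand $|h-f^*|\leq\sum_k\pi_k^*|h-f^k|$ under the mixture marginal to get $R^{\mathbb{P}^*}(h)\leq\sum_{i,k}\pi_i^*\pi_k^*\,\mathbb{E}_{x\sim\mathbb{P}^i_{\mathcal{X}}}[|h(x)-f^k(x)|]$, and then compare $\mathbb{E}_{x\sim\mathbb{P}^i_{\mathcal{X}}}[|h-f^k|]$ with $R^k(h)=\mathbb{E}_{x\sim\mathbb{P}^k_{\mathcal{X}}}[|h-f^k|]$. This is a difference of expectations of the \emph{same} integrand under two different marginals, and it is exactly this change-of-measure error that the layer-cake identity $|h-f^k|=\int_0^1\mathbf{1}_{\{|h-f^k|>t\}}\,dt$ together with the class $\widetilde{\mathcal{H}}=\{\operatorname{sgn}(|h(\cdot)-h'(\cdot)|-t)\}$ controls, yielding the $\max_{i\neq j}d_{\widetilde{\mathcal{H}}}(\mathbb{P}^i_{\mathcal{X}},\mathbb{P}^j_{\mathcal{X}})$ term (and requiring, as you correctly flag, that the true labelers lie in $\mathcal{H}$ so the relevant sublevel-set indicators belong to $\widetilde{\mathcal{H}}$). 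The pairwise source term is thus a marginal-mismatch cost, not a labeler-mismatch cost; all labeling-function discrepancy is absorbed into $\lambda_{(\mathbb{P}^T_{\mathcal{X}},\mathbb{P}^*_{\mathcal{X}})}$. With that correction your three-step skeleton matches the argument of the cited theorem.
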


While the upper bound of $R^T(\cdot)$ consists of four terms, only the first and third terms (representing the source risks $\{R^k(\cdot)\}_{k=1}^K$ and the pairwise divergences $\{d_{\widetilde{\mathcal{H}}}(\mathbb{P}_{\mathcal{X}}^i,\mathbb{P}^j_{\mathcal{X}})\}_{i\neq j}^K$ across all marginal feature measures, respectively) are learnable without the target domain information.

\section{Method}\label{sec:method}
\subsection{Marginal Feature Measures} \label{sec:margin_measure}
Aligning marginal feature measures is a predominant approach in domain-invariant representation learning \cite{ganin2016domain,shui2022benefits}.
In particular, the marginal feature measures $\{g_{\#}\mathbb{P}^k_{\mathcal{X}}\}_{k=1}^K$ are defined as pushforward measures induced by a given feature map $g:\mathcal{X}\to\mathcal{Z}$ from $\{\mathbb{P}_{\mathcal{X}}^k\}_{k=1}^K$, i.e., $g_\#\mathbb{P}^k_{\mathcal{X}}(E)=\mathbb{P}^k_{\mathcal{X}}\circ g^{-1}(E)$ for any Borel set $E$ in $\mathcal{Z}$.

However, defining such measures for doubly residual architectures poses some challenges. Indeed, as discussed in Section \ref{sec:background}, N-BEATS includes multiple feature extractors $\Psi=\{\psi^m\}_{m=1}^M$ as defined in (\ref{eq:branch0}), where each extractor $\psi^m$ takes a sampled input passing through multiple residual operations of previous stacks and the input is recurrently processed within each stack by the residual operations $\Xi_\downarrow$ and $\Xi_\uparrow$. 
The scaling factor represents domain-specific characteristics that exhibit noticeable variations. This can lead to an excessive focus on scale adjustments in the aligning process, potentially neglecting crucial features, such as seasonality or trend. 

To resolve these difficulties, we propose a stack-wise alignment of feature measures on subspace $\widetilde{\mathcal{Z}}\subseteq\mathcal{Z}$. This involves defining measures for each stack through the compositions of feature extractions in $\Psi=\{\psi^m\}_{m=1}^M$, backcasting operators in $\Xi_\uparrow=\{\xi^m_\uparrow\}_{m=1}^M$ given in (\ref{eq:branch0}), and a normalization function.

\begin{dfn} \label{dfn:pushforward}
    Let $\sigma:\mathcal{Z}\to\widetilde{\mathcal{Z}}$ be a normalizing function satisfying $C_\sigma$-Lipschitz continuity, i.e., $\lVert\sigma(z)-\sigma(z')\Vert \leq C_\sigma \lVert z-z'\rVert$ for $z,z'\in\mathcal{Z}$.
    Given $\psi^m:\mathcal{X}\to\mathcal{Z}$ defined in (\ref{eq:branch0}), the operators $r^m:\mathcal{X}\to\mathcal{X}$ and $g^m:\mathcal{X}\to\mathcal{Z}$ are defined as:
    \begin{align}
        &r^m(x):=x-({\xi}^m_\uparrow\circ\psi^m)(x), \label{eq:ft_R}\\
        &g^m(x):=(\psi^m \circ (r^m)^{(L-1)}\circ (r^{m-1})^{(L)}\circ\cdots \circ (r^{1})^{(L)})(x),\label{eq:induce_ft}
    \end{align}
    where $(r^m)^{(L)}$ denotes $L$-times composition of $r^m$, with $(r^m)^{(L-1)}(x):=x$ for $L-1=0$ and $g^m=(\psi^m \circ (r^m)^{(L-1)})$ for $m=1$.
    Then the set of marginal feature measures in the $m$-th stack, $m=1,\cdots,M$, is defined by
    \[
	\{(\sigma \circ g^m)_\#\mathbb{P}^k_{\mathcal{X}}\}_{k=1}^K,
    \]
    where each $(\sigma \circ g^m)_\#\mathbb{P}^k_{\mathcal{X}}$ is a pushforward of $\mathbb{P}_{\mathcal{X}}^k\in\{\mathbb{P}_{\mathcal{X}}^k\}_{k=1}^K$ induced by $\sigma\circ g^m:\mathcal{X}\to\widetilde{\mathcal{Z}}$.
\end{dfn}

\begin{rem} \label{rem:scale}
    The normalization function $\sigma$ helps the model to learn invariant features by mitigating the influence of the scale information of each domain.
    Furthermore, the Lipschitz condition on $\sigma$ prevents gradient explosion during model updates.
    There are two representatives for $\sigma$: (i) $\operatorname{softmax}:\mathcal{Z}\to\widetilde{\mathcal{Z}}=(0,1)^\gamma$ where $\operatorname{softmax}(z)_j= {e^{z_j}}/{\sum_{i=1}^\gamma e^{z_i}}$, $j=1,...,\gamma$; (ii)~$\tanh$: $\mathcal{Z}\to\widetilde{\mathcal{Z}}=(-1,1)^\gamma$ where $\tanh(z)_j=({e^{2z_j}-1})/(e^{2z_j}+1)$, $j=1,...,\gamma$.
    Both are $1$-Lipschitz continuous, i.e., $C_\sigma =1$.
    In Appendix \ref{apen:ablation} (see Table \ref{apen:table:normalizing}), we provide the ablation study under these functions, in addition to the case without the normalization.
\end{rem}

\begin{rem} \label{rem:blockwise}
    Embedding feature alignment `block-wise' for every stack results in recurrent operations within each stack and redundant gradient flows.
    This redundancy can cause exploding or vanishing gradients for long-term forecasting \cite{pascanu2013difficulty}.
    Our stack-wise feature alignment addresses these problems by sparsely propagating the loss.
    It also maintains ample alignment coverage related to semantics since the stack serves as a semantic extraction unit in \cite{oreshkin2019n}.
    Further heuristic demonstration is provided in Appendix \ref{apen:blockwise}.
\end{rem}

The operator $g^m$ in (\ref{eq:induce_ft}) accumulates features up to the $m$-th stack accounting for the previous $m-1$ residual operations.
Despite the complex composition of $\Psi$ and $\Xi{\uparrow}$, the fully connected layers in them exhibit Lipschitz continuity \cite[Section 6]{virmaux2018lipschitz}, which ensures the Lipschitz continuity of $g^m$. From this observation and Remark \ref{rem:scale}, we state the lemma below, with its proof in Appendix \ref{apen:sinkhorn_div}:
\begin{lem} \label{lem:lipschitz}
    Let $C_\sigma>0$ be given in Definition \ref{dfn:pushforward}. Denote for $m=1,\cdots,M$ by $C_{m}>0$ and $C_{m,\uparrow}>0$ the Lipschitz constants of $\psi^m$ and $\xi^m_\uparrow$, respectively.
    Then $(\sigma\circ g^m)$ is $C_{\sigma\circ g^m}$-Lipschitz continuous with 
    \begin{align*}
        &C_{\sigma\circ g^m}=C_\sigma C_m(1+C_mC_{m,\uparrow})^{L-1}\Pi_{n=1}^{m-1}(1+C_{n}C_{n,\uparrow})^{L},\quad\mbox{for} \;\;m=2,\cdots,M,
    \end{align*}
    and $C_{\sigma\circ g^m}=C_\sigma C_m(1+C_mC_{m,\uparrow})^{L-1}$ for $m=1$.
\end{lem}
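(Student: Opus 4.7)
The plan is to reduce everything to the standard composition rule for Lipschitz maps: if $f$ is $A$-Lipschitz and $g$ is $B$-Lipschitz, then $f\circ g$ is $AB$-Lipschitz. Since $g^m$ in (\ref{eq:induce_ft}) is a composition of powers of the residual operators $r^n$ together with the final feature extractor $\psi^m$, the whole argument amounts to computing the Lipschitz constant of $r^n$ and then multiplying constants in the right order.

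\textbf{Step 1: Lipschitz constant of $r^m$.} Write $r^m(x)=x-(\xi^m_\uparrow\circ\psi^m)(x)$ and apply the triangle inequality together with the hypothesized Lipschitz bounds on $\psi^m$ and $\xi^m_\uparrow$:
\begin{equation*}
\lVert r^m(x)-r^m(x')\rVert \leq \lVert x-x'\rVert + C_{m,\uparrow}C_m\lVert x-x'\rVert = (1+C_m C_{m,\uparrow})\lVert x-x'\rVert.
\end{equation*}
Iterating gives that $(r^m)^{(L)}$ is $(1+C_mC_{m,\uparrow})^L$-Lipschitz, with the degenerate case $(r^m)^{(L-1)}=\mathrm{id}$ when $L=1$ corresponding to the exponent $0$.

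\textbf{Step 2: Chaining through $g^m$.} For $m\geq 2$, use the definition
\begin{equation*}
g^m = \psi^m\circ (r^m)^{(L-1)}\circ (r^{m-1})^{(L)}\circ\cdots\circ (r^1)^{(L)}
\end{equation*}
and apply the composition rule: the Lipschitz constant of $g^m$ is at most
\begin{equation*}
C_m\cdot (1+C_mC_{m,\uparrow})^{L-1}\cdot \prod_{n=1}^{m-1}(1+C_nC_{n,\uparrow})^L.
\end{equation*}
The case $m=1$ is the simpler truncation $g^1=\psi^1\circ (r^1)^{(L-1)}$, yielding constant $C_1(1+C_1C_{1,\uparrow})^{L-1}$.

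\textbf{Step 3: Applying $\sigma$.} Since $\sigma$ is $C_\sigma$-Lipschitz by Definition \ref{dfn:pushforward}, one more application of the composition rule multiplies the constant by $C_\sigma$, yielding the stated $C_{\sigma\circ g^m}$. One then invokes \cite[Section 6]{virmaux2018lipschitz} to justify that the fully connected layers constituting $\psi^m$ and $\xi^m_\uparrow$ are indeed Lipschitz, so the constants $C_m$ and $C_{m,\uparrow}$ exist and the argument is not vacuous.

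There is no real obstacle here; the work is entirely bookkeeping of exponents. The only care point is the boundary case $L=1$ (interpreting $(r^m)^{(0)}$ as the identity so that the exponent $L-1=0$ is well defined) and separating the $m=1$ and $m\geq 2$ formulas cleanly, since the outermost stack contributes the $(L-1)$ exponent while all preceding stacks contribute the full exponent $L$.
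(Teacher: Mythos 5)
Your proposal is correct and follows essentially the same route as the paper's proof: bound $r^m$ by $(1+C_mC_{m,\uparrow})$ via the triangle inequality, iterate to handle the powers $(r^m)^{(L-1)}$ and $(r^n)^{(L)}$, and chain the Lipschitz constants through the composition with $\psi^m$ and $\sigma$. The only cosmetic difference is that the paper peels off $\sigma$ and $\psi^m$ first before treating the residual chain, whereas you build up from the inside out; the bookkeeping is identical.
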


By the doubly residual principle, $\{g^m\}_{m=1}^M$ are inseparable for $\Psi$ and $\Xi_\uparrow$.
However, the stack-wise alignment via regularizing $\{g^m\}_{m=1}^M$ potentially deteriorates the backcasting power of $\Xi_\uparrow$, which could lead to performance degradation of the model.
Instead, we conduct the alignment by regularizing exclusively on feature extractors $\Psi$.
More precisely, this alignment of marginal feature measures from Definition \ref{dfn:pushforward} is defined as follows: given $\Xi_\uparrow=\{\xi^m_\uparrow\}_{m=1}^M$,
\begin{equation} \label{eq:regularizer}
    \inf_{\Psi}\left\{\sum_{m=1}^M \max_{i,j\in\{1,\cdots,K\},~i\neq j}d\left((\sigma\circ g^m)_\#\mathbb{P}^i_{\mathcal{X}}, (\sigma\circ g^m)_\#\mathbb{P}^j_{\mathcal{X}}\right)\right\},
\end{equation}
where $d(\cdot,\cdot):\mathcal{P}(\widetilde{\mathcal{Z}})\times\mathcal{P}(\widetilde{\mathcal{Z}})\to\mathbb{R}_+$ is a divergence or distance between given measures.
The illustration of the stack-wise alignment is provided in Figure \ref{apen:fig:method} (in Appendix \ref{apen:margin_measure}).

Note that the third term in Proposition \ref{pro:error_analy}, i.e., $\max_{i,j \in \{1,\dots,K\},\;i\neq j}d_{{\widetilde{\cal H}}}(\mathbb{P}_{{\cal X}}^i,\mathbb{P}^j_{{\cal X}})$, and the stack-wise alignment in (\ref{eq:regularizer}) are perfectly matched once $d(\cdot,\cdot)$ is specified as the $\mathcal{H}$-divergence.
However, the empirical estimation for the ${\cal H}$-divergence is notoriously difficult \cite{ben2010theory,ben2006analysis,li2018deep,shui2022novel}.
These concerns become even more pronounced in the proposed method due to the stack-wise alignment necessitating $M{K(K-1)}/{2}$-times calculation of pairwise divergence, implying heavy computational load.
Meanwhile, a substantial body of literature regarding the domain invariant feature learning adopts other alternatives for the ${\cal H}$-divergence, and among them \cite{le2019deep,lee2019sliced,shen2018wasserstein,zhou2021domain}, optimal transport distances have been dominant due to their in-depth theoretical ground. In line with this, in the following section, we introduce an optimal transport distance as a relevant choice for $d(\cdot,\cdot)$. 

\subsection{Sinkhorn Divergence on Measures} \label{sec:sinkhorn}
In the adversarial framework \cite{genevay2018learning, shen2018wasserstein, xu2020cot,zhou2021domain}, optimal transport distances have been adopted for training generators to make corresponding pushforward measures close to a given target measure. In particular, the Sinkhorn divergence, an approximate of an entropic regularized optimal transport distance, is shown to be an efficient method to address intensive calculations of divergence between empirical measures \cite{di2020optimal,feydy2019interpolating,genevay2018learning}. As the stack-wise alignment given in (\ref{eq:regularizer}) leverages on a number of calculations of divergences and hence requires an efficient and accurate toolkit for feasible training, we adopt the Sinkhorn divergence as the relevant one for $d(\cdot,\cdot)$.

To define the Sinkhorn divergence, let us introduce the regularized quadratic Wasserstein-2 distance. To that end, let $\epsilon$ be the entropic regularization degree and $\Pi(\mu,\nu;{\cal \widetilde{{\cal Z}}})$ is the space of all couplings, i.e., transportation plans, the marginals of which are respectively $\mu,\nu\in{\cal P}(\widetilde{{\cal Z}})$. Then the regularized quadratic Wasserstein-2 distance defined on $\widetilde{{\cal Z}}$ is defined as follows: for $\epsilon \geq 0$,
\begin{equation} \label{eq:wasser}
    {\cal W}_{\epsilon,\widetilde{\cal Z}}(\mu,\nu) := \inf_{\pi \in \Pi(\mu,\nu;{\cal \widetilde{{\cal Z}}})}\left\{ \int_{\widetilde{\cal Z}\times \widetilde{\cal Z}} \left(\lVert x-y\rVert^2 + \epsilon \log\left(\frac{d\pi(x,y)}{d\mu(x)d \nu(y)}\right)\right)d\pi(x,y)\right\}.
\end{equation}
By replacing $\widetilde{{\cal Z}}$ with ${\cal X}$, one can define by ${\cal W}_{\epsilon,{\cal X}}(\cdot,\cdot)$ the corresponding regularized distance on ${\cal X}$.

The entropic term attached with $\epsilon$ in (\ref{eq:wasser}) is known to improve computational stability of the Wasserstein-2 distance, whereas it causes a bias on corresponding estimator.
To alleviate this, according to \cite{chizat2020faster}, we adopt the following debiased version of the regularized distance:
\begin{dfn}\label{dfn:sinkhorn}
    For $\epsilon\geq 0$, the Sinkhorn divergence is
    \begin{equation} \label{eq:sinkhorn}
        \widehat{\mathcal{W}}_{\epsilon,\widetilde{\mathcal{Z}}}(\mu,\nu):=\mathcal{W}_{\epsilon,\widetilde{\mathcal{Z}}}(\mu,\nu)-\frac{1}{2}\left(\mathcal{W}_{\epsilon,\widetilde{\mathcal{Z}}}(\nu,\nu)+\mathcal{W}_{\epsilon,\widetilde{\mathcal{Z}}}(\mu,\mu)\right),\quad\mu,\nu\in\mathcal{P}(\widetilde{\mathcal{Z}}).
    \end{equation}
\end{dfn}

Using the duality of the regularized optimal transport distance from \cite[Remark 4.18 in Section 4.4]{peyre2019computational} and the Lipschitz continuity of $\{\sigma\circ g^m\}_{m=1}^M$ from Lemma \ref{lem:lipschitz}, we present the following theorem, substantiating the well-definedness and feasibility of our stack-wise alignment via $\widehat{\mathcal{W}}_{\epsilon,\widetilde{\mathcal{Z}}}(\cdot,\cdot)$.
The proof is provided in Appendix \ref{apen:sinkhorn_div}.

\begin{thm} \label{thm:sinkhorn_div}
    Let $C_{\sigma\circ g^m}>0$ be as in Lemma \ref{lem:lipschitz} and define $C:=\sum_{m=1}^M\max\{(C_{\sigma\circ g^m})^2,1\}$.
    Then the following holds: for $\epsilon\geq0$, 
    \[
        \sum_{m=1}^M\max_{i,j\in\{1,\cdots,K\},~i\neq j}\widehat{\mathcal{W}}_{\epsilon,\widetilde{\mathcal{Z}}}\left((\sigma\circ g^m)_\#\mathbb{P}^i_{\mathcal{X}},(\sigma\circ g^m)_\#\mathbb{P}^j_{\mathcal{X}}\right)\leq C\max_{i,j\in\{1,\cdots,K\},~i\neq j}\mathcal{W}_{\epsilon,\mathcal{X}}\left(\mathbb{P}^i_{\mathcal{X}},\mathbb{P}^j_{\mathcal{X}}\right).
    \]
\end{thm}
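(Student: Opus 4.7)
The plan is to reduce the bound to a per-stack contraction estimate between the Sinkhorn divergence of pushforwards and the regularized Wasserstein distance on $\mathcal{X}$, and then to assemble the stack-wise estimates by taking maxima and summing.

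First I would establish the auxiliary inequality $\widehat{\mathcal{W}}_{\epsilon,\widetilde{\mathcal{Z}}}(\mu,\nu) \le \mathcal{W}_{\epsilon,\widetilde{\mathcal{Z}}}(\mu,\nu)$ for every $\mu,\nu \in \mathcal{P}(\widetilde{\mathcal{Z}})$. Inspecting the primal form (\ref{eq:wasser}), the squared-cost term is non-negative and the entropic contribution equals $\epsilon\,\mathrm{KL}(\pi \,\|\, \mu \otimes \nu)\ge 0$, so $\mathcal{W}_{\epsilon,\widetilde{\mathcal{Z}}}(\mu,\mu)$ and $\mathcal{W}_{\epsilon,\widetilde{\mathcal{Z}}}(\nu,\nu)$ are non-negative and the debiasing subtraction in Definition \ref{dfn:sinkhorn} can only decrease the value.

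Second, I would prove the key pushforward contraction: for any $C$-Lipschitz map $\phi:\mathcal{X}\to\widetilde{\mathcal{Z}}$ and any $\mu',\nu'\in\mathcal{P}(\mathcal{X})$,
\[
\mathcal{W}_{\epsilon,\widetilde{\mathcal{Z}}}(\phi_\#\mu',\phi_\#\nu')\;\le\;\max\{C^2,1\}\,\mathcal{W}_{\epsilon,\mathcal{X}}(\mu',\nu').
\]
Given any $\pi \in \Pi(\mu',\nu';\mathcal{X})$, the tensorized pushforward $(\phi\otimes\phi)_\#\pi$ lies in $\Pi(\phi_\#\mu',\phi_\#\nu';\widetilde{\mathcal{Z}})$ and is thus admissible. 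Lipschitz continuity gives the cost bound $\int \lVert x'-y'\rVert^2\,d[(\phi\otimes\phi)_\#\pi] \le C^2 \int \lVert x-y\rVert^2\,d\pi$. Using the identity $(\phi\otimes\phi)_\#(\mu'\otimes\nu') = \phi_\#\mu'\otimes\phi_\#\nu'$ together with the data-processing inequality for KL divergences yields $\mathrm{KL}((\phi\otimes\phi)_\#\pi\,\|\,\phi_\#\mu'\otimes\phi_\#\nu') \le \mathrm{KL}(\pi\,\|\,\mu'\otimes\nu')$. Adding these two bounds and case-splitting on $C \ge 1$ (in which case $C^2\cdot\mathrm{KL} \ge \mathrm{KL}$ lets one pull $C^2$ out of both summands) versus $C<1$ (in which case $C^2 \le 1$ lets one replace the cost factor by $1$) delivers the $\max\{C^2,1\}$ constant after taking the infimum over $\pi$.

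Third, I would instantiate the contraction at each $\phi = \sigma\circ g^m$ with Lipschitz constant $C_{\sigma\circ g^m}$ supplied by Lemma \ref{lem:lipschitz}, chain with the bound from Step 1, and pick $\mu'=\mathbb{P}^i_\mathcal{X}$, $\nu'=\mathbb{P}^j_\mathcal{X}$ to obtain
\[
\widehat{\mathcal{W}}_{\epsilon,\widetilde{\mathcal{Z}}}\bigl((\sigma\circ g^m)_\#\mathbb{P}^i_\mathcal{X},(\sigma\circ g^m)_\#\mathbb{P}^j_\mathcal{X}\bigr) \;\le\; \max\{(C_{\sigma\circ g^m})^2,1\}\,\mathcal{W}_{\epsilon,\mathcal{X}}(\mathbb{P}^i_\mathcal{X},\mathbb{P}^j_\mathcal{X}).
\]
Taking the maximum over $i\neq j$ on both sides (the right-hand prefactor is independent of $i,j$) and summing over $m=1,\dots,M$ pulls $\max_{i\neq j}\mathcal{W}_{\epsilon,\mathcal{X}}(\mathbb{P}^i_\mathcal{X},\mathbb{P}^j_\mathcal{X})$ out as a common factor, leaving precisely the stated constant $C=\sum_{m=1}^M\max\{(C_{\sigma\circ g^m})^2,1\}$.

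The main obstacle is clean handling of the entropic term under a potentially non-injective Lipschitz pushforward: a direct density-based manipulation would require an absolute-continuity chain rule that can fail when $\phi$ collapses mass, but the data-processing inequality combined with the elementary product-pushforward identity sidesteps this entirely. The dual route suggested by the paper's reference to Peyre-Cuturi Remark 4.18 is also viable, but it requires restricting admissible dual potentials to functions that factor through $\phi$ and controlling the resulting constrained supremum by $\max\{C^2,1\}$ times the unrestricted one, which is notably more delicate than the primal argument above.
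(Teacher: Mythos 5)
Your proposal is correct, and it reaches the per-stack contraction
\[
\widehat{\mathcal{W}}_{\epsilon,\widetilde{\mathcal{Z}}}\bigl((\sigma\circ g^m)_\#\mathbb{P}^i_{\mathcal{X}},(\sigma\circ g^m)_\#\mathbb{P}^j_{\mathcal{X}}\bigr)\leq \max\{(C_{\sigma\circ g^m})^2,1\}\,\mathcal{W}_{\epsilon,\mathcal{X}}(\mathbb{P}^i_{\mathcal{X}},\mathbb{P}^j_{\mathcal{X}})
\]
by a genuinely different route from the paper. The paper works entirely on the dual side: it invokes the entropic dual representation of \cite[Remark 4.18]{peyre2019computational}, rewrites the divergence on $\widetilde{\mathcal{Z}}$ as a supremum over potentials in the class $\mathcal{C}(\mathcal{X};\sigma\circ g^m)$ of functions factoring through the feature map, compares it with an auxiliary problem $\widetilde{\mathcal{W}}_{\epsilon,\mathcal{X}}(\cdot,\cdot;\sigma\circ g^m)$ whose dual ranges over all of $\mathcal{C}(\mathcal{X})$, and then closes the gap you flagged as delicate by showing via the first-order (Sinkhorn fixed-point) optimality conditions that the optimal unrestricted potentials automatically factor through $\sigma\circ g^m$; the Lipschitz bound on the cost is applied only at the very end, in the primal form of the auxiliary problem. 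Your argument stays in the primal throughout: the admissible coupling $(\phi\otimes\phi)_\#\pi$, the cost bound from Lipschitz continuity, and the data-processing inequality $\mathrm{KL}((\phi\otimes\phi)_\#\pi\,\Vert\,\phi_\#\mu'\otimes\phi_\#\nu')\le \mathrm{KL}(\pi\,\Vert\,\mu'\otimes\nu')$ (using $(\phi\otimes\phi)_\#(\mu'\otimes\nu')=\phi_\#\mu'\otimes\phi_\#\nu'$) together give the bound after the elementary observation $C^2a+\epsilon b\le\max\{C^2,1\}(a+\epsilon b)$ for $a,b\ge 0$. Your route is more elementary and arguably more robust: it never needs duality, attainment of optimal potentials, or any injectivity of the feature map. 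What the paper's route buys in exchange is the exact identity $\mathcal{W}_{\epsilon,\widetilde{\mathcal{Z}}}((\sigma\circ g^m)_\#\mathbb{P}^i_{\mathcal{X}},(\sigma\circ g^m)_\#\mathbb{P}^j_{\mathcal{X}})=\widetilde{\mathcal{W}}_{\epsilon,\mathcal{X}}(\mathbb{P}^i,\mathbb{P}^j;\sigma\circ g^m)$, i.e.\ that the pushforward divergence coincides with the entropic transport problem on $\mathcal{X}$ under the pulled-back cost, which is more information than the inequality actually used in the theorem. The initial step ($\widehat{\mathcal{W}}_{\epsilon,\widetilde{\mathcal{Z}}}\le\mathcal{W}_{\epsilon,\widetilde{\mathcal{Z}}}$ by nonnegativity of the debiasing terms) and the final aggregation over $i\neq j$ and $m$ are identical in both arguments.
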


In \cite[Lemma 3 \& Proposition 6]{oneto2020exploiting}, representation learning bounds under the maximum mean discrepancy and the regularized distance in (\ref{eq:wasser}) are investigated for a single-layered fully connected network.
With similar motivation, Theorem \ref{thm:sinkhorn_div} represents a learning bound for the stack-wise alignment loss under the Sinkhorn divergence as the entropic regularized distance between source domains' measures.
While the Lipschitz continuity of $\{\sigma\circ g^m\}_{m=1}^M$ allows a nice bound, there exists room for having a tighter bound by deriving the smallest Lipschitz constant \cite{virmaux2018lipschitz} and applying the spectral normalization \cite{miyato2018spectral}, which will be left for the future extension. Further discussions on the choice of the Sinkhorn divergence and on Theorem \ref{thm:sinkhorn_div} are provided in Appendix \ref{apen:thm:sinkhorn_div}.

\subsection{Training Objective and Algorithm} \label{sec:training}
From Sections \ref{sec:margin_measure} and \ref{sec:sinkhorn}, we define the training objective and corresponding algorithm. To that end, denote by $\Phi:=\{\phi_{m}\}_{m=1}^M$, $\Theta_\downarrow:=\{\theta_{m,\downarrow}\}_{m=1}^M$, and $\Theta_\uparrow:=\{\theta_{m,\uparrow}\}_{m=1}^M$ the parameters sets of the fully connected neural networks in the residual operators in $\Psi$, $\Xi_\downarrow$, and $\Xi_\uparrow$ given in (\ref{eq:operators}). Then corresponding parameterized forms of the operators are given by
\[
    \Psi(\Phi) = \{\psi^m(\cdot;\phi_{m})\}_{m=1}^M,\quad \Xi_\downarrow(\Theta_\downarrow)=\{\xi^m_\downarrow(\cdot; \theta_{m,\downarrow})\}_{m=1}^M,\quad \Xi_\uparrow(\Theta_\uparrow)=\{\xi^m_\uparrow(\cdot; \theta_{m,\uparrow})\}_{m=1}^M.
\]

Then denote by $g^m_{\Phi,\Theta_{\uparrow}}:= g^m(\cdot;\{\phi_n\}_{n=1}^m,\{\theta_{n,\uparrow}\}_{n=1}^m)$, $m=1,\dots,M$, the parameterized version of $g^m$ given in (\ref{eq:induce_ft}). Let ${\cal L}(\mathfrak{F}(\cdot, \cdot, \cdot))$ be the parameterized form of the forecasting loss given in (\ref{eq:NBEATS}) and ${\cal L}_{\operatorname{align}}(\cdot,\cdot)$ be that of the alignment loss given in (\ref{eq:regularizer}) under the Sinkhorn divergence, i.e.,
\begin{equation}\label{eq:Entro_NBEATS_reg}
    {\cal L}_{\operatorname{align}}(\Phi,\Theta_\uparrow):= \sum_{m=1}^{{M}} \max_{i,j\in \{1,\dots,K\},\;i\neq j} \widehat{{\cal W}}_{\epsilon,\widetilde{\cal Z}}\big((\sigma \circ g^m_{\Phi,\Theta_{\uparrow}})_\#\mathbb{P}_{{\cal X}}^i,(\sigma \circ g^m_{\Phi,\Theta_{\uparrow}})_\#\mathbb{P}_{{\cal X}}^j\big).
\end{equation}

\begin{wrapfigure}{r}{0.3\textwidth}
    \vspace{-\baselineskip}
    \hspace{-.5\baselineskip}
    \includegraphics[width=0.3\textwidth]{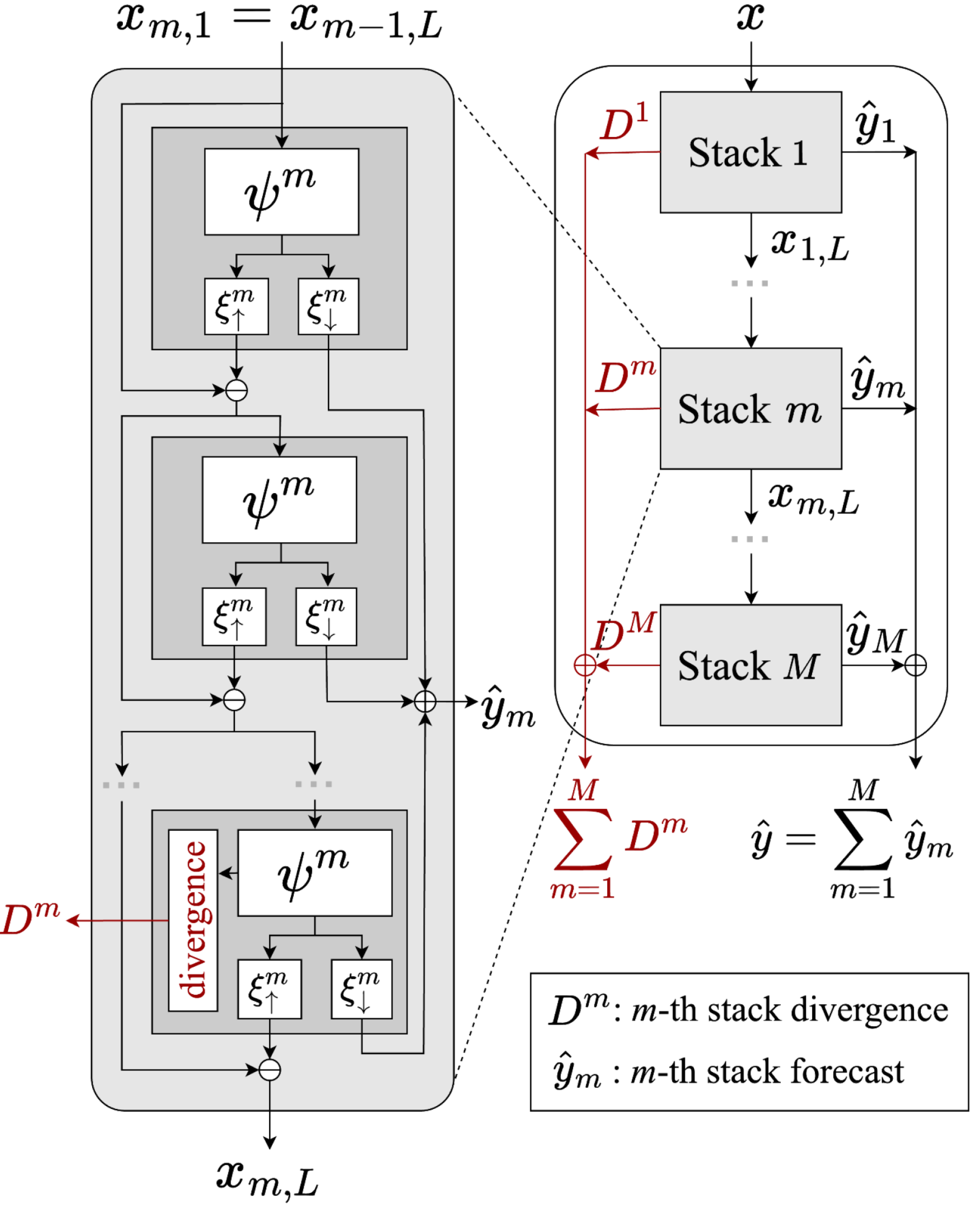}
    \caption{Illustration of Feature-aligned N-BEATS.} \label{fig:n-beats}
    \vspace{-5\baselineskip}
\end{wrapfigure}

We then provide the following training objective 
\begin{equation} \label{eq:Entro_NBEATS}
    {\bf L}_\lambda(\Phi, \Theta_\downarrow, \Theta_\uparrow):= {\cal L}(\mathfrak{F}(\Phi, \Theta_\downarrow, \Theta_\uparrow)) + \lambda {\cal L}_{\operatorname{align}}(\Phi, \Theta_\uparrow).
\end{equation}

To update $(\Phi, \Theta_\downarrow, \Theta_\uparrow)$ according to (\ref{eq:Entro_NBEATS}), we calculate $m$-th stack divergence $\widehat{{\cal W}}_{\epsilon,\widetilde{\cal Z}}((\sigma \circ g^m_{\Phi,\Theta_{\uparrow}})_\#\mathbb{P}_{{\cal X}}^i,(\sigma \circ g^m_{\Phi,\Theta_{\uparrow}})_\#\mathbb{P}_{{\cal X}}^j)$ as its empirical counterpart $\widehat{{\cal W}}_{\epsilon,\widetilde{\cal Z}}(\mu_{\Phi,\Theta_{\uparrow}}^{m,(i)},\mu_{\Phi,\Theta_{\uparrow}}^{m,(j)})$, where the corresponding empirical measures $\{\mu_{\Phi,\Theta_{\uparrow}}^{m,(k)}\}_{k=1}^K$ are given as follow: for $k=1,\cdots,K,$
\[
    \mu_{\Phi,\Theta_{\uparrow}}^{m,(k)}:=\frac{1}{B}\sum_{b=1}^{B}\delta_{\widetilde{z}_b^{(k)}},\quad\text{with}\;\;\widetilde{z}_b^{(k)}:= \sigma \circ g^m_{\Phi,\Theta_{\uparrow}}(x_b^{(k)}),
\]
where $\{(x_b^{(k)},y_b^{(k)})\}_{b=1}^B$ are sampled from ${\cal D}^{k}$, and $B$ and $\delta_{z}$ denote a mini-batch size and the Dirac measure centered on $z\in\widetilde{{\cal Z}}$, respectively.

As mentioned in Section \ref{sec:margin_measure}, the alignment loss ${\cal L}_{\operatorname{align}}(\Phi,\Theta_\uparrow)$ is minimized by updating $\Phi$ for given $\Theta_\uparrow$, while $\{g^m_{\Phi,\Theta_\uparrow}\}_{m=1}^m$ are inseparable for $\Phi$ and $\Theta_\uparrow$. At the same time, the forecasting loss ${\cal L}(\mathfrak{F}(\Phi, \Theta_\downarrow, \Theta_\uparrow))$ is minimized by updating $(\Phi, \Theta_\downarrow, \Theta_\uparrow)$. To bring them together, we adopt the following alternatively updating optimization inspired from \cite[Section 3.1]{ganin2015unsupervised}:

\begin{align}\label{eq:optimization}
    \begin{aligned}
    \Theta_\downarrow^*, \Theta_\uparrow^* := \underset{\Theta_\downarrow, \Theta_\uparrow}{\argmin}\; {\cal L}(\mathfrak{F}(\Phi^*, \Theta_\downarrow, \Theta_\uparrow)),\quad 
    \Phi^* := \underset{\Phi}{\argmin}\; {\bf L}_\lambda(\Phi, \Theta_\downarrow^*, \Theta_\uparrow^*).
    \end{aligned}
\end{align}
The training procedure on (\ref{eq:optimization}) is summarized in Algorithm \ref{alg:ours} and the overall model architecture is illustrated in Figure \ref{fig:n-beats}, where we highlight the stack-wise alignment process (with `red' color) not appearing in the original N-BEATS (see Figure 1 in \cite{oreshkin2019n}).

\vspace{-.5\baselineskip}
\begin{algorithm}[!h]
    \caption{Training Feature-aligned N-BEATS.}\label{alg:ours}
    {\footnotesize
        \DontPrintSemicolon
        \SetAlgoLined
        \SetKwInOut{Input}{Requires}
        \Input{$\eta$ (learning rate), $B$ (mini-batch size); Initialize $\Phi$, $\Theta_\downarrow$, $\Theta_\uparrow$;}
        \While{not converged}{
            Sample $\{(x_b^{(k)}, y_b^{(k)})\}_{b=1}^B$ from ${\cal D}^k$ $\&$ Initialize $\{\hat{y}_b^{(k)}\}_{b=1}^B\gets 0,~{k=1,\dots,K}$; \\
            \For{$m=1$ \KwTo $M$}{ 
                \For{$k=1$ \KwTo $K$}{ 
                    Compute $\{g^m_{\Phi,\Theta_{\uparrow}}(x_b^{(k)})\}_{b=1}^B$;
                    Update $\hat{y}_b^{(k)}\gets \hat{y}_b^{(k)} + \xi^m_\downarrow(g^m_{\Phi,\Theta_{\uparrow}}(x_b^{(k)});\theta_{m,\downarrow}),~{b=1,\dots,B}$;
                }
            }
            Compute $\{\mu_{\Phi,\Theta_{\uparrow}}^{m,(k)}\}_{m=1}^M,~{k=1,\dots,K}$;
            Update $\Phi$ such that for ${m=1,\dots,M}$,\\
            $\phi_{m} \gets \phi_{m} + \eta\nabla_{\phi_{m}} \Big(\lambda\sum\limits_{n=1}^{{M}} \max\limits_{i,j\in \{1,\cdots,K\},\;i\neq j} \widehat{{\cal W}}_{\epsilon,\widetilde{\cal Z}}\big(\mu_{\Phi,\Theta_{\uparrow}}^{n,(i)},\mu_{\Phi,\Theta_{\uparrow}}^{n,(j)}\big)\Big),$;\\
            Update $(\Phi, \Theta_\downarrow, \Theta_\uparrow)$ such that for ${m=1,\dots,M}$,\\
            $(\phi_{m},\theta_{m,\downarrow},\theta_{m,\uparrow}) \gets (\phi_{m},\theta_{m,\downarrow},\theta_{m,\uparrow}) + \eta \frac{1}{K \cdot B} \sum\limits_{k=1}^{K} \sum\limits_{b=1}^{B} \nabla_{(\phi_{m},\theta_{m,\downarrow},\theta_{m,\uparrow})} l\big(\hat{y}_b^{(k)},y_b^{(k)} \big)$; 
        }
    }
\end{algorithm}
\vspace{-\baselineskip}

\section{Experiments} \label{sec:exp}
{\bf Evaluation details.}
Our evaluation protocol lies on two principles: (i) real-world scenarios and (ii) examination of various domain shift environments between the source and target domains. For (i), we use financial data from the Federal Reserve Economic Data (FRED)\footnote{\url{https://fred.stlouisfed.org}} and weather data from the National Centers for Environmental Information (NCEI)\footnote{\url{https://ncei.noaa.gov}}. For (ii), let us define a set of semantically similar domains as \textit{superdomain} denoted by ${\cal A}_i$, e.g., $i=\mbox{FRED, NCEI}$. We then categorize the domain shift scenarios into \textit{out-domain generalization} (ODG), \textit{cross-domain generalization} (CDG), and \textit{in-domain generalization} (IDG) such that
\begin{itemize}[leftmargin=*]
    \vspace{-.3\baselineskip}
    \begin{footnotesize}
        \item [$\cdot$] ODG: $\{\mathcal{D}^k\}_{k=1}^K\subseteq \mathcal{A}_{i}$ $\xrightarrow[]{\text{Shift}\;(i \neq j)}$ $\mathcal{D}^T \in\mathcal{A}_j$;\
        \item [$\cdot$] CDG: $\{\mathcal{D}^k\}_{k=1}^{p -1}\subseteq \mathcal{A}_i$, $\{\mathcal{D}^k\}_{k=p}^{K}\subseteq \mathcal{A}_j$ ($2\leq p \leq K$) $\xrightarrow[]{\text{Shift}\;(i \neq j)}$  $\mathcal{D}^T \in\mathcal{A}_i$ s.t. $\{\mathcal{D}^k\}_{k=1}^{p -1}\cap \mathcal{D}^T = \emptyset$;
        \item [$\cdot$] IDG: $\{\mathcal{D}^k\}_{k=1}^K\subseteq \mathcal{A}_i$ $\xrightarrow[]{\text{Shift}\;(i = j)}$ $\mathcal{D}^T \in\mathcal{A}_i$ s.t. $\{\mathcal{D}^k\}_{k=1}^{K}\cap \mathcal{D}^T = \emptyset$.
    \end{footnotesize}
\end{itemize}
The domain shift from source to target becomes increasingly pronounced in the sequence of IDG, CDG, and ODG, making it even more challenging to generalize. For detailed data configuration and domain specifications, refer to Appendix \ref{apen:exp_details}.

{\bf Benchmarks.} We compare our proposed approach with deep learning-based models, including transformer (e.g., Informer \cite{zhou2021informer}, Autoformer \cite{wu2021autoformer}), MLP-based models (e.g., LTSF-Linear models \cite{zeng2023transformers} with NLinear and Dlinear) and N-BEATS based models (e.g.,  N-BEATS \cite{oreshkin2019n} and N-HiTS \cite{challu2023nhits}).
Note that since the aforementioned time series models addressing domain shift \cite{hu2020datsing,jin2022domain} still requires target domain data (due to their `domain-{adapted}' framework), we do not consider their models into our domain-generalized protocol.  

{\bf Experimental details.} We adopt the symmetric mean absolute percentage error ($\smape$) for ${\cal L}(\mathfrak{F}(\cdot,\cdot,\cdot))$ given in (\ref{eq:Entro_NBEATS}) and use the softmax function for $\sigma$ given in Definition \ref{dfn:pushforward}.
The Sinkhorn divergence implemented by \texttt{GeomLoss} from \cite{feydy2020geometric} is utilized, and $\epsilon$ is set to be 0.0025.
The Adam optimizer \cite{kingma2014adam} is employed for implementing the optimization given in (\ref{eq:optimization}).
The lookback horizon, forecast horizon, and the number of source domains are set to be $\alpha=50$, $\beta=10$, and $K=3$, respectively (noting that it depends on the characteristics of source domains' datasets).
Furthermore, the number of stacks and blocks, and the dimension of feature space are set to be $M=3$, $L=4$, and $\gamma=512$, respectively (noting that it is consistent with N-BEATS \cite{oreshkin2019n}).
Others are determined through grid search, and the $\smape$ and $\mase$ are adopted as evaluation metrics. Additional implementation details and definitions are provided in Appendix \ref{apen:exp_details}.

\begin{table*}[t]
    \caption{{Domain generalization performance.}
        The performance across all combinations of each ODG, CDG, and IDG scenario is provided (as the average of scenarios for each FRED and NCEI). The detailed description for N-BEATS-G and N-BEATS-I is provided in Appendix \ref{apen:margin_measure}. The notation `+ FA' stands for feature alignment. Each evaluation is conducted three times, with different random seeds. Values over 10,000 are labeled as `NA'. Runtime is measured for a single iteration.
    } \label{table:result}
    \vspace{.5\baselineskip}
    \resizebox{\textwidth}{!}{
        \begin{tabular}{cc|cgcgcgcccc}
            \toprule
            \multicolumn{2}{c|}{Methods} & N-HiTS & + FA (Ours) & N-BEATS-I & + FA (Ours) & N-BEATS-G & + FA (Ours) & NLinear & DLinear & Autoformer & Informer \\
            \midrule\midrule
            \multicolumn{12}{c}{ODG}\\
            \midrule
            \multirow{2}{*}{FRED}
            & $\smape$ & 0.148 & \textbf{0.134} & 0.232 & 0.214 & 0.172 & 0.150 & 0.176 & 0.307 & 0.570 & 1.214 \\
            & $\mase$ & 0.060 & \textbf{0.057} & 0.069 & 0.065 & 0.061 & 0.059 & 48.150  & 2,214.48 & NA & NA \\
            \multirow{2}{*}{NCEI}
            & $\smape$ & 0.723 & \textbf{0.713} & 0.814 & 0.724 & 0.722 & 0.718 & 1.112 & 1.302 & 1.293 & 1.630 \\
            & $\mase$ & 0.561 & \textbf{0.512} & 0.754 & 0.663 & 0.561 & 0.516 & 2.737 & 2.869 & 3.311 & 5.784 \\
            \midrule
            \multicolumn{12}{c}{CDG} \\
            \midrule
            \multirow{2}{*}{FRED}
            & $\smape$ & 0.124 & \textbf{0.123} & 0.181 & 0.179 & 0.139 & 0.133 & 0.176 & 0.536 & 0.893 & 1.143 \\
            & $\mase$ & 0.058 & \textbf{0.057} & 0.064 & 0.062 & 0.059 & 0.058 & 60.929 & 2,554.27 & NA & NA \\
            \multirow{2}{*}{NCEI}
            & $\smape$ & 0.742 & \textbf{0.718} & 0.731 & \textbf{0.718} & 0.763 & \textbf{0.718} & 1.096 & 1.086 & 1.273 & 1.437 \\
            & $\mase$ & 0.581 & \textbf{0.482} & 0.822 & 0.755 & 0.608 & 0.582 & 2.734 & 2.787 & 3.233 & 4.147 \\
            \midrule
            \multicolumn{12}{c}{IDG} \\
            \midrule
            \multirow{2}{*}{FRED}
            & $\smape$ & 0.119 & \textbf{0.115} & 0.137 & 0.136 & 0.143 & 0.119 & 0.197 & 0.843 & 1.001 & 0.843 \\
            & $\mase$ & 0.059 & \textbf{0.057} & 0.062 & 0.064 & 0.083 & 0.058 & 509.71 & 1,217.50 & NA & NA \\
            \multirow{2}{*}{NCEI}
            & $\smape$ & 0.718 & 0.715 & 0.713 & 0.715 & 0.726 & \textbf{0.714} & 0.997 & 0.772 & 1.268 & 1.505 \\
            & $\mase$ & 0.593 & \textbf{0.591} & 1.011 & 1.039 & 0.712 & \textbf{0.591} & 3.722 & 3.614 & 3.573 & 2.979 \\
            \midrule
            \multicolumn{2}{c|}{Runtime (sec)} & 0.26 & 0.80 & 0.32 & 0.97 & 0.16 & 0.68 & \textbf{0.04} & 0.05 & 0.58 & 0.50 \\
            \bottomrule
        \end{tabular}
    }
    \vspace{-\baselineskip}
\end{table*}

{\bf Domain generalization performance.} As shown in Table \ref{table:result}, the proposed stack-wise feature alignment significantly improves the domain shift issue within the deep residual stacking architectures with outstanding performance compared to other benchmarks. In particular, we highlight that the improvement is more significant in ODG where the domain shift from source to target is severely pronounced. That being said, the proposed domain-generalized model can perform and adapt well in a very severe situation without any information on the target environment. Other detailed analysis on the results are discussed in Appendix \ref{apen:exp_results}. 

\begin{wraptable}{r}{.5\textwidth}
    \vspace{-2\baselineskip}
    \caption{{Ablation study on divergences.}
    } \label{table:regularizer}
    \vspace{.5\baselineskip}
    \centering
    \resizebox{.5\textwidth}{!}{
        \begin{tabular}{cc|cgggcc}
            \toprule
            \multicolumn{2}{c|}{\multirow{2}{*}{Divergences}} & \multirow{2}{*}{WD} & \multicolumn{3}{g}{SD ($\epsilon>0$)} & \multirow{2}{*}{MMD} & \multirow{2}{*}{KL} \\
            \multicolumn{2}{c|}{} & & 1e-5 & 2.5e-3 & 1e-1 &  & \\
            \midrule\midrule
            \multirow{2}{*}{ODG}
            & $\smape$ & \textbf{0.031} & 0.040 & 0.032 & 0.033 & 0.035 & 0.045 \\
            & $\mase$ & \textbf{0.022} & 0.059 & \textbf{0.022} & \textbf{0.022} & 0.055 & 0.057 \\
            \midrule
            \multirow{2}{*}{CDG}
            & $\smape$ & 0.028 & \textbf{0.026} & 0.028 & 0.027 & 0.029 & 0.030 \\
            & $\mase$ & \textbf{0.039} & 0.058 & 0.040 & \textbf{0.039} & 0.041 & \textbf{0.039} \\
            \midrule
            \multirow{2}{*}{IDG}
            & $\smape$ & \textbf{0.024} & \textbf{0.024} & \textbf{0.024} & 0.025 & 0.025 & 0.026 \\
            & $\mase$ & \textbf{0.049} & 0.050 & 0.050 & 0.050 & 0.051 & \textbf{0.049} \\
            \midrule
            \multicolumn{2}{c|}{Runtime (sec)} & 314.30 & \multicolumn{3}{g}{0.68} & 0.81 & \textbf{0.53} \\
            \bottomrule
        \end{tabular}
    }
\end{wraptable}

\textbf{Ablation study on divergences.} Table \ref{table:regularizer} provides the ablation results on the choice of divergence (or distances) for the proposed stack-wise feature alignment, in which the benchmarks consist of the classic (not regularized) Wasserstein-2 distance (WD), the maximum mean discrepancy (MMD), and the Kullback–Leibler divergence (KL) and further sensitivity analysis on the Sinkhorn divergence (SD) with respect to $\epsilon>0$ is also provided. Due to the heavy running cost for implementing WD cases (see Runtime with 314.30 in Table \ref{table:regularizer}) and the training instability associated with KL cases (see Table \ref{apen:table:regularizer}), we consider the target domain case for `{exchange rate}' (within FRED) and the several source domain scenarios for ODG, CDG and IDG (see Appendix \ref{apen:exp_details} for the details on the source domains' combinations). For the same reasons, the baseline model is fixed to N-BEATS-G. The entire results are provided in Tables \ref{apen:table:regularizer} and \ref{apen:table:epsilon}.

As the Sinkhorn divergence is an accurate approximate of the Wasserstein-2 distance (see Definition \ref{dfn:sinkhorn}), the similar results for the two cases in Table \ref{table:regularizer} seem to be reasonable. On the other hand, their computational costs are incomparable. That being said, the Sinkhorn divergence is the computationally feasible and accurate toolkit for the proposed stack-wise alignment with optimal transport based divergence, while some instability issue (see \cite{bao2022sparse,genevay2018learning}) would come out for extremely small $\epsilon>0$ (i.e., $\epsilon=$1e-5 in Table \ref{table:regularizer}). In comparison with the MMD and KL cases (see Tables \ref{apen:table:regularizer} and \ref{apen:table:epsilon} as well for the entire results), the Sinkhorn divergence case seems to be marginally better but shows more stable and consistent results in overall domain shift scenarios. From these empirical evidences, we hence conclude that the choice of the Sinkhorn divergence allows the model to bring both the abundant theoretical advantages of optimal transport problems and the practical feasibility. 

{\bf Visualization on representation learning.}
To visualize representations, i.e., samples of marginal feature measures observed from N-BEATS-G with and without alignment, we use the uniform manifold approximation and projection (UMAP) introduced by \cite{mcinnes2018umap}. To minimize the effect of unaligned scale information, the softmax function is employed to remove the scale information and instead emphasize the semantic relationship across domains. As illustrated in Figure \ref{fig:umap}, we observe the proximity between instances and the substantial upsurge in the entropy of domains. For other cases on N-BEATS-I and N-HiTS, please refer to Figure \ref{apen:fig:umap_ap} in Appendix \ref{apen:visual}. 

\begin{figure}[t]
    \centering
    \includegraphics[width=\textwidth]{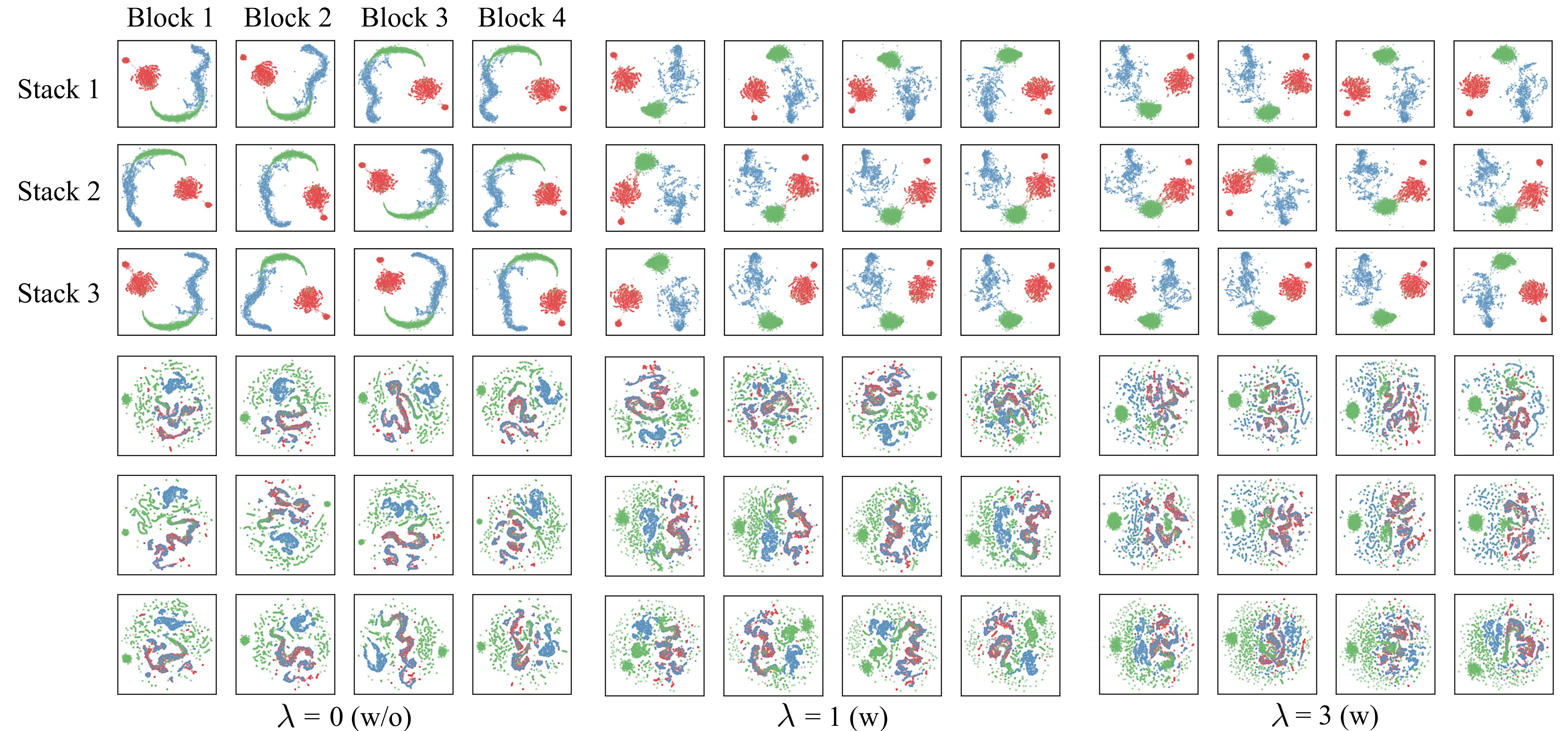}
    \caption{{Visualization on invariant feature learning.}
        In the aligned scenario (w), the interconnection between green and red instances, particularly at $\lambda=3$, becomes visible.
        Contrastingly, in the non-aligned scenario (w/o), we observe a pronounced dispersion, especially of the blue instances within the initial two stacks at $\lambda=3$, resulting in heightened inter-domain entropy.
    } \label{fig:umap}
    \vspace{-\baselineskip}
\end{figure}

{\bf Other results.} On top of the aforementioned results, further experiments are provided in Appendix \ref{apen:ablation}, which supports our choices and assumptions on the proposed model. The followings summarize the corresponding results: Comparison of stack-wise and block-wise feature alignment (Appendix \ref{apen:blockwise}); Comparison of several normalization functions (Appendix \ref{apen:normalizing}); Evaluation of the model under marginal (or the absence of) domain shift (Appendix \ref{apen:subtle}); Evaluation on Tourism \cite{athanasopoulos2011tourism}, M3 \cite{makridakis2000m3}, and M4 \cite{makridakis2018m4} datasets (Appendix \ref{apen:common}).
On top of that, we report the train and validation losses in Figure \ref{apen:fig:loss} supporting the stable optimization procedure.
Furthermore, we provide the visual samples of forecasting results in Figure \ref{apen:fig:graph} and make use of the interpretability of the feature-aligned N-BEATS to present Figure \ref{apen:fig:intpr} (see Appendix \ref{apen:visual}).

\section{Discussion and Extensions} \label{sec:conclude}
There are some unresolved theoretical parts in the current article such as a convergence analysis for the training loss (given in (\ref{eq:optimization})) with the empirical risk minimization and the stack-wise feature alignment, filling the gap between the Sinkhorn divergence and the ${\cal H}$-divergence adopted in the error analysis of domain generalization models (given in Proposition \ref{pro:error_analy}), and the instability issue coming from the small entropic parameter $\epsilon>0$ in the Sinkhorn divergence (see Table \ref{table:regularizer}). 

On the other hand, there are many rooms for an extension of the proposed domain-generalized time series forecasting model such as the `{conditional}' feature measure alignment in \cite{zhao2020domain} and `{adversarial representation learning framework}' in \cite{li2018domain}. Moreover, considering the utilization of `{moments}' as distribution measurements in \cite{ghifary2016scatter} and mitigating distribution mismatches through the `{contrastive loss}' in \cite{motiian2017unified} would represent meaningful avenues for future research.

\emph{Acknowledgement.} K.~Park gratefully acknowledges support of the Presidential Postdoctoral Fellowship of Nanyang Technological University. M.~Kang was supported by the NRF grant [2021R1A2C3010887] and the MSIT/IITP [No.~1711117093;~2021-0-00077;~2021-0-01343, Artificial Intelligence Graduate School Program of SNU]. 


\bibliographystyle{abbrv}
\bibliography{iclr2024_final}

\begin{thebibliography}{10}

\bibitem{albuquerque2019generalizing}
I.~Albuquerque, J.~Monteiro, M.~Darvishi, T.~H. Falk, and I.~Mitliagkas.
\newblock Generalizing to unseen domains via distribution matching.
\newblock {\em arXiv preprint arXiv:1911.00804}, 2019.

\bibitem{ambrosio2005gradient}
L.~Ambrosio, N.~Gigli, and G.~Savar{\'e}.
\newblock {\em Gradient flows: in metric spaces and in the space of probability
  measures}.
\newblock Springer Science \& Business Media, 2005.

\bibitem{athanasopoulos2011tourism}
G.~Athanasopoulos, R.~J. Hyndman, H.~Song, and D.~C. Wu.
\newblock The tourism forecasting competition.
\newblock {\em International Journal of Forecasting}, 27(3):822--844, 2011.

\bibitem{bandara2020forecasting}
K.~Bandara, C.~Bergmeir, and S.~Smyl.
\newblock Forecasting across time series databases using recurrent neural
  networks on groups of similar series: A clustering approach.
\newblock {\em Expert Systems with Applications}, 140:112896, 2020.

\bibitem{bao2022sparse}
H.~Bao and S.~Sakaue.
\newblock Sparse regularized optimal transport with deformed q-entropy.
\newblock {\em Entropy}, 24(11):1634, 2022.

\bibitem{ben2010theory}
S.~Ben-David, J.~Blitzer, K.~Crammer, A.~Kulesza, F.~Pereira, and J.~W.
  Vaughan.
\newblock A theory of learning from different domains.
\newblock {\em Machine Learning}, 79:151--175, 2010.

\bibitem{ben2006analysis}
S.~Ben-David, J.~Blitzer, K.~Crammer, and F.~Pereira.
\newblock Analysis of representations for domain adaptation.
\newblock {\em Advances in Neural Information Processing Systems}, 19, 2006.

\bibitem{bengio2013representation}
Y.~Bengio, A.~Courville, and P.~Vincent.
\newblock Representation learning: A review and new perspectives.
\newblock {\em IEEE Transactions on Pattern Analysis and Machine Intelligence},
  35(8):1798--1828, 2013.

\bibitem{cao2018brits}
W.~Cao, D.~Wang, J.~Li, H.~Zhou, L.~Li, and Y.~Li.
\newblock Brits: Bidirectional recurrent imputation for time series.
\newblock {\em Advances in Neural Information Processing Systems}, 31, 2018.

\bibitem{challu2023nhits}
C.~Challu, K.~G. Olivares, B.~N. Oreshkin, F.~G. Ramirez, M.~M. Canseco, and
  A.~Dubrawski.
\newblock Nhits: Neural hierarchical interpolation for time series forecasting.
\newblock In {\em Proceedings of the AAAI Conference on Artificial
  Intelligence}, volume~37, pages 6989--6997, 2023.

\bibitem{chen2020probabilistic}
Y.~Chen, Y.~Kang, Y.~Chen, and Z.~Wang.
\newblock Probabilistic forecasting with temporal convolutional neural network.
\newblock {\em Neurocomputing}, 399:491--501, 2020.

\bibitem{chizat2020faster}
L.~Chizat, P.~Roussillon, F.~L{\'e}ger, F.-X. Vialard, and G.~Peyr{\'e}.
\newblock Faster wasserstein distance estimation with the sinkhorn divergence.
\newblock {\em Advances in Neural Information Processing Systems},
  33:2257--2269, 2020.

\bibitem{di2020optimal}
S.~Di~Marino and A.~Gerolin.
\newblock Optimal transport losses and sinkhorn algorithm with general convex
  regularization.
\newblock {\em arXiv preprint arXiv:2007.00976}, 2020.

\bibitem{dudley1969speed}
R.~M. Dudley.
\newblock The speed of mean glivenko-cantelli convergence.
\newblock {\em The Annals of Mathematical Statistics}, 40(1):40--50, 1969.

\bibitem{federer2014geometric}
H.~Federer.
\newblock {\em Geometric measure theory}.
\newblock Classics in Mathematics. Springer, 2014.

\bibitem{feydy2020geometric}
J.~Feydy.
\newblock Geometric data analysis, beyond convolutions.
\newblock {\em Applied Mathematics}, 2020.

\bibitem{feydy2019interpolating}
J.~Feydy, T.~S{\'e}journ{\'e}, F.-X. Vialard, S.-i. Amari, A.~Trouv{\'e}, and
  G.~Peyr{\'e}.
\newblock Interpolating between optimal transport and mmd using sinkhorn
  divergences.
\newblock In {\em The 22nd International Conference on Artificial Intelligence
  and Statistics}, pages 2681--2690. PMLR, 2019.

\bibitem{flamary2021pot}
R.~Flamary, N.~Courty, A.~Gramfort, M.~Z. Alaya, A.~Boisbunon, S.~Chambon,
  L.~Chapel, A.~Corenflos, K.~Fatras, N.~Fournier, et~al.
\newblock Pot: Python optimal transport.
\newblock {\em Journal of Machine Learning Research}, 22(1):3571--3578, 2021.

\bibitem{ganin2015unsupervised}
Y.~Ganin and V.~Lempitsky.
\newblock Unsupervised domain adaptation by backpropagation.
\newblock In {\em International Conference on Machine Learning}, pages
  1180--1189. PMLR, 2015.

\bibitem{ganin2016domain}
Y.~Ganin, E.~Ustinova, H.~Ajakan, P.~Germain, H.~Larochelle, F.~Laviolette,
  M.~March, and V.~Lempitsky.
\newblock Domain-adversarial training of neural networks.
\newblock {\em Journal of Machine Learning Research}, 17(59):1--35, 2016.

\bibitem{genevay2018learning}
A.~Genevay, G.~Peyr{\'e}, and M.~Cuturi.
\newblock Learning generative models with sinkhorn divergences.
\newblock In {\em International Conference on Artificial Intelligence and
  Statistics}, pages 1608--1617. PMLR, 2018.

\bibitem{ghifary2016scatter}
M.~Ghifary, D.~Balduzzi, W.~B. Kleijn, and M.~Zhang.
\newblock Scatter component analysis: A unified framework for domain adaptation
  and domain generalization.
\newblock {\em IEEE Transactions on Pattern Analysis and Machine Intelligence},
  39(7):1414--1430, 2016.

\bibitem{gretton2012kernel}
A.~Gretton, K.~M. Borgwardt, M.~J. Rasch, B.~Sch{\"o}lkopf, and A.~Smola.
\newblock A kernel two-sample test.
\newblock {\em Journal of Machine Learning Research}, 13(1):723--773, 2012.

\bibitem{hewamalage2021recurrent}
H.~Hewamalage, C.~Bergmeir, and K.~Bandara.
\newblock Recurrent neural networks for time series forecasting: Current status
  and future directions.
\newblock {\em International Journal of Forecasting}, 37(1):388--427, 2021.

\bibitem{hu2020datsing}
H.~Hu, M.~Tang, and C.~Bai.
\newblock Datsing: Data augmented time series forecasting with adversarial
  domain adaptation.
\newblock In {\em Proceedings of the 29th ACM International Conference on
  Information \& Knowledge Management}, pages 2061--2064, 2020.

\bibitem{jin2022domain}
X.~Jin, Y.~Park, D.~Maddix, H.~Wang, and Y.~Wang.
\newblock Domain adaptation for time series forecasting via attention sharing.
\newblock In {\em International Conference on Machine Learning}, pages
  10280--10297. PMLR, 2022.

\bibitem{kingma2014adam}
D.~P. Kingma and J.~Ba.
\newblock Adam: A method for stochastic optimization.
\newblock In {\em International Conference on Learning Representations}, 2015.

\bibitem{le2019deep}
T.-N. Le, A.~Habrard, and M.~Sebban.
\newblock Deep multi-wasserstein unsupervised domain adaptation.
\newblock {\em Pattern Recognition Letters}, 125:249--255, 2019.

\bibitem{lee2019sliced}
C.-Y. Lee, T.~Batra, M.~H. Baig, and D.~Ulbricht.
\newblock Sliced wasserstein discrepancy for unsupervised domain adaptation.
\newblock In {\em Proceedings of the IEEE/CVF Conference on Computer Vision and
  Pattern Recognition}, pages 10285--10295, 2019.

\bibitem{li2018domain}
H.~Li, S.~J. Pan, S.~Wang, and A.~C. Kot.
\newblock Domain generalization with adversarial feature learning.
\newblock In {\em Proceedings of the IEEE Conference on Computer Vision and
  Pattern Recognition}, pages 5400--5409, 2018.

\bibitem{li2018extracting}
Y.~Li, D.~E. Carlson, et~al.
\newblock Extracting relationships by multi-domain matching.
\newblock {\em Advances in Neural Information Processing Systems}, 31, 2018.

\bibitem{li2018deep}
Y.~Li, X.~Tian, M.~Gong, Y.~Liu, T.~Liu, K.~Zhang, and D.~Tao.
\newblock Deep domain generalization via conditional invariant adversarial
  networks.
\newblock In {\em Proceedings of the European Conference on Computer Vision},
  pages 624--639, 2018.

\bibitem{lim2021temporal}
B.~Lim, S.~{\"O}. Ar{\i}k, N.~Loeff, and T.~Pfister.
\newblock Temporal fusion transformers for interpretable multi-horizon time
  series forecasting.
\newblock {\em International Journal of Forecasting}, 37(4):1748--1764, 2021.

\bibitem{liu2022scinet}
M.~Liu, A.~Zeng, M.~Chen, Z.~Xu, Q.~Lai, L.~Ma, and Q.~Xu.
\newblock Scinet: Time series modeling and forecasting with sample convolution
  and interaction.
\newblock {\em Advances in Neural Information Processing Systems},
  35:5816--5828, 2022.

\bibitem{madhusudhanan2022u}
K.~Madhusudhanan, J.~Burchert, N.~Duong-Trung, S.~Born, and L.~Schmidt-Thieme.
\newblock U-net inspired transformer architecture for far horizon time series
  forecasting.
\newblock In {\em Joint European Conference on Machine Learning and Knowledge
  Discovery in Databases}, pages 36--52. Springer, 2022.

\bibitem{makridakis2000m3}
S.~Makridakis and M.~Hibon.
\newblock The m3-competition: results, conclusions and implications.
\newblock {\em International Journal of Forecasting}, 16(4):451--476, 2000.

\bibitem{makridakis2018m4}
S.~Makridakis, E.~Spiliotis, and V.~Assimakopoulos.
\newblock The m4 competition: Results, findings, conclusion and way forward.
\newblock {\em International Journal of Forecasting}, 34(4):802--808, 2018.

\bibitem{matsuura2020domain}
T.~Matsuura and T.~Harada.
\newblock Domain generalization using a mixture of multiple latent domains.
\newblock In {\em Proceedings of the AAAI Conference on Artificial
  Intelligence}, volume~34, pages 11749--11756, 2020.

\bibitem{mcinnes2018umap}
L.~McInnes, J.~Healy, N.~Saul, and L.~Gro{\ss}berger.
\newblock Umap: Uniform manifold approximation and projection.
\newblock {\em Journal of Open Source Software}, 3(29):861, 2018.

\bibitem{miyato2018spectral}
T.~Miyato, T.~Kataoka, M.~Koyama, and Y.~Yoshida.
\newblock Spectral normalization for generative adversarial networks.
\newblock In {\em International Conference on Learning Representations}, 2018.

\bibitem{motiian2017unified}
S.~Motiian, M.~Piccirilli, D.~A. Adjeroh, and G.~Doretto.
\newblock Unified deep supervised domain adaptation and generalization.
\newblock In {\em Proceedings of the IEEE International Conference on Computer
  Vision}, pages 5715--5725, 2017.

\bibitem{muandet2013domain}
K.~Muandet, D.~Balduzzi, and B.~Sch{\"o}lkopf.
\newblock Domain generalization via invariant feature representation.
\newblock In {\em International Conference on Machine Learning}, pages 10--18.
  PMLR, 2013.

\bibitem{nair2010rectified}
V.~Nair and G.~E. Hinton.
\newblock Rectified linear units improve restricted boltzmann machines.
\newblock In {\em International Conference on Machine Learning}, pages
  807--814, 2010.

\bibitem{oneto2020exploiting}
L.~Oneto, M.~Donini, G.~Luise, C.~Ciliberto, A.~Maurer, and M.~Pontil.
\newblock Exploiting mmd and sinkhorn divergences for fair and transferable
  representation learning.
\newblock {\em Advances in Neural Information Processing Systems},
  33:15360--15370, 2020.

\bibitem{oreshkin2019n}
B.~N. Oreshkin, D.~Carpov, N.~Chapados, and Y.~Bengio.
\newblock N-beats: Neural basis expansion analysis for interpretable time
  series forecasting.
\newblock In {\em International Conference on Learning Representations}, 2019.

\bibitem{oreshkin2021meta}
B.~N. Oreshkin, D.~Carpov, N.~Chapados, and Y.~Bengio.
\newblock Meta-learning framework with applications to zero-shot time-series
  forecasting.
\newblock In {\em Proceedings of the AAAI Conference on Artificial
  Intelligence}, volume~35, pages 9242--9250, 2021.

\bibitem{pascanu2013difficulty}
R.~Pascanu, T.~Mikolov, and Y.~Bengio.
\newblock On the difficulty of training recurrent neural networks.
\newblock In {\em International Conference on Machine Learning}, pages
  1310--1318. Pmlr, 2013.

\bibitem{peyre2019computational}
G.~Peyr{\'e}, M.~Cuturi, et~al.
\newblock Computational optimal transport: With applications to data science.
\newblock {\em Foundations and Trends{\textregistered} in Machine Learning},
  11(5-6):355--607, 2019.

\bibitem{quinonero2008dataset}
J.~Quinonero-Candela, M.~Sugiyama, A.~Schwaighofer, and N.~D. Lawrence.
\newblock {\em Dataset shift in machine learning}.
\newblock Mit Press, 2008.

\bibitem{ramdas2017wasserstein}
A.~Ramdas, N.~Garc{\'\i}a~Trillos, and M.~Cuturi.
\newblock On wasserstein two-sample testing and related families of
  nonparametric tests.
\newblock {\em Entropy}, 19(2):47, 2017.

\bibitem{rangapuram2018deep}
S.~S. Rangapuram, M.~W. Seeger, J.~Gasthaus, L.~Stella, Y.~Wang, and
  T.~Januschowski.
\newblock Deep state space models for time series forecasting.
\newblock {\em Advances in Neural Information Processing Systems}, 31, 2018.

\bibitem{salinas2020deepar}
D.~Salinas, V.~Flunkert, J.~Gasthaus, and T.~Januschowski.
\newblock Deepar: Probabilistic forecasting with autoregressive recurrent
  networks.
\newblock {\em International Journal of Forecasting}, 36(3):1181--1191, 2020.

\bibitem{shen2018wasserstein}
J.~Shen, Y.~Qu, W.~Zhang, and Y.~Yu.
\newblock Wasserstein distance guided representation learning for domain
  adaptation.
\newblock In {\em Proceedings of the AAAI Conference on Artificial
  Intelligence}, volume~32, 2018.

\bibitem{shui2022novel}
C.~Shui, Q.~Chen, J.~Wen, F.~Zhou, C.~Gagn{\'e}, and B.~Wang.
\newblock A novel domain adaptation theory with jensen--shannon divergence.
\newblock {\em Knowledge-Based Systems}, 257:109808, 2022.

\bibitem{shui2022benefits}
C.~Shui, B.~Wang, and C.~Gagn{\'e}.
\newblock On the benefits of representation regularization in invariance based
  domain generalization.
\newblock {\em Machine Learning}, 111(3):895--915, 2022.

\bibitem{vapnik1991principles}
V.~Vapnik.
\newblock Principles of risk minimization for learning theory.
\newblock {\em Advances in Neural Information Processing Systems}, 4, 1991.

\bibitem{virmaux2018lipschitz}
A.~Virmaux and K.~Scaman.
\newblock Lipschitz regularity of deep neural networks: analysis and efficient
  estimation.
\newblock {\em Advances in Neural Information Processing Systems}, 31, 2018.

\bibitem{wang2022generalizing}
J.~Wang, C.~Lan, C.~Liu, Y.~Ouyang, T.~Qin, W.~Lu, Y.~Chen, W.~Zeng, and P.~Yu.
\newblock Generalizing to unseen domains: A survey on domain generalization.
\newblock {\em IEEE Transactions on Knowledge and Data Engineering}, 2022.

\bibitem{wang2018deep}
M.~Wang and W.~Deng.
\newblock Deep visual domain adaptation: A survey.
\newblock {\em Neurocomputing}, 312:135--153, 2018.

\bibitem{woo2022etsformer}
G.~Woo, C.~Liu, D.~Sahoo, A.~Kumar, and S.~Hoi.
\newblock Etsformer: Exponential smoothing transformers for time-series
  forecasting.
\newblock {\em arXiv preprint arXiv:2202.01381}, 2022.

\bibitem{wu2021autoformer}
H.~Wu, J.~Xu, J.~Wang, and M.~Long.
\newblock Autoformer: Decomposition transformers with auto-correlation for
  long-term series forecasting.
\newblock {\em Advances in Neural Information Processing Systems},
  34:22419--22430, 2021.

\bibitem{xu2020cot}
T.~Xu, L.~K. Wenliang, M.~Munn, and B.~Acciaio.
\newblock Cot-gan: Generating sequential data via causal optimal transport.
\newblock {\em Advances in Neural Information Processing Systems},
  33:8798--8809, 2020.

\bibitem{zeng2023transformers}
A.~Zeng, M.~Chen, L.~Zhang, and Q.~Xu.
\newblock Are transformers effective for time series forecasting?
\newblock In {\em Proceedings of the AAAI Conference on Artificial
  Intelligence}, volume~37, pages 11121--11128, 2023.

\bibitem{zhao2019learning}
H.~Zhao, R.~T. Des~Combes, K.~Zhang, and G.~Gordon.
\newblock On learning invariant representations for domain adaptation.
\newblock In {\em International Conference on Machine Learning}, pages
  7523--7532. PMLR, 2019.

\bibitem{zhao2020domain}
S.~Zhao, M.~Gong, T.~Liu, H.~Fu, and D.~Tao.
\newblock Domain generalization via entropy regularization.
\newblock {\em Advances in Neural Information Processing Systems},
  33:16096--16107, 2020.

\bibitem{zhou2021domain}
F.~Zhou, Z.~Jiang, C.~Shui, B.~Wang, and B.~Chaib-draa.
\newblock Domain generalization via optimal transport with metric similarity
  learning.
\newblock {\em Neurocomputing}, 456:469--480, 2021.

\bibitem{zhou2021informer}
H.~Zhou, S.~Zhang, J.~Peng, S.~Zhang, J.~Li, H.~Xiong, and W.~Zhang.
\newblock Informer: Beyond efficient transformer for long sequence time-series
  forecasting.
\newblock In {\em Proceedings of the AAAI Conference on Artificial
  Intelligence}, volume~35, pages 11106--11115, 2021.

\bibitem{zhou2022fedformer}
T.~Zhou, Z.~Ma, Q.~Wen, X.~Wang, L.~Sun, and R.~Jin.
\newblock Fedformer: Frequency enhanced decomposed transformer for long-term
  series forecasting.
\newblock In {\em International Conference on Machine Learning}, pages
  27268--27286. PMLR, 2022.

\end{thebibliography}

\newpage\appendix

{\LARGE\sc Appendix}

\setlength{\parskip}{.4\baselineskip}

\section{Further Details on Feature-aligned N-BEATS} \label{apen:margin_measure}
The $m$-th residual operators $(\psi^m,\xi^m_\downarrow,\xi^m_\uparrow)\in\Psi\times\Xi_\downarrow\times\Xi_\uparrow$, $m=1,\dots,M$, are given by
\begin{align} \label{eq:residual_operators}
    \begin{aligned}
        \psi^m(x)&:=\big(\operatorname{FC}^{m,N}\circ\operatorname{FC}^{m,N-1}\cdots\circ\operatorname{FC}^{m,1}\big)(x),\quad && x\in{\cal X}=\mathbb{R}^\alpha,\\
        \xi^m_\downarrow (z)&:= \mathbf{V}_{m,\downarrow} {\bf W}_{m,\downarrow} z,\quad && z\in {\cal Z}=\mathbb{R}^\gamma,\\
        \xi^m_\uparrow(z) &:= \mathbf{V}_{m,\uparrow}{\bf W}_{m,\uparrow}z,\quad && z\in {\cal Z}=\mathbb{R}^\gamma.
    \end{aligned}
\end{align}

These operators correspond to the $m$-th stack and involve fully connected layers denoted by $\operatorname{FC}^{m,n}$ with $\relu$ activation function \cite{nair2010rectified}.
Specifically, $\operatorname{FC}^{m,n}(x)$ is defined as $\relu({\bf W}_{m,n} x +{\bf b}_{m,n})$, where ${\bf W}_{m,n}$ and ${\bf b}_{m,n}$ are trainable weight and bias parameters, respectively.
The matrix {${\bf W}_{m,\downarrow}\in\mathbb{R}^{\gamma_\downarrow\times\gamma}$} (resp. {${\bf W}_{m,\uparrow}\in\mathbb{R}^{\gamma_\uparrow\times\gamma}$}) represents a trainable linear projection layer for forecasting (resp. backcasting) operations.
For the parameters $\beta$, $\alpha$, and $\gamma$ denoting to the forecast horizon, lookback horizon, and latent space dimension, respectively, $\mathbf{V}_{m,\downarrow}\in\mathbb{R}^{\beta\times\gamma_{\downarrow}}$ (resp. $\mathbf{V}_{m,\uparrow}\in\mathbb{R}^{\alpha\times\gamma_{\uparrow}}$) represents a forecast basis (resp. backcast basis) matrix, given by
\begin{align} \label{eq:basis}
    \begin{aligned}
        &\mathbf{V}_{m,\downarrow}:=(\mathbf{v}_{m,\downarrow}^1,\dots,\mathbf{v}_{m,\downarrow}^{\gamma_\downarrow})\in\mathbb{R}^{\beta\times\gamma_{\downarrow}}\quad\text{with}\quad \mathbf{v}_{m,\downarrow}^1,\dots,\mathbf{v}_{m,\downarrow}^{\gamma_\downarrow}\in\mathbb{R}^\beta,\\
        &(\text{resp.}\quad\mathbf{V}_{m,\uparrow}:=(\mathbf{v}_{m,\uparrow}^1,\dots,\mathbf{v}_{m,\uparrow}^{\gamma_\uparrow})\in\mathbb{R}^{\alpha\times\gamma_{\uparrow}}\quad\text{with}\quad(\mathbf{v}_{m,\uparrow}^1,\dots,\mathbf{v}_{m,\uparrow}^{\gamma_\uparrow})\in\mathbb{R}^\alpha),
    \end{aligned}
\end{align}
and each $\mathbf{v}_{m,\downarrow}^i$ (resp. $\mathbf{v}_{m,\uparrow}^i$), is a forecast basis (resp. backcast basis) vector.

Note that $\mathbf{V}_{m,\downarrow}$ and $\mathbf{V}_{m,\uparrow}$ are set to be non-trainable parameter sets that embrace vital information for time series forecasting purposes, including trends and seasonality. These parameter sets are based on \cite{oreshkin2019n}. The basis expansion representations in (\ref{eq:residual_operators}) with flexible adjustments in (\ref{eq:basis}) allow the model to capture the relevant patterns in the sequential data.

\textbf{N-BEATS-G \& N-BEATS-I.} The main difference between N-BEATS-G and N-BEATS-I lies on the utilization of $\mathbf{V}_{m,\downarrow}$ and $\mathbf{V}_{m,\uparrow}$. More precisely, N-BEATS-G does not incorporate any specialized time series-specific knowledge but employs the identity matrices for $\mathbf{V}_{m,\downarrow}$ and $\mathbf{V}_{m,\uparrow}$. In contrast, N-BEATS-I captures trend and seasonality information, which derives the interpretability. 
Specifically, $\mathbf{V}_{m,\downarrow}$ is given by $\mathbf{V}_{m,\downarrow}=({\bf 1}, {\bf t}, \cdots, {\bf t}^{\gamma_{\downarrow}})$, where ${\bf t} = \frac{1}{\beta} (0,1,2, \cdots, \beta-2, \beta-1)^\top $.
This choice is motivated by the characteristic of trends, which are typically represented by monotonic or slowly varying functions. For the seasonality, $\mathbf{V}_{m,\downarrow}$ is defined using a periodic function, (specifically the Fourier series), so that $\mathbf{V}_{m,\downarrow}=({\bf 1}, \cos(2\pi {\bf t}), \cdots, \cos(2\pi \lfloor \beta/2-1 \rfloor {\bf t})), \sin(2\pi {\bf t}), \cdots, \sin(2\pi \lfloor \beta/2-1 \rfloor {\bf t})))^\top$. 
The dimension of $\mathbf{V}_{m,\downarrow}$ is determined by adjusting the interval between $\cos(2\pi {\bf t})$ and $\cos(2\pi \lfloor \beta/2-1 \rfloor {\bf t})$, as well as $\sin(2\pi {\bf t})$ and $\sin(2\pi \lfloor \beta/2-1 \rfloor {\bf t})$. This formulation incorporates the notion of sinusoidal waveforms to capture the periodic nature of seasonality. The formulation of $\mathbf{V}_{m,\uparrow}$ is identical to that of $\mathbf{V}_{m,\downarrow}$, with the only difference being the replacement of $\alpha$ with $\beta$ and $\gamma_{\downarrow}$ with $\gamma_{\uparrow}$.

\textbf{Lipschitz continuity of residual operators.} Since each $\psi^m$ defined in (\ref{eq:residual_operators}) is an $N$-layered fully connected network with the $1$-Lipschitz continuous activation, i.e., $\relu$, we can apply \cite[Section 6]{virmaux2018lipschitz} to have an explicit representation for the (Rademacher) Lipschitz constant $C_{m}>0$ of $\psi^m$ \cite[Theorem 3.1.6]{federer2014geometric}.
Furthermore, the forecasting and backcasting operators, $\xi^m_\downarrow$ and $\xi^m_\uparrow$, are matrix operators, and we can calculate their Lipschitz constants $C_{m,\downarrow}$ and $C_{m,\uparrow}$ by using the matrix norm
induced by the Euclidean norm $\lVert\cdot \rVert$, i.e., $C_{m,\downarrow}:=\lVert \mathbf{V}_{m,\downarrow} {\bf W}_{m,\downarrow}\rVert>0$ and $C_{m,\uparrow}:=\lVert\mathbf{V}_{m,\uparrow} {\bf W}_{m,\uparrow}\rVert>0$. 

\textbf{Detailed illustration of stack-wise feature alignment.} In addition to the above-presented N-BEATS, we incorporate the concept of learning invariance for domain generalization, which is referred to as Feature-aligned N-BEATS. We provide the illustration of Feature-aligned N-BEATS in Figure \ref{apen:fig:method} which is a detailed version of Figure \ref{fig:n-beats}.

\begin{figure}[ht]
    \centering
    \includegraphics[width=.8\textwidth]{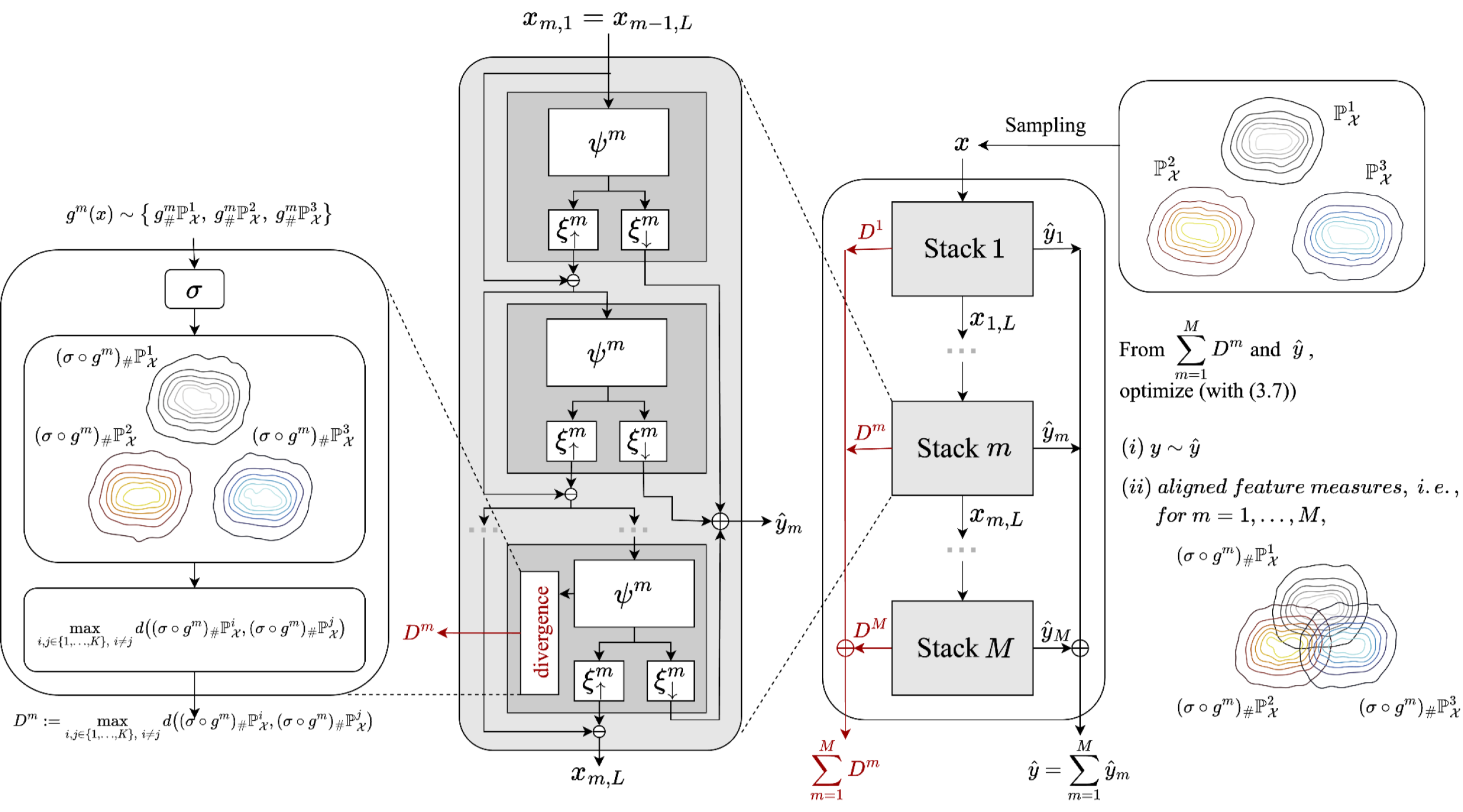}
    \caption{
        Illustration of Feature-aligned N-BEATS (noting that it is a detailed version of Figure \ref{fig:n-beats}).
    } \label{apen:fig:method}
    \vspace{-\baselineskip}
\end{figure}

\section{Proofs of Lemma \ref{lem:lipschitz} and Theorem \ref{thm:sinkhorn_div}} \label{apen:sinkhorn_div}
\begin{proof}[Proof of Lemma \ref{lem:lipschitz}]
    From the definition of $\sigma\circ g^m$ in Definition \ref{dfn:pushforward} and the Lipschitz continuity of $\sigma$ and $\psi^m$ with corresponding constants $C_\sigma>0$ and $C_m>0$, it follows for every $m=1,\dots,M$, that for any $x,y\in {\cal X}$,
    \begin{align} \label{eq:lipschitz_coeff}
        \begin{aligned}
            \lVert \sigma\circ g^m(x)-\sigma\circ g^m(y) \rVert
            &\leq C_\sigma \lVert  g^m(x)- g^m(y) \rVert\\
            &\leq C_\sigma C_m \Big\lVert \big((r^m)^{(L-1)}\circ (r^{m-1})^{(L)} \circ\dots \circ (r^{1})^{(L)}\big)(x) \Big.\\
            &\Big.\quad - \big((r^m)^{(L-1)}\circ (r^{m-1})^{(L)} \circ\dots \circ (r^{1})^{(L)}\big)(y)\Big\rVert.
        \end{aligned}
    \end{align}

    Based on the residual operation in (\ref{eq:ft_R}), i.e., $r^m(x)=x-(\xi^{m}_{\uparrow}\circ \psi^m)(x)$, and considering the Lipschitz continuity of $\sigma$ and $\psi^m$ with respective constants $C_\sigma>0$ and $C_m>0$, we can establish the following inequality for every $m=1,\dots,M$, and any $x,y\in{\cal X}$,
    \[
        \lVert r^m(x)-r^m(y) \rVert\leq \lVert x -y \rVert+\lVert (\xi^{m}_{\uparrow}\circ \psi^m)(x) -(\xi^{m}_{\uparrow}\circ \psi^m)(y) \rVert\
        \leq (1+C_{m,\uparrow}C_m) \lVert x -y \rVert.
    \]

    Applying this to $L-1$-times composition of $r^m$, i.e., $(r^m)^{(L-1)}$, we have that for any $x,y\in {\cal X}$,
    \[
        \lVert  (r^m)^{(L-1)}(x)-(r^m)^{(L-1)}(y) \rVert \leq (1+C_{m,\uparrow}C_m)^{L-1} \lVert  x -y \rVert.
    \]
    Using the same arguments for the remaining compositions $(r^{m-1})^{(L)},(r^{m-2})^{(L)},\dots,(r^{1})^{(L)}$ in (\ref{eq:lipschitz_coeff}), we deduce that for any $x,y\in{\cal X}$,
    \[
        \lVert \sigma\circ g^m(x)-\sigma\circ g^m(y) \rVert \leq C_\sigma C_m (1+C_mC_{m,\uparrow})^{L-1} \prod_{n=1}^{m-1}(1+C_nC_{n,\uparrow})^{L}\lVert  x- y \rVert. 
    \]
\end{proof}

\begin{proof}[Proof of Theorem \ref{thm:sinkhorn_div}]
    We first note that from the nonnegativity of the entropy term in the regularized Wasserstein distance ${\cal W}_{\epsilon,\widetilde{{\cal Z}}}$, i.e., for every $\pi \in \Pi(\mu,\nu;{\widetilde{\cal Z}})$,
    \[
        \int_{\widetilde{\cal Z}\times \widetilde{\cal Z}}\epsilon \log\left(\frac{d\pi(x,y)}{d\mu(x)d \nu(y)}\right)d\pi(x,y)\geq 0,
    \]
    it is clear that ${\cal W}_{\epsilon,\widetilde{\cal Z}}(\mu,\nu)\geq 0$ for every $\mu,\nu \in {\cal P}(\widetilde{{\cal Z}})$. Moreover, from the definition of $\widehat{{\cal W}}_{\epsilon,\widetilde{{\cal Z}}}$, i.e., $\widehat{{\cal W}}_{\epsilon,\widetilde{\cal Z}}(\mu,\nu)={\cal W}_{\epsilon,\widetilde{\cal Z}}(\mu,\nu) - \frac{1}{2}({\cal W}_{\epsilon,\widetilde{\cal Z}}(\nu,\nu) + {\cal W}_{\epsilon,\widetilde{\cal Z}}(\mu,\mu))$, for $\mu,\nu\in {\cal P}(\widetilde{{\cal Z}})$, it follows that for every $m=1,\dots,M$ and any $i,j\in\{1,\dots,K\}$ such that $i\neq j$,
    \begin{equation} \label{eq:SD_EOT}
        \widehat{\cal W}_{\epsilon, \widetilde{{\cal Z}}}\big((\sigma \circ g^m)_\# \mathbb{P}^i_{{\cal X}},(\sigma \circ g^m)_\# \mathbb{P}^j_{{\cal X}}\big)\leq{\cal W}_{\epsilon, \widetilde{{\cal Z}}}\big((\sigma \circ g^m)_\# \mathbb{P}^i_{{\cal X}},(\sigma \circ g^m)_\# \mathbb{P}^j_{{\cal X}}\big).
    \end{equation}

    Let ${\cal C}(\widetilde{{\cal Z}})$ be the set of all real-valued continuous functions on $\widetilde{{\cal Z}}$ and ${{\cal C}}({\cal X};{\sigma \circ g^m})$ be defined by 
    \[
    {{\cal C}}({\cal X};{\sigma \circ g^m}):=\{{f}:{\cal X}\rightarrow \mathbb{R}\mid\exists \widetilde{f}\in {\cal C}(\widetilde{{\cal Z}})~\text{s.t.}~{f}=\widetilde{f}\circ \sigma \circ g^m\}.
    \]
    Then, from the dual representation in \cite[Remark 4.18 in Section 4.4]{peyre2019computational} based on the Lagrangian method and the integral property of pushforward measures in \cite[Section 5.2]{ambrosio2005gradient}, it follows for every $m=1,\dots,M$ that given $\mathbb{P}_{{\cal X}}^i, \mathbb{P}_{{\cal X}}^j\in \{\mathbb{P}_{{\cal X}}^k\}_{k=1}^K$ with $i\neq j$,
    \begin{align} \label{eq:wasser0}
        \begin{aligned}
            &{\cal W}_{\epsilon, \widetilde{{\cal Z}}}\big((\sigma \circ g^m)_\# \mathbb{P}^i_{{\cal X}},(\sigma \circ g^m)_\# \mathbb{P}^j_{{\cal X}}\big)\\
            &= \sup_{\widetilde{f},\widetilde{h}\in {\cal C}(\widetilde{{\cal Z}})} \left\{\int_{\widetilde{{\cal Z}}}\widetilde{f}(x)d\big((\sigma \circ g^m)_{\#}\mathbb{P}_{{\cal X}}^i\big)(x)+\int_{\widetilde{{\cal Z}}}\widetilde{h}(y)d\big((\sigma \circ g^m)_{\#}\mathbb{P}_{{\cal X}}^j\big)(y) \right.\\
            &\left.\quad -\epsilon \int_{\widetilde{{\cal Z}}\times\widetilde{{\cal Z}}}e^{\frac{1}{\epsilon}(\widetilde{f}(x)+\widetilde{h}(y)-\lVert x-y \rVert^2)}d\big((\sigma \circ g^m)_{\#}\mathbb{P}_{{\cal X}}^i\big)\otimes\big((\sigma \circ g^m)_{\#}\mathbb{P}_{{\cal X}}^j\big)(x,y)\right\} \\
            &= \sup_{{f}, {h}\in {{\cal C}}({\cal X};{\sigma \circ g^m})} \left\{\int_{{\cal X}}{f}(x)d\mathbb{P}_{{\cal X}}^i(x)+\int_{{{\cal X}}}{h}(y)d\mathbb{P}_{{\cal X}}^i(y) \right.\\
            &\left.\quad -\epsilon \int_{{{\cal X}}\times{{\cal X}}}e^{\frac{1}{\epsilon}({f}(x)+{h}(y)-\lVert (\sigma \circ g^m)(x)-(\sigma \circ g^m)(y) \rVert^2)}d(\mathbb{P}_{{\cal X}}^i\otimes\mathbb{P}_{{\cal X}}^j)(x,y)\right\}.
        \end{aligned}
    \end{align}

    Consider the following regularized optimal transport problem:
    \begin{align*}
        \widetilde{\cal W}_{\epsilon,{\cal X}}( \mathbb{P}^i, \mathbb{P}^j;\sigma\circ g^m) &:= \inf_{\pi \in \Pi( \mathbb{P}^i, \mathbb{P}^j;{\cal {{\cal X}}})}\left\{ \int_{{\cal X}\times {\cal X}} \left(\lVert \sigma\circ g^m(x)-\sigma\circ g^m(y)\rVert^2 \right.\right.\\
        &\left.\left.\quad + \epsilon \log\left(\frac{d\pi(x,y)}{d\mathbb{P}_{{\cal X}}^i(x)d \mathbb{P}_{{\cal X}}^j(y)}\right)\right)d\pi(x,y)\right\}.
    \end{align*}
    Then, from the dual representation, as in (\ref{eq:wasser0}), it follows that
    \begin{align} \label{eq:wasser1}
        \begin{aligned}
            \widetilde{\cal W}_{\epsilon,{\cal X}}( \mathbb{P}^i, \mathbb{P}^j;\sigma\circ g^m) &= \sup_{{f},{h}\in {{\cal C}}({\cal X})} \left\{\int_{{\cal X}}{f}(x)d\mathbb{P}_{{\cal X}}^i(x)+\int_{{{\cal X}}}{h}(y)d\mathbb{P}_{{\cal X}}^i(y) \right.\\
            &\left.\quad -\epsilon \int_{{{\cal X}}\times{{\cal X}}}e^{\frac{1}{\epsilon}({f}(x)+{h}(y)-\lVert (\sigma \circ g^m)(x)-(\sigma \circ g^m)(y) \rVert^2)}d(\mathbb{P}_{{\cal X}}^i\otimes\mathbb{P}_{{\cal X}}^j)(x,y)\right\},
        \end{aligned}
    \end{align}
    where ${\cal C}({{\cal X}})$ denotes the set of all continuous real-valued functions on ${{\cal X}}$.
    From the dual representations in (\ref{eq:wasser0}) and (\ref{eq:wasser1}) and the relation that ${\cal C}({\cal X};\sigma \circ g^m)\subseteq {\cal C}({\cal X})$, it follows that 
    \begin{equation} \label{eq:wasserstein_inequality}
        {\cal W}_{\epsilon, \widetilde{{\cal Z}}}\big((\sigma \circ g^m)_\# \mathbb{P}^i_{{\cal X}},(\sigma \circ g^m)_\# \mathbb{P}^j_{{\cal X}}\big)\leq \widetilde{\cal W}_{\epsilon,{\cal X}}( \mathbb{P}^i, \mathbb{P}^j;\sigma\circ g^m).
    \end{equation}
    
    On the other hand, from the first order optimality condition and the continuity of $\sigma\circ g^m$, presented in Lemma \ref{lem:lipschitz}, it follows that the optimal potentials $f^*, h^*\in {\cal C}({\cal X})$, which realize the supremum in (\ref{eq:wasser1}), are given by, respectively,
    \begin{align*}
        &{f}^*(x):=  -\epsilon\log\left(\int_{{\cal X}}e^{\frac{1}{\epsilon}(h^*(y)-\lVert (\sigma \circ g^m)(x)-(\sigma \circ g^m)(y) \rVert^2)} d \mathbb{P}_{{\cal X}}^j(y)\right),\quad x\in {\cal X},\\ &{h}^*(y):= -\epsilon\log\left(\int_{{\cal X}}e^{\frac{1}{\epsilon}(f^*(x)-\lVert (\sigma \circ g^m)(x)-(\sigma \circ g^m)(y) \rVert^2)} d \mathbb{P}_{{\cal X}}^i(x) \right),\quad y\in {\cal X},
    \end{align*}
    which can be represented by $f^*=\widetilde{f}^*\circ\sigma\circ g^m$ and $h^*=\widetilde{h}^*\circ\sigma\circ g^m$, respectively, where $\widetilde{f}^*,\widetilde{h}^*\in {\cal C}(\widetilde{{\cal Z}})$ are given by, respectively,
    \begin{align*}
        &\widetilde{f}^*(x):=  -\epsilon\log\left(\int_{{\cal X}}e^{\frac{1}{\epsilon}((\widetilde{h}^*\circ\sigma \circ g^m)(y)-\lVert x-(\sigma \circ g^m)(y) \rVert^2)} d \mathbb{P}_{{\cal X}}^j(y)\right),\quad x\in \widetilde{\cal Z},\\ 
        &\widetilde{h}^*(y):= -\epsilon\log\left(\int_{{\cal X}}e^{\frac{1}{\epsilon}((\widetilde{f}^*\circ\sigma \circ g^m)(x)-\lVert (\sigma \circ g^m)(x)-y\rVert^2)} d \mathbb{P}_{{\cal X}}^i(x) \right),\quad y\in \widetilde{\cal Z}.
    \end{align*}
    This ensures that $f^*,h^*\in {\cal C}({\cal X}; \sigma\circ g^m)\subseteq \widetilde{\cal C}({\cal X})$. Hence we establish that (\ref{eq:wasserstein_inequality}) holds as equality.
    
    From this and the Lipschitz continuity of $\sigma \circ g^m$ with the  constant $C_{\sigma \circ g^m}>0$ in Lemma \ref{lem:lipschitz}, it follows that
    \begin{align*}
        &{\cal W}_{\epsilon, \widetilde{{\cal Z}}}\big((\sigma \circ g^m)_\# \mathbb{P}^i_{{\cal X}},(\sigma \circ g^m)_\# \mathbb{P}^j_{{\cal X}}\big)=\widetilde{\cal W}_{\epsilon,{\cal X}}( \mathbb{P}^i, \mathbb{P}^j;\sigma\circ g^m)\\
        &\leq \inf_{\pi \in \Pi( \mathbb{P}^i, \mathbb{P}^j;{\cal {{\cal X}}})}\left\{ \int_{{\cal X}\times {\cal X}} \left(C_{\sigma \circ g^m}^2\lVert x-y\rVert^2 +\epsilon \log\left(\frac{d\pi(x,y)}{d\mathbb{P}_{{\cal X}}^i(x)d \mathbb{P}_{{\cal X}}^j(y)}\right)\right)d\pi(x,y)\right\} \\
        &\leq \max\{C_{\sigma \circ g^m}^2,1\} {\cal W}_{\epsilon,{\cal X}}( \mathbb{P}^i, \mathbb{P}^j).
    \end{align*}
    
    Therefore, we have shown that 
    \begin{align*}
        \sum_{m=1}^M \max_{i,j\in \{1,\dots,K\},~i\neq j}{\cal W}_{\epsilon, \widetilde{{\cal Z}}}\big((\sigma \circ g^m)_\# \mathbb{P}^i_{{\cal X}},(\sigma \circ g^m)_\# \mathbb{P}^j_{{\cal X}}\big)
        &\leq C \max_{i,j\in \{1,\dots,K\},~i\neq j} {\cal W}_{\epsilon,{\cal X}}( \mathbb{P}^i, \mathbb{P}^j),
    \end{align*}
    with $C=\sum_{m=1}^M \max\{C_{\sigma \circ g^m}^2,1\}$. Combining this with (\ref{eq:SD_EOT}) concludes the proof.
\end{proof}

\section{Some Remarks on Section \ref{sec:sinkhorn}}\label{apen:thm:sinkhorn_div}
\begin{rem}
    Theorem \ref{thm:sinkhorn_div} supports that the Sinkhorn-based alignment involving intricate doubly residual stacking architecture is feasible, as far as the pair-wise divergence of source domains' measures can be `a priori' estimated under some suitable divergence (i.e., the entropic regularized Wasserstein distance in the right-hand side of the inequality therein). Indeed, the \texttt{PoT} library introduced by \cite{flamary2021pot} can be used for calculation of the entropic regularized Wasserstein distances. On the other hand, the proposed Sinkhorn-based alignment loss is implemented by \texttt{GeomLoss} of \cite{feydy2020geometric} that is known to be a significantly efficient and accurate approximate algorithm and will be the main calculation toolkit in the model.
\end{rem}
\begin{rem}
    For sequential data generation, \cite{xu2020cot} introduced a causality constraint within optimal transport distances and used the Sinkhorn divergence as an approximate for the causality constrained optimal transport distances. However, we do not consider the constraint but adopt the Sinkhorn divergence for an approximate of the classic optimal transport distance as in (\ref{eq:wasser}).
    This is because unlike the reference, there is no inference for the causality between pushforward feature measures from the source domains.
\end{rem}

\section{Detailed Experimental Information of Section \ref{sec:exp}} \label{apen:exp_details}
\textbf{Experimental environment.} We conduct all experiments using the specifications below:
\begin{itemize}
    \item CPU: Intel(R) Xeon(R) Platinum 8163 CPU @ 2.50GHz.
    \item GPU: NVIDIA TITAN RTX.
\end{itemize}
All analyses in Section \ref{apen:exp_results} utilize the same environment.
The comprehensive software setup can be found on GitHub\footnote{\url{https://github.com/leejoonhun/fan-beats}}.

{\bf Dataset configuration.} The training data generation follows the steps detailed below: 

\begin{enumerate}[leftmargin=*]
    \item Retrieve financial data \{\texttt{commodity}, \texttt{income}, \texttt{interest rate}, \texttt{exchange rate}\} from the Federal Reserve Economic Data (FRED), and weather data \{\texttt{pressure}, \texttt{rain}, \texttt{temperature}, \texttt{wind}\} from the National Centers for Environmental Information (NCEI). Subsequently, we designate finance and weather as the superdomain ${\cal A}$, with the subordinate data categorized as individual domains.
    \item Process each data point into a sliding window of defined dimensions [$\alpha$, $\beta$], e.g., [50, 10], with the sliding stride of 1. Each segment is treated as an individual instance.
    \item To alleviate any potential concerns arising from data imbalance, we establish a predetermined quantity of 75,000 instances for each domain through random sampling, thereby guaranteeing independence from such considerations.
    \item We randomly split each domain into three sets: 70\% for training, 10\% for validation, and 20\% for testing.
\end{enumerate}
It is worth noting that our dataset consists entirely of real-world data and covers a wide range of frequencies, including daily, weekly, monthly, quarterly, and yearly observations. The graphical representation of the frequency distribution for instances is depicted in Figure \ref{fig:freq}.

ODG scenarios involve no overlap between the source and target domains with respect to the superdomain, e.g., $\{\mathcal{D}\}_{k=1}^{3}$ = \{\texttt{pressure}, \texttt{rain}, \texttt{temperature}\}, and $\mathcal{D}^T$ = \texttt{commodity}.
For fair comparisons, the number of source domains is standardized to three across CDG and IDG scenarios.
In CDG, we select one domain from the target domain's superdomain ($p=2$) and two domains from the other superdomain as sources, e.g., $\{\mathcal{D}\}_{k=1}^{3}$ = \{\texttt{commodity}, \texttt{income}, \texttt{pressure}\}, and $\mathcal{D}^T$ = \texttt{rain}.
To evaluate IDG, we designate one target domain and consider the remaining domains within the same superdomain as source domains, e.g., $\{\mathcal{D}\}_{k=1}^{3}$ = \{\texttt{pressure}, \texttt{rain}, \texttt{temperature}\}, and $\mathcal{D}^T$ = \texttt{wind}.

Note that each selected combination of source domains for Table \ref{table:regularizer} is \{\texttt{rain}, \texttt{temperature}, \texttt{wind}\} for ODG, \{\texttt{commodity}, \texttt{temperature}, \texttt{wind}\} for CDG, and \{\texttt{commodity}, \texttt{income}, \texttt{interest rate}\} for IDG.

\begin{figure}[t]
    \centering
    \includegraphics[width=.8\textwidth]{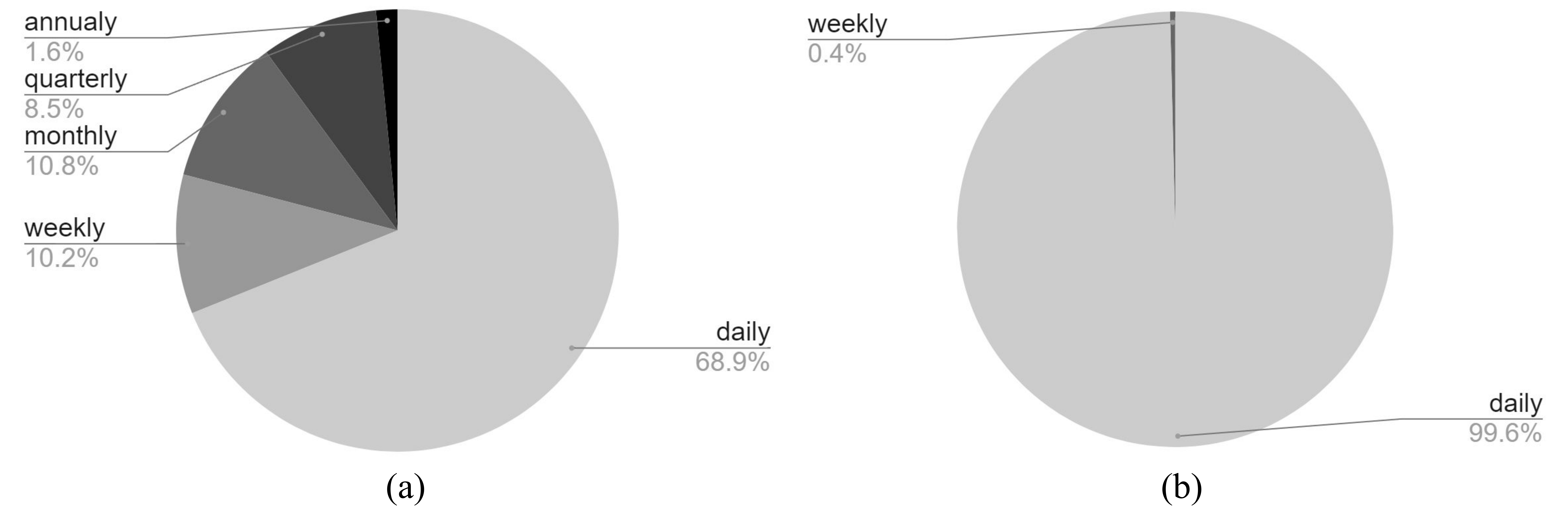}
    \caption{
        {Visualization of frequency distribution.} (a) FRED, and (b) NCEI.
    } \label{fig:freq}
    \vspace{-\baselineskip}
\end{figure}

\textbf{Evaluation metrics.} For our experiments, we employ two evaluation metrics.
Given $H=N\times\beta$ with where $N$ represents the number of instances, the metrics are defined as:
\[
    \smape=\frac{2}{H}\times\sum_{i=1}^H\frac{|y_i-\hat{y}_i|}{|y_i|+|\hat{y}_i|},\quad\text{and}\quad
    \mase=\frac{1/H\times\sum_{i=1}^{H}|y_i-\hat{y_i}|}{1/(H-1)\times\sum_{i=1}^{H-1}|y_{i+1}-y_i|}.
\]

\textbf{Hyperparameters.} Considering the scope of our experimental configuration that bring a total of 184 experimental cases for each model, we adopt a suitable range of hyperparameters, detailed in Table \ref{apen:table:hyper}, to achieve the performance results presented in Table \ref{table:result}.

\begin{table*}[ht]
    \vspace{-\baselineskip}
    \caption{
        Hyperparameters.
    } \label{apen:table:hyper}
    \vspace{.5\baselineskip}
    \centering
    \resizebox{\textwidth}{!}{
        \begin{tabular}{ll}
        \toprule
        \multicolumn{1}{c}{Hyperparameter} & \multicolumn{1}{c}{Considered Values} \\
        \midrule
        Lookback horizon. & $\alpha=50$ \\
        Forecast horizon. & $\beta=10$ \\
        Number of stacks. & $M=3$ \\
        Number of blocks in each stack. & $L=4$ \\
        Activation function. & $\relu$ \\
        Feature dimension. & $\gamma=512$ \\
        \midrule
        Loss function. & $\cal L=\smape$ \\
        Regularizing temperature. & $\lambda \in \{0.1, 0.3, 1, 3\}$ \\
        Learning rate scheduling. & \texttt{CyclicLR(base\_lr=2e-7, max\_lr=2e-5,}  \\
        & \texttt{~~~~~~~~~mode="triangular2",} \\
        & \texttt{~~~~~~~~~step\_size\_up=10)} \\
        Batch size. & $B=2^{12}$ \\
        Number of iterations. & 1,000 \\
        \midrule
        Type of stacks used for N-BEATS-I. & [Trend, Seasonality, Seasonality] \\
        Number of polynomials and harmonics used for N-BEATS-I. & 2 \\
        Pooling method used for N-HiTS. & \texttt{MaxPool1d} \\
        Interpolation method used for N-HiTS. & \texttt{interpolate(mode="linear")} \\
        Kernel size used for N-HiTS. & 2 \\
        \bottomrule
        \end{tabular}
    }
\end{table*}

\textbf{Training and validation loss plots.} Feature-aligned N-BEATS is a complicated architecture based on the doubly residual stacking principle with an additional feature-aligning procedure. To investigate the stability of training, we analyze the training and validation loss plots. Figure \ref{apen:fig:loss} indicates that the gradients are well-behaved during training. This stable optimization is regarded to the Lemma \ref{lem:lipschitz} presented in Section \ref{sec:margin_measure}.

\begin{figure}[ht]
    \centering
    \includegraphics[width=0.9\textwidth]{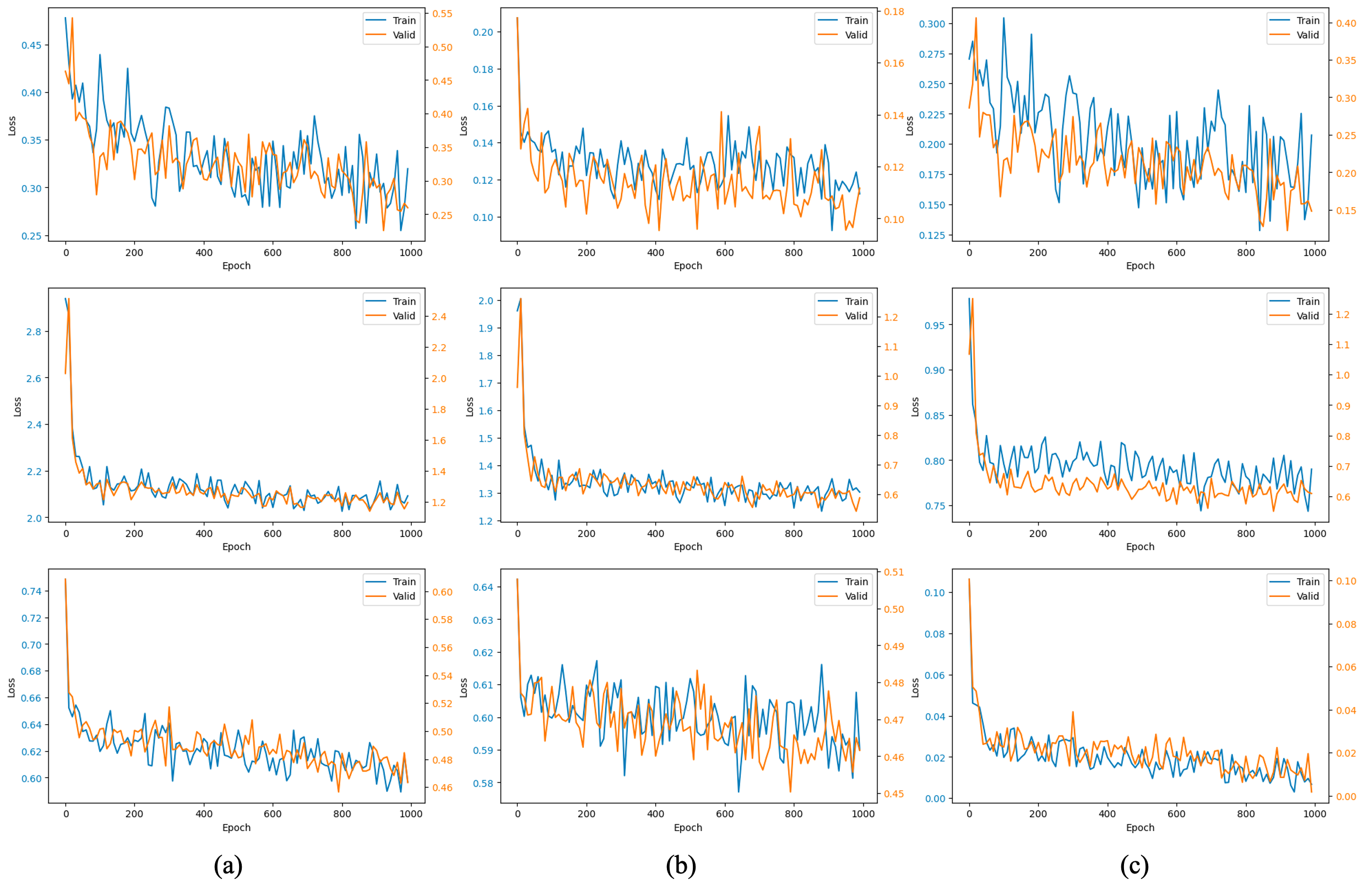}
    \caption{
        Training and validation loss plots.
        (a) Total loss, (b) forecasting loss, and (c) alignment loss.
        From top to bottom, each row illustrates the losses of N-BEATS-G, N-BEATS-I, and N-HiTS, respectively.
        Losses are reported every 10 iterations.
    } \label{apen:fig:loss}
    \vspace{-\baselineskip}
\end{figure}

\clearpage

\section{Detailed Experimental Results of Section \ref{sec:exp}} \label{apen:exp_results}
Tables \ref{apen:table:result} and \ref{apen:table:sota} contain the extended experimental results summarized in Table \ref{table:result}.
The suitability of various measures of dispersion within our proposed framework explored in Table \ref{table:regularizer} is  presented in Tables \ref{apen:table:regularizer} and \ref{apen:table:epsilon} with more details.
Specifically, we consider the commonly used metrics for the representation learning framework: for $\mu,\nu\in\mathcal{P}(\widetilde{\mathcal{Z}})$,
\begin{itemize}[leftmargin=*]
    \item Kullback-Leibler divergence (KL):
    \[
        \operatorname{KL}(\mu\lVert\nu)=\int_{\widetilde{\mathcal{Z}}}\log\left(\frac{\mu(dz)}{\nu(dz)}\right)\mu(dz),
    \]
    \item Maximum mean discrepancy (MMD):
    \[
        \operatorname{MMD}_{\mathcal{F}}(\mu,\nu)=\sup_{f\in\mathcal{F}}\left(\int_{\widetilde{\mathcal{Z}}}f(x)\mu(dx) - \int_{\widetilde{\mathcal{Z}}}f(y)\nu(dy)\right),
    \]
    where $\mathcal{F}$ represents a class of functions $f:\widetilde{\mathcal{Z}}\to\mathbb{R}$.
    Notably, $\mathcal{F}$ can be delineated as the unit ball in a reproducing kernel Hilbert space.
    For detailed description and insights into other possible function classes, refer to \cite[Sections 2.2, 7.1, and 7.2]{gretton2012kernel}.
\end{itemize}

\begin{table*}[ht]
    \caption{
        Domain generalization performance of Feature-aligned N-BEATS (noting that it represents entire stats corresponding to the averaged ones given in Table \ref{table:result}).
        The first two columns denote the target domain for each experiment.
        This is followed in subsequent tables as well.
    } \label{apen:table:result}
    \vspace{.5\baselineskip}
    \centering
    \resizebox{\textwidth}{!}{
        \begin{tabular}{cc|cccc|cccc|cccc}
            \toprule
            \multicolumn{2}{c|}{Methods} & \multicolumn{2}{c}{N-HiTS} & \multicolumn{2}{c|}{+ FA (Ours)} & \multicolumn{2}{c}{N-BEATS-I} & \multicolumn{2}{c|}{+ FA (Ours)} & \multicolumn{2}{c}{N-BEATS-G} & \multicolumn{2}{c}{+ FA (Ours)} \\
            \midrule
            \multicolumn{2}{c|}{Metrics} & $\smape$ & $\mase$ & $\smape$ & $\mase$ & $\smape$ & $\mase$ & $\smape$ & $\mase$ & $\smape$ & $\mase$ & $\smape$ & $\mase$ \\
            \midrule\midrule
            \multicolumn{14}{c}{ODG} \\
            \midrule
            \multirow{4}{*}{\rotatebox{90}{FRED}}
            & Commodity & 0.136 & 0.049 & \textbf{0.103} & \textbf{0.046} & 0.279 & 0.072 & 0.258 & 0.069 & 0.195 & 0.052 & 0.136 & 0.049 \\
            & Income & 0.299 & 0.057 & \textbf{0.298} & \textbf{0.055} & 0.369 & 0.082 & 0.335 & 0.075 & 0.305 & 0.056 & 0.304 & \textbf{0.055} \\
            & Interest rate & 0.120 & 0.074 & \textbf{0.100} & 0.070 & 0.200 & \textbf{0.044} & 0.189 & 0.046 & 0.148 & 0.073 & 0.120 & 0.073 \\
            & Exchange rate & 0.035 & 0.059 & \textbf{0.034} & \textbf{0.058} & 0.078 & 0.078 & 0.075 & 0.069 & 0.040 & 0.062 & 0.039 & 0.060 \\
            \rowcolor{Gray}
            & Average & 0.148 & 0.060 & \textbf{0.134} & \textbf{0.057} & 0.232 & 0.069 & 0.214 & 0.065 & 0.172 & 0.061 & 0.150 & 0.059 \\
            \multirow{4}{*}{\rotatebox{90}{NCEI}}
            & Pressure & 0.368 & 0.255 & \textbf{0.350} & \textbf{0.250} & 0.759 & 0.396 & 0.409 & 0.305 & 0.368 & 0.255 & 0.367 & 0.255 \\
            & Rain & 1.821 & 1.099 & 1.804 & \textbf{0.910} & 1.798 & 1.600 & \textbf{1.793} & 1.430 & 1.818 & 1.100 & 1.806 & 0.918 \\
            & Temperature & 0.247 & 0.245 & \textbf{0.245} & \textbf{0.243} & 0.247 & 0.353 & 0.246 & 0.255 & 0.247 & 0.245 & 0.246 & 0.244 \\
            & Wind & 0.455 & 0.645 & 0.454 & \textbf{0.644} & 0.451 & 0.665 & \textbf{0.449} & 0.662 & 0.455 & 0.645 & 0.454 & 0.645 \\
            \rowcolor{Gray}
            & Average & 0.723 & 0.561 & \textbf{0.713} & \textbf{0.512} & 0.814 & 0.754 & 0.724 & 0.663 & 0.722 & 0.561 & 0.718 & 0.516 \\
            \midrule
            \multicolumn{14}{c}{CDG} \\
            \midrule
            \multirow{4}{*}{\rotatebox{90}{FRED}}
            & Commodity & 0.082 & 0.047 & \textbf{0.081} & \textbf{0.044} & 0.195 & 0.060 & 0.197 & 0.059 & 0.119 & 0.046 & 0.107 & 0.046 \\
            & Income & 0.296 & 0.054 & \textbf{0.295} & \textbf{0.053} & 0.327 & 0.072 & 0.323 & 0.070 & 0.301 & 0.055 & \textbf{0.295} & \textbf{0.053} \\
            & Interest rate & 0.086 & 0.074 & \textbf{0.085} & 0.074 & 0.146 & \textbf{0.051} & 0.146 & 0.054 & 0.102 & 0.077 & 0.100 & 0.077 \\
            & Exchange rate & 0.032 & 0.056 & \textbf{0.029} & \textbf{0.055} & 0.054 & 0.074 & 0.055 & 0.063 & 0.032 & 0.056 & 0.031 & \textbf{0.055} \\
            \rowcolor{Gray}
            & Average & 0.124 & 0.058 & \textbf{0.123} & \textbf{0.057} & 0.181 & 0.064 & 0.179 & 0.062 & 0.139 & 0.059 & 0.133 & 0.058\\
            \multirow{4}{*}{\rotatebox{90}{NCEI}}
            & Pressure & 0.375 & 0.264 & \textbf{0.372} & \textbf{0.260} & 0.408 & 0.307 & 0.405 & 0.299 & 0.540 & 0.374 & \textbf{0.372} & 0.263 \\
            & Rain & 1.807 & 1.169 & 1.803 & \textbf{0.783} & 1.800 & 2.091 & \textbf{1.787} & 1.831 & 1.807 & 1.169 & 1.804 & 1.178 \\
            & Temperature & 0.334 & 0.243 & 0.245 & \textbf{0.242} & 0.275 & 0.245 & \textbf{0.240} & 0.244 & 0.253 & 0.245 & 0.245 & 0.243 \\
            & Wind & 0.453 & \textbf{0.643} & 0.452 & \textbf{0.643} & 0.441 & 0.646 & \textbf{0.439} & 0.646 & 0.453 & \textbf{0.643} & 0.452 & \textbf{0.643} \\
            \rowcolor{Gray}
            & Average & 0.742 & 0.581 & \textbf{0.718} & \textbf{0.482} & 0.731 & 0.822 & \textbf{0.718} & 0.755 & 0.763 & 0.608 & \textbf{0.718} & 0.582\\
            \midrule
            \multicolumn{14}{c}{IDG} \\
            \midrule
            \multirow{4}{*}{\rotatebox{90}{FRED}}
            & Commodity & 0.083 & 0.045 & \textbf{0.068} & \textbf{0.043} & 0.125 & 0.053 & 0.126 & 0.053 & 0.165 & 0.047 & 0.080 & 0.044 \\
            & Income & 0.299 & 0.058 & \textbf{0.297} & \textbf{0.054} & 0.305 & 0.055 & 0.302 & 0.055 & 0.301 & 0.056 & 0.298 & 0.055 \\
            & Interest rate & 0.072 & \textbf{0.081} & \textbf{0.071} & \textbf{0.081} & 0.088 & 0.084 & 0.086 & 0.091 & 0.080 & 0.083 & 0.074 & \textbf{0.081} \\
            & Exchange rate & \textbf{0.024} & 0.051 & \textbf{0.024} & \textbf{0.050} & 0.028 & 0.056 & 0.029 & 0.056 & 0.025 & 0.051 & \textbf{0.024} & \textbf{0.050} \\
            \rowcolor{Gray}
            & Average & 0.119 & 0.059 & \textbf{0.115} & \textbf{0.057} & 0.137 & 0.062 & 0.136 & 0.064 & 0.143 & 0.081 & 0.119 & 0.058 \\
            \multirow{4}{*}{\rotatebox{90}{NCEI}}
            & Pressure & 0.394 & 0.276 & \textbf{0.384} & 0.272 & 0.392 & 0.266 & 0.389 & \textbf{0.263} & \textbf{0.384} & 0.276 & \textbf{0.384} & 0.276 \\
            & Rain & \textbf{1.776} & 1.211 & \textbf{1.776} & \textbf{1.208} & 1.792 & 2.922 & 1.805 & 3.046 & 1.818 & 1.690 & \textbf{1.776} & \textbf{1.208} \\
            & Temperature & 0.247 & 0.242 & 0.247 & 0.242 & 0.234 & 0.231 & \textbf{0.231} & \textbf{0.227} & 0.247 & 0.242 & 0.245 & 0.241 \\
            & Wind & 0.455 & 0.641 & 0.452 & 0.640 & \textbf{0.433} & \textbf{0.622} & 0.434 & 0.623 & 0.454 & 0.641 & 0.452 & 0.640 \\
            \rowcolor{Gray}
            & Average & 0.718 & 0.593 & 0.715 & \textbf{0.591} & \textbf{0.713} & 1.011 & 0.715 & 1.039 & 0.726 & 0.712 & 0.714 & \textbf{0.591} \\
            \bottomrule
        \end{tabular}
    }
    \vspace{-\baselineskip}
\end{table*}

\begin{table*}[ht]
    \caption{Domain generalization performance of competing models (noting that it represents entire stats corresponding to the averaged ones given in Table \ref{table:result}). Note that `NA' indicates an anomalous error exceeding 10,000, not a divergence of training, as in Table \ref{table:result}.
    } \label{apen:table:sota}
    \vspace{.5\baselineskip}
    \centering
    \resizebox{\textwidth}{!}{
        \begin{tabular}{cc|cccccccc}
            \toprule
            \multicolumn{2}{c|}{Methods} & \multicolumn{2}{c}{NLinear} & \multicolumn{2}{c}{DLinear} & \multicolumn{2}{c}{Autoformer} & \multicolumn{2}{c}{Informer} \\
            \midrule
            \multicolumn{2}{c|}{Metrics} & $\smape$ & $\mase$ & $\smape$ & $\mase$ & $\smape$ & $\mase$ & $\smape$ & $\mase$ \\
            \midrule\midrule
            \multicolumn{10}{c}{ODG} \\
            \midrule
            \multirow{4}{*}{\rotatebox{90}{FRED}}
            & Commodity & 0.666 & \textbf{17.941} & \textbf{0.657} & 18.084 & 0.830 & 22.325 & 1.248 & 40.580 \\
            & Income & \textbf{0.003} & \textbf{162.32} & 0.185 & 8,752.66 & 0.762 & NA & 1.275 & NA \\
            & Interest rate & \textbf{0.022} & \textbf{9.138} & 0.190 & 83.204 & 0.363 & 144.97 & 1.209 & 2,184.78 \\
            & Exchange rate & \textbf{0.013} & \textbf{3.189} & 0.196 & 3.979 & 0.326 & 4.531 & 1.124 & 27.990 \\
            \rowcolor{Gray}
            & Average & \textbf{0.176} & \textbf{48.147} & 0.307 & 2,214.48 & 0.570 & NA & 1.214 & NA\\
            \multirow{4}{*}{\rotatebox{90}{NCEI}}
            & Pressure & \textbf{0.954} & \textbf{3.837} & 1.300 & 4.237 & 1.168 & 4.749 & 1.414 & 8.284 \\
            & Rain & \textbf{1.038} & 1.089 & 1.231 & 1.169 & 1.175 & \textbf{0.897} & 1.783 & 1.162 \\
            & Temperature & \textbf{1.136} & \textbf{4.399} & 1.340 & 4.572 & 1.352 & 5.761 & 1.616 & 10.885 \\
            & Wind & \textbf{1.320} & 1.623 & 1.337 & \textbf{1.497} & 1.476 & 1.836 & 1.706 & 2.803 \\
            \rowcolor{Gray}
            & Average & \textbf{1.112} & \textbf{2.737} & 1.302 & 2.869 & 1.293 & 3.311 & 1.630 & 5.784\\
            \midrule
            \multicolumn{10}{c}{CDG} \\
            \midrule
            \multirow{4}{*}{\rotatebox{90}{FRED}}
            & Commodity & \textbf{0.664} & \textbf{18.166} & 0.855 & 21.661 & 0.829 & 21.360 & 1.353 & 40.584 \\
            & Income & \textbf{0.004} & \textbf{212.60} & 0.203 & 9,979.14 & 0.995 & NA & 1.204 & NA \\
            & Interest rate & \textbf{0.023} & \textbf{9.364} & 0.587 & 210.91 & 0.957 & 1,695.19 & 1.040 & 2,058.07 \\
            & Exchange rate & \textbf{0.014} & \textbf{3.586} & 0.499 & 5.352 & 0.792 & 14.055 & 0.974 & 28.157 \\
            \rowcolor{Gray}
            & Average & \textbf{0.176} & \textbf{60.929} & 0.536 & 2,554.27 & 0.893 & NA & 1.143 & NA\\
            \multirow{4}{*}{\rotatebox{90}{NCEI}}
            & Pressure & \textbf{0.923} & \textbf{3.810} & 1.055 & 4.100 & 1.127 & 4.769 & 1.222 & 5.851 \\
            & Rain & 1.037 & 1.137 & \textbf{1.033} & 1.125 & 1.201 & \textbf{0.899} & 1.523 & 1.149 \\
            & Temperature & 1.114 & \textbf{4.340} & \textbf{1.106} & 4.431 & 1.304 & 5.609 & 1.439 & 7.358 \\
            & Wind & 1.310 & 1.649 & \textbf{1.151} & \textbf{1.491} & 1.458 & 1.655 & 1.565 & 2.228 \\
            \rowcolor{Gray}
            & Average & 1.096 & \textbf{2.734} & \textbf{1.086} & 2.787 & 1.273 & 3.233 & 1.437 & 4.147\\
            \midrule
            \multicolumn{10}{c}{IDG} \\
            \midrule
            \multirow{4}{*}{\rotatebox{90}{FRED}}
            & Commodity & \textbf{0.671} & 30.210 & 1.139 & 38.929 & 0.835 & \textbf{21.462} & 1.652 & 34.784 \\
            & Income & \textbf{0.026} & \textbf{1,951.16} & 0.043 & 4,232.97 & 1.500 & NA & 0.704 & NA \\
            & Interest rate & \textbf{0.079} & \textbf{52.007} & 1.111 & 586.24 & 0.943 & 475.12 & 0.593 & 333.18 \\
            & Exchange rate & \textbf{0.013} & 5.443 & 1.080 & 11.873 & 0.726 & 5.815 & 0.424 & \textbf{5.255} \\
            \rowcolor{Gray}
            & Average & \textbf{0.197} & \textbf{509.71} & 0.843 & 1,217.50 & 1.001 & NA & 0.843 & NA\\
            \multirow{4}{*}{\rotatebox{90}{NCEI}}
            & Pressure & 0.868 & 5.271 & \textbf{0.698} & 5.318 & 1.280 & 6.992 & 1.906 & \textbf{4.614} \\
            & Rain & 0.900 & 1.368 & \textbf{0.709} & 1.301 & 1.009 & \textbf{0.918} & 1.463 & 1.081 \\
            & Temperature & 1.014 & 6.055 & \textbf{0.768} & 5.753 & 1.310 & 4.788 & 1.080 & \textbf{4.235} \\
            & Wind & 1.206 & 2.195 & \textbf{0.913} & 2.084 & 1.472 & \textbf{1.592} & 1.570 & 1.986 \\
            \rowcolor{Gray}
            & Average & 0.997 & 3.722 & \textbf{0.772} & 3.614 & 1.268 & 3.573 & 1.505 & \textbf{2.979}\\
            \bottomrule
        \end{tabular}
    }
    \vspace{-\baselineskip}
\end{table*}

\begin{table*}[ht]
    \caption{Ablation study on other divergences (noting that it represents entire stats containing the ones given in Table \ref{table:regularizer})} \label{apen:table:regularizer}
    \vspace{.5\baselineskip}
    \centering
    \resizebox{\textwidth}{!}{
        \begin{tabular}{cc|cccccc|cccccc|cccccc}
            \toprule
            \multicolumn{2}{c|}{Models} & \multicolumn{6}{c|}{N-HiTS} & \multicolumn{6}{c|}{N-BEATS-I} & \multicolumn{6}{c}{N-BEATS-G} \\
            \midrule
            \multicolumn{2}{c|}{Divergences} & \multicolumn{2}{c}{WD} & \multicolumn{2}{c}{MMD} & \multicolumn{2}{c|}{KL} & \multicolumn{2}{c}{WD} & \multicolumn{2}{c}{MMD} & \multicolumn{2}{c|}{KL} & \multicolumn{2}{c}{WD} & \multicolumn{2}{c}{MMD} & \multicolumn{2}{c}{KL} \\
            \midrule
            \multicolumn{2}{c|}{Metrics} & $\smape$ & $\mase$ & $\smape$ & $\mase$ & $\smape$ & $\mase$ & $\smape$ & $\mase$ & $\smape$ & $\mase$ & $\smape$ & $\mase$ & $\smape$ & $\mase$ & $\smape$ & $\mase$ & $\smape$ & $\mase$ \\
            \midrule\midrule
            \multicolumn{20}{c}{ODG} \\
            \midrule
            \multirow{4}{*}{\rotatebox{90}{FRED}}
            & Commodity & 0.103 & \textbf{0.046} & 0.110 & \textbf{0.046} & \textbf{0.100} & 0.047 & 0.257 & 0.069 & 0.246 & 0.066 & 0.222 & 0.063 & 0.136 & 0.049 & 0.137 & 0.050 & 0.133 & 0.050 \\
            & Income & 0.297 & 0.055 & 0.302 & 0.056 & \textbf{0.293} & \textbf{0.050} & 0.334 & 0.075 & 0.338 & 0.082 & 0.320 & 0.067 & 0.305 & 0.055 & 0.305 & 0.055 & 0.300 & 0.052 \\
            & Interest rate & \textbf{0.100} & 0.070 & 0.107 & 0.083 & \textbf{0.100} & 0.070 & 0.189 & 0.046 & 0.165 & \textbf{0.021} & 0.189 & 0.046 & 0.119 & 0.073 & 0.123 & 0.077 & 0.120 & 0.073 \\
            & Exchange rate & \textbf{0.034} & 0.058 & \textbf{0.034} & 0.058 & 0.040 & \textbf{0.053} & 0.075 & 0.069 & 0.074 & 0.073 & 0.070 & 0.070 & 0.039 & 0.060 & 0.041 & 0.059 & 0.044 & 0.057 \\
            \rowcolor{Gray}
            & Average & 0.134 & 0.057 & 0.138 & 0.061 & \textbf{0.133} & \textbf{0.055} & 0.214 & 0.065 & 0.206 & 0.061 & 0.200 & 0.062 & 0.150 & 0.059 & 0.152 & 0.060 & 0.149 & 0.058 \\
            \multirow{4}{*}{\rotatebox{90}{NCEI}}
            & Pressure & \textbf{0.349} & \textbf{0.250} & 0.351 & \textbf{0.250} & NaN & NaN & 0.411 & 0.305 & 0.412 & 0.306 & NaN & NaN & 0.367 & 0.255 & 0.367 & 0.254 & 0.365 & 0.263 \\
            & Rain & 1.807 & \textbf{0.910} & 1.820 & 0.919 & NaN & NaN & \textbf{1.789} & 1.430 & 1.800 & 1.728 & NaN & NaN & 1.798 & 0.918 & 1.818 & 1.026 & 1.831 & 0.951 \\
            & Temperature & \textbf{0.245} & \textbf{0.243} & 0.246 & 0.244 & NaN & NaN & 0.246 & 0.255 & 0.248 & 0.255 & NaN & NaN & 0.246 & 0.244 & 0.248 & 0.245 & 0.248 & 0.247 \\
            & Wind & 0.454 & \textbf{0.644} & 0.455 & 0.645 & NaN & NaN & \textbf{0.450} & 0.662 & \textbf{0.450} & 0.662 & NaN & NaN & 0.454 & 0.645 & 0.457 & 0.645 & 0.456 & 0.646 \\
            \rowcolor{Gray}
            & Average & \textbf{0.714} & \textbf{0.512} & 0.718 & 0.515 & NaN & NaN & 0.724 & 0.663 & 0.728 & 0.738 & NaN & NaN & 0.716 & 0.515 & 0.723 & 0.543 & 0.725 & 0.527 \\
            \midrule
            \multicolumn{20}{c}{CDG} \\
            \midrule
            \multirow{4}{*}{\rotatebox{90}{FRED}}
            & Commodity & \textbf{0.081} & \textbf{0.044} & 0.086 & 0.045 & NaN & NaN & 0.197 & 0.059 & 0.187 & 0.057 & NaN & NaN & 0.107 & 0.046 & 0.104 & 0.046 & NaN & NaN \\
            & Income & \textbf{0.295} & 0.053 & 0.298 & 0.054 & NaN & NaN & 0.322 & 0.070 & 0.323 & 0.071 & 0.315 & 0.064 & 0.296 & 0.053 & 0.301 & 0.055 & 0.296 & \textbf{0.051} \\
            & Interest rate & \textbf{0.085} & 0.074 & 0.088 & 0.076 & NaN & NaN & 0.146 & 0.054 & 0.141 & 0.051 & 0.115 & \textbf{0.035} & 0.100 & 0.077 & 0.100 & 0.078 & 0.088 & 0.076 \\
            & Exchange rate & \textbf{0.029} & \textbf{0.055} & \textbf{0.029} & 0.056 & 0.030 & \textbf{0.055} & 0.055 & 0.063 & 0.054 & 0.066 & 0.044 & 0.071 & 0.031 & \textbf{0.055} & 0.031 & 0.057 & 0.031 & 0.057 \\
            \rowcolor{Gray}
            & Average & \textbf{0.122} & \textbf{0.056} & 0.125 & 0.058 & NaN & NaN & 0.180 & 0.061 & 0.176 & 0.061 & NaN & NaN & 0.133 & 0.058 & 0.134 & 0.059 & NaN & NaN \\
            \multirow{4}{*}{\rotatebox{90}{NCEI}}
            & Pressure & 0.372 & \textbf{0.260} & 0.377 & 0.262 & NaN & NaN & 0.403 & 0.299 & 0.405 & 0.316 & NaN & NaN & 0.371 & 0.263 & \textbf{0.370} & 0.262 & NaN & NaN \\
            & Rain & 1.796 & \textbf{0.783} & 1.815 & 0.940 & NaN & NaN & \textbf{1.780} & 1.831 & 1.791 & 1.934 & NaN & NaN & 1.804 & 1.178 & 1.808 & 1.108 & NaN & NaN \\
            & Temperature & 0.244 & \textbf{0.242} & 0.245 & 0.243 & NaN & NaN & \textbf{0.240} & 0.244 & 0.241 & 0.245 & NaN & NaN & 0.244 & 0.243 & 0.246 & 0.244 & NaN & NaN \\
            & Wind & 0.452 & \textbf{0.643} & 0.453 & \textbf{0.643} & NaN & NaN & \textbf{0.437} & 0.646 & 0.440 & 0.646 & NaN & NaN & 0.453 & \textbf{0.643} & 0.453 & \textbf{0.643} & NaN & NaN \\
            \rowcolor{Gray}
            & Average & 0.716 & \textbf{0.482} & 0.723 & 0.522 & NaN & NaN & \textbf{0.715} & 0.755 & 0.719 & 0.785 & NaN & NaN & 0.718 & 0.582 & 0.719 & 0.564 & NaN & NaN \\
            \midrule
            \multicolumn{20}{c}{IDG} \\
            \midrule
            \multirow{4}{*}{\rotatebox{90}{FRED}}
            & Commodity & \textbf{0.068} & \textbf{0.043} & \textbf{0.068} & 0.044 & NaN & NaN & 0.126 & 0.053 & 0.122 & 0.050 & NaN & NaN & 0.080 & 0.044 & 0.082 & 0.045 & NaN & NaN \\
            & Income & \textbf{0.297} & \textbf{0.054} & 0.299 & 0.057 & NaN & NaN & 0.302 & 0.055 & 0.308 & 0.058 & NaN & NaN & \textbf{0.297} & 0.055 & 0.301 & 0.056 & NaN & NaN \\
            & Interest rate & \textbf{0.071} & 0.081 & 0.072 & \textbf{0.080} & NaN & NaN & 0.086 & 0.091 & 0.098 & 0.089 & NaN & NaN & 0.074 & 0.081 & 0.074 & 0.082 & NaN & NaN \\
            & Exchange rate & \textbf{0.024} & 0.050 & 0.025 & 0.051 & NaN & NaN & 0.029 & 0.056 & 0.035 & 0.055 & NaN & NaN & \textbf{0.024} & 0.050 & 0.025 & 0.051 & 0.026 & \textbf{0.049} \\
            \rowcolor{Gray}
            & Average & \textbf{0.115} & \textbf{0.057} & 0.116 & 0.058 & NaN & NaN & 0.136 & 0.064 & 0.141 & 0.063 & NaN & NaN & 0.119 & 0.058 & 0.121 & 0.059 & NaN & NaN \\
            \multirow{4}{*}{\rotatebox{90}{NCEI}}
            & Pressure & 0.384 & 0.272 & 0.403 & 0.286 & 0.403 & 0.286 & 0.390 & 0.263 & 0.384 & \textbf{0.259} & 0.380 & 0.261 & 0.382 & 0.276 & 0.378 & 0.275 & \textbf{0.376} & 0.274 \\
            & Rain & 1.782 & \textbf{1.208} & 1.849 & 1.421 & NaN & NaN & 1.800 & 3.045 & 1.792 & 2.881 & NaN & NaN & \textbf{1.767} & \textbf{1.208} & 1.817 & 1.687 & NaN & NaN \\
            & Temperature & 0.248 & 0.242 & 0.247 & 0.242 & NaN & NaN & \textbf{0.230} & \textbf{0.227} & 0.234 & 0.228 & NaN & NaN & 0.245 & 0.241 & 0.245 & 0.242 & NaN & NaN \\
            & Wind & 0.453 & 0.640 & 0.454 & 0.640 & NaN & NaN & \textbf{0.433} & \textbf{0.623} & 0.435 & 0.624 & NaN & NaN & 0.451 & 0.640 & 0.452 & 0.641 & NaN & NaN \\
            \rowcolor{Gray}
            & Average & 0.717 & \textbf{0.591} & 0.738 & 0.647 & NaN & NaN & 0.713 & 1.039 & \textbf{0.711} & 0.998 & NaN & NaN & \textbf{0.711} & \textbf{0.591} & 0.723 & 0.711 & NaN & NaN \\
            \bottomrule
        \end{tabular}
    }
    \vspace{-\baselineskip}
\end{table*}

\begin{table*}[ht]
    \caption{
        Ablation study on the Sinkhorn divergence with several values on $\epsilon$ (noting that it represents entire stats containing the ones given in Table \ref{table:regularizer}).
    } \label{apen:table:epsilon}
    \vspace{.5\baselineskip}
    \centering
    \resizebox{\textwidth}{!}{
        \begin{tabular}{cc|cccc|cccc|cccc}
            \toprule
            \multicolumn{2}{c|}{Models} & \multicolumn{4}{c|}{N-HiTS} & \multicolumn{4}{c|}{N-BEATS-I} & \multicolumn{4}{c}{N-BEATS-G} \\
            \midrule
            \multicolumn{2}{c|}{$\epsilon$ Values} & \multicolumn{2}{c}{1e-5} & \multicolumn{2}{c|}{1e-1} & \multicolumn{2}{c}{1e-5} & \multicolumn{2}{c|}{1e-1} & \multicolumn{2}{c}{1e-5} & \multicolumn{2}{c}{1e-1} \\
            \midrule
            \multicolumn{2}{c|}{Metrics} & $\smape$ & $\mase$ & $\smape$ & $\mase$ & $\smape$ & $\mase$ & $\smape$ & $\mase$ & $\smape$ & $\mase$ & $\smape$ & $\mase$ \\
            \midrule\midrule
            \multicolumn{14}{c}{ODG} \\
            \midrule
            \multirow{4}{*}{\rotatebox{90}{FRED}}
            & Commodity & \textbf{0.103} & \textbf{0.046} & 0.109 & 0.047 & 0.257 & 0.069 & 0.121 & 0.097 & 0.110 & 0.047 & 0.112 & 0.074 \\
            & Income & \textbf{0.297} & \textbf{0.055} & \textbf{0.297} & 0.129 & 0.334 & 0.075 & 0.329 & 0.267 & 0.302 & 0.057 & 0.304 & 0.204 \\
            & Interest rate & \textbf{0.100} & 0.070 & 0.105 & \textbf{0.046} & 0.189 & \textbf{0.046} & 0.116 & 0.094 & 0.106 & 0.074 & 0.107 & 0.072 \\
            & Exchange rate & \textbf{0.034} & 0.058 & 0.036 & \textbf{0.015} & 0.075 & 0.069 & 0.040 & 0.031 & 0.036 & 0.060 & 0.037 & 0.024 \\
            \rowcolor{Gray}
            & Average & \textbf{0.134} & \textbf{0.057} & 0.137 & 0.059 & 0.214 & 0.065 & 0.152 & 0.122 & 0.139 & 0.060 & 0.140 & 0.094 \\
            \multirow{4}{*}{\rotatebox{90}{NCEI}}
            & Pressure & \textbf{0.348} & 0.250 & 0.356 & \textbf{0.156} & 0.408 & 0.305 & 0.393 & 0.322 & 0.364 & 0.254 & 0.364 & 0.246 \\
            & Rain & 1.795 & 0.910 & 1.790 & \textbf{0.776} & \textbf{1.789} & 1.430 & 1.980 & 1.603 & 1.808 & 0.944 & 1.832 & 1.223 \\
            & Temperature & \textbf{0.244} & 0.243 & 0.246 & \textbf{0.106} & 0.245 & 0.255 & 0.272 & 0.219 & 0.247 & 0.244 & 0.252 & 0.167 \\
            & Wind & 0.452 & 0.644 & 0.450 & \textbf{0.195} & \textbf{0.448} & 0.662 & 0.498 & 0.404 & 0.457 & 0.645 & 0.461 & 0.308 \\
            \rowcolor{Gray}
            & Average & \textbf{1.072} & 0.580 & 1.073 & \textbf{0.466} & 1.099 & 0.868 & 1.187 & 0.963 & 1.086 & 0.599 & 1.098 & 0.735 \\
            \midrule
            \multicolumn{14}{c}{CDG} \\
            \midrule
            \multirow{4}{*}{\rotatebox{90}{FRED}}
            & Commodity & \textbf{0.081} & \textbf{0.044} & 0.087 & 0.057 & 0.197 & 0.059 & 0.093 & 0.085 & 0.087 & 0.045 & 0.088 & 0.066 \\
            & Income & \textbf{0.295} & \textbf{0.053} & 0.298 & 0.193 & 0.323 & 0.070 & 0.318 & 0.290 & 0.298 & 0.056 & 0.302 & 0.225 \\
            & Interest rate & \textbf{0.085} & 0.074 & 0.090 & 0.058 & 0.146 & \textbf{0.054} & 0.096 & 0.086 & 0.090 & 0.078 & 0.091 & 0.067 \\
            & Exchange rate & \textbf{0.029} & 0.055 & 0.030 & \textbf{0.020} & 0.055 & 0.063 & 0.032 & 0.030 & 0.030 & 0.057 & 0.030 & 0.023 \\
            \rowcolor{Gray}
            & Average & \textbf{0.123} & \textbf{0.057} & 0.126 & 0.082 & 0.180 & 0.062 & 0.135 & 0.123 & 0.126 & 0.059 & 0.128 & 0.095 \\
            \multirow{4}{*}{\rotatebox{90}{NCEI}}
            & Pressure & 0.372 & 0.260 & 0.369 & \textbf{0.240} & 0.405 & 0.299 & 0.393 & 0.361 & \textbf{0.367} & 0.261 & 0.374 & 0.280 \\
            & Rain & 1.803 & \textbf{0.783} & 1.790 & 1.163 & \textbf{1.787} & 1.831 & 1.910 & 1.747 & 1.801 & 0.965 & 1.816 & 1.355 \\
            & Temperature & 0.245 & 0.242 & 0.245 & \textbf{0.159} & \textbf{0.240} & 0.244 & 0.261 & 0.239 & 0.246 & 0.244 & 0.248 & 0.185 \\
            & Wind & 0.452 & 0.643 & 0.451 & \textbf{0.293} & \textbf{0.439} & 0.646 & 0.481 & 0.440 & 0.454 & 0.643 & 0.457 & 0.341 \\
            \rowcolor{Gray}
            & Average & 1.088 & \textbf{0.522} & \textbf{1.080} & 0.702 & 1.096 & 1.065 & 1.152 & 1.054 & 1.084 & 0.613 & 1.095 & 0.818 \\
            \midrule
            \multicolumn{14}{c}{IDG} \\
            \midrule
            \multirow{4}{*}{\rotatebox{90}{FRED}}
            & Commodity & \textbf{0.068} & \textbf{0.043} & 0.070 & 0.056 & 0.126 & 0.053 & 0.071 & 0.092 & 0.070 & 0.044 & 0.070 & 0.055 \\
            & Income & \textbf{0.297} & \textbf{0.054} & 0.304 & 0.240 & 0.302 & 0.055 & 0.308 & 0.397 & 0.299 & 0.058 & 0.303 & 0.238 \\
            & Interest rate & \textbf{0.071} & 0.081 & 0.072 & 0.058 & 0.086 & 0.091 & 0.073 & 0.095 & 0.073 & 0.085 & 0.072 & \textbf{0.057} \\
            & Exchange rate & \textbf{0.024} & 0.050 & 0.025 & \textbf{0.020} & 0.029 & 0.056 & 0.025 & 0.033 & 0.025 & 0.050 & 0.025 & \textbf{0.020} \\
            \rowcolor{Gray}
            & Average & \textbf{0.115} & \textbf{0.057} & 0.118 & 0.094 & 0.136 & 0.064 & 0.119 & 0.154 & 0.117 & 0.059 & 0.118 & 0.093 \\
            \multirow{4}{*}{\rotatebox{90}{NCEI}}
            & Pressure & \textbf{0.384} & 0.273 & 0.389 & 0.310 & 0.389 & \textbf{0.263} & 0.395 & 0.512 & 0.386 & 0.277 & 0.388 & 0.307 \\
            & Rain & \textbf{1.776} & \textbf{1.212} & 1.785 & 1.422 & 1.805 & 3.046 & 1.811 & 2.348 & 1.781 & 1.326 & 1.781 & 1.409 \\
            & Temperature & 0.247 & 0.243 & 0.247 & 0.196 & \textbf{0.231} & 0.227 & 0.250 & 0.323 & 0.246 & 0.241 & 0.246 & \textbf{0.194} \\
            & Wind & 0.452 & 0.642 & 0.452 & 0.360 & \textbf{0.434} & 0.623 & 0.459 & 0.595 & 0.452 & 0.640 & 0.451 & \textbf{0.357} \\
            \rowcolor{Gray}
            & Average & \textbf{1.080} & \textbf{0.743} & 1.087 & 0.866 & 1.097 & 1.655 & 1.103 & 1.430 & 1.084 & 0.802 & 1.085 & 0.858 \\
            \bottomrule
        \end{tabular}
    }
    \vspace{-\baselineskip}
\end{table*}

\clearpage

\section{Visualization on Forecasting, Interpretability and Representation} \label{apen:visual}
{\bf Visual comparison of forecasts.} We visually compare our models to the N-BEATS-based models, i.e., N-BEATS-G, N-BEATS-I, and N-HiTS.
As illustrated in Figure \ref{apen:fig:graph}, incorporating feature alignment remarkably enhances generalizability, allowing the models to produce finer forecast details.
Notably, while baseline models suffer significant performance degradation in the ODG and CDG scenarios, Feature-aligned N-BEATS evidences the benefits of the feature alignment.

\begin{figure}[ht]
    \centering
    \includegraphics[width=1\textwidth]{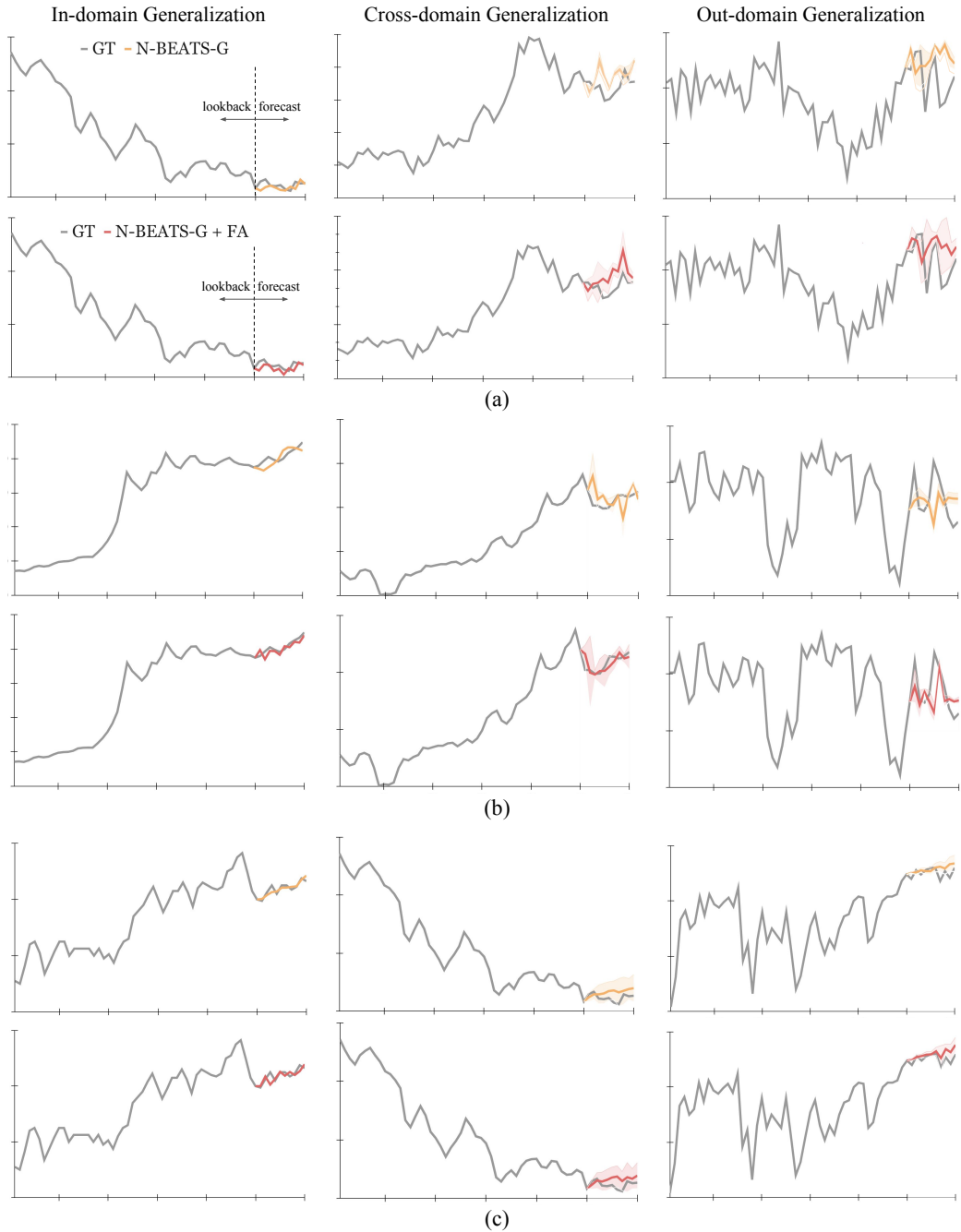}
    \vspace{-\baselineskip}
    \caption{
        Visual comparison of forecasts.
        (a) N-BEATS-G, (b) N-BEATS-I, and (c) N-HiTS.
        Results are averaged across source domain combinations, with standard deviations.
    } \label{apen:fig:graph}
    \vspace{-\baselineskip}
\end{figure}

{\bf Visual analysis of interpretability.}
Figure \ref{apen:fig:intpr} exhibits the interpretability of the proposed method by presenting the final output of the model and intermediate stack forecasts.
N-BEATS-I and N-HiTS presented in Appendix \ref{apen:margin_measure} have interpretability. More specifically, N-BEATS-I explicitly captures trend and seasonality information using polynomial and harmonic basis functions, respectively.
N-HiTS employs Fourier decomposition and utilizes its stacks for hierarchical forecasting based on frequencies. Preserving these core architectures during the alignment procedure, Feature-aligned N-BEATS still retains interpretability.

\begin{figure}[ht]
    \centering
    \includegraphics[width=1\textwidth]{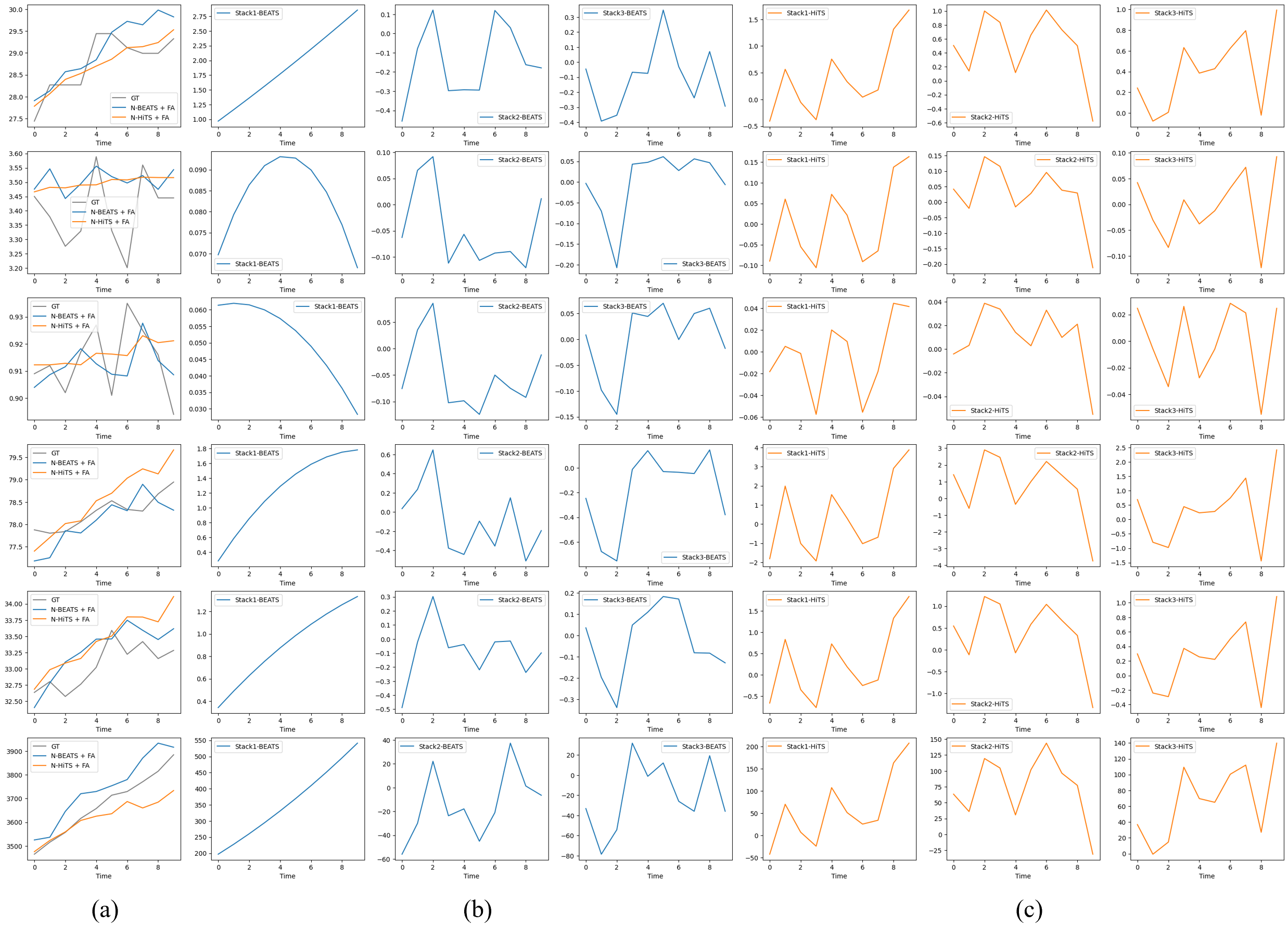}
    \caption{
        Visual analysis of interpretability.
        (a) Model forecasts, (b) stack forecasts of N-BEATS-I, and (c) N-HiTS.
        Note that N-BEATS-I utilizes a single trend stack and two seasonality stacks, sequentially.
    } \label{apen:fig:intpr}
    \vspace{-\baselineskip}
\end{figure}

\clearpage

{\bf Visualization of representation.} We further investigate the representational landscape, we analyze the samples of pushforward measure from N-BEATS-I and N-HiTS. Adopting visualization techniques for both aligned and non-aligned instances as depicted in Figure \ref{fig:umap}, we configure UMAP with 5 neighbors, a minimum distance of 0.1, and employ the Euclidean metric. Similar to N-BEATS-G, we discern two observations in Figure \ref{apen:fig:umap_ap} pertaining to N-BEATS-I and N-HiTS: (1) instances coalesce, residing closer to one another, and (2) an evident surge in domain entropy, from both N-BEATS-I and N-HiTS.
\begin{figure}[ht]
    \centering
    \includegraphics[width=1\textwidth]{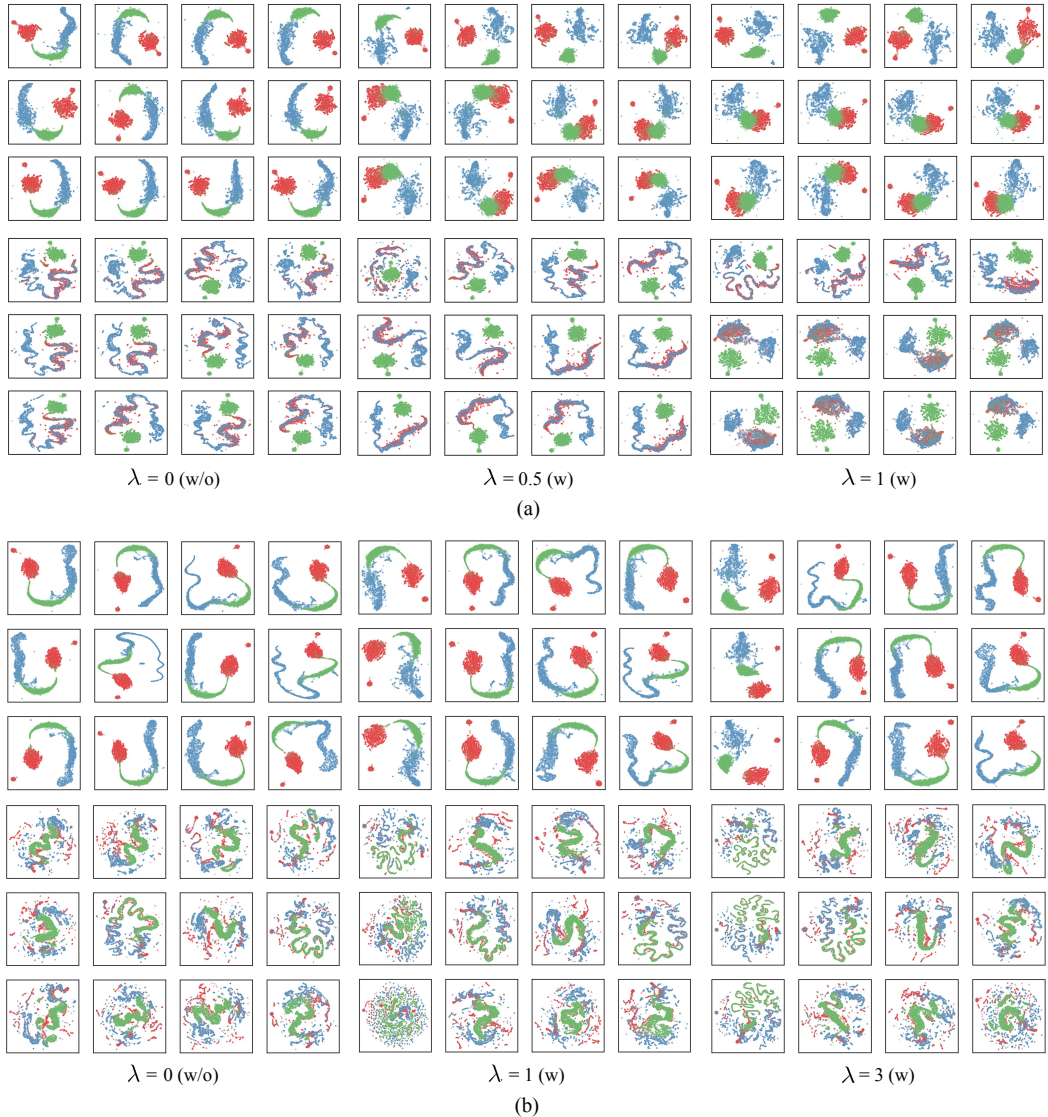}
    \vspace{-\baselineskip}
    \caption{
        Visualization of extracted features.
        (a) N-BEATS-I, and (b) N-HiTS.
        For both (a) and (b), former plots illustrate increased inter-instance proximity, while subsequent ones depict inflated entropy.
    } \label{apen:fig:umap_ap}
    \vspace{-\baselineskip}
\end{figure}

\clearpage

\section{Ablation Studies} \label{apen:ablation}
\subsection{Stack-wise vs Block-wise Alignments} \label{apen:blockwise}
As mentioned in Remark \ref{rem:blockwise}, redundant gradient flows from recurrent architecture potentially causes gradient explosion or vanishing.
To empirically validate this insight applied to our approach, we contrast stack-wise and block-wise feature alignments, as shown in Table \ref{apen:table:blockwise}.
Notably, although stack-wise alignment generally outperform its counterpart, we do not observe the aforementioned problems, which could be identified by divergence of training.
N-BEATS-I with block-wise alignment even demonstrates superior performance.
Two plausible explanations are: (1) the limited number of stacks, and (2) the operational differences between the trend and seasonality modules in N-BEATS-I, which might help alleviating redundancy issue.
Nonetheless, our primary objective of generalizing the recurrent model across various domains appears achievable through stack-wise alignment.

\begin{table}[ht]
    \caption{Ablation study on alignment frequency (i.e., stack-wise vs block-wise alignments)} \label{apen:table:blockwise}
    \vspace{.5\baselineskip}
    \centering
    \resizebox{\textwidth}{!}{
        \begin{tabular}{cc|cccc|cccc|cccc}
            \toprule
            \multicolumn{2}{c|}{Models} & \multicolumn{4}{c|}{N-HiTS} & \multicolumn{4}{c|}{N-BEATS-I} & \multicolumn{4}{c}{N-BEATS-G} \\
            \midrule
            \multicolumn{2}{c|}{Alignments} & \multicolumn{2}{c}{Block-wise} & \multicolumn{2}{c|}{Stack-wise (Ours)} & \multicolumn{2}{c}{Block-wise} & \multicolumn{2}{c|}{Stack-wise (Ours)} & \multicolumn{2}{c}{Block-wise} & \multicolumn{2}{c}{Stack-wise (Ours)} \\
            \midrule
            \multicolumn{2}{c|}{Metrics} & $\smape$ & $\mase$ & $\smape$ & $\mase$ & $\smape$ & $\mase$ & $\smape$ & $\mase$ & $\smape$ & $\mase$ & $\smape$ & $\mase$ \\
            \midrule\midrule
            \multicolumn{14}{c}{ODG} \\
            \midrule
            \multirow{4}{*}{\rotatebox{90}{FRED}}
            & Commodity & 0.137 & 0.049 & \textbf{0.103} & \textbf{0.046} & 0.133 & 0.049 & 0.258 & 0.069 & 0.137 & 0.049 & 0.136 & 0.049 \\
            & Income & 0.306 & 0.056 & \textbf{0.298} & \textbf{0.055} & 0.305 & \textbf{0.055} & 0.335 & 0.075 & 0.306 & 0.056 & 0.304 & \textbf{0.055} \\
            & Interest rate & 0.121 & 0.074 & \textbf{0.100} & 0.070 & 0.119 & 0.073 & 0.189 & \textbf{0.046} & 0.121 & 0.075 & 0.120 & 0.073 \\
            & Exchange rate & 0.040 & 0.060 & \textbf{0.034} & \textbf{0.058} & 0.040 & 0.060 & 0.075 & 0.069 & 0.040 & 0.060 & 0.039 & 0.060 \\
            \rowcolor{Gray}
            & Average & 0.151 & 0.060 & \textbf{0.134} & \textbf{0.057} & 0.149 & 0.059 & 0.214 & 0.065 & 0.151 & 0.060 & 0.150 & 0.059 \\
            \multirow{4}{*}{\rotatebox{90}{NCEI}}
            & Pressure & 0.368 & 0.255 & \textbf{0.350} & \textbf{0.250} & 0.367 & 0.254 & 0.409 & 0.305 & 0.368 & 0.255 & 0.367 & 0.255 \\
            & Rain & 1.806 & 1.094 & 1.804 & \textbf{0.910} & 1.806 & 1.094 & \textbf{1.793} & 1.430 & 1.807 & 1.091 & 1.806 & 0.918 \\
            & Temperature & 0.247 & 0.245 & \textbf{0.245} & \textbf{0.243} & 0.247 & 0.245 & 0.246 & 0.255 & 0.247 & 0.245 & 0.246 & 0.244 \\
            & Wind & 0.456 & 0.645 & 0.454 & \textbf{0.644} & 0.456 & 0.645 & \textbf{0.449} & 0.662 & 0.457 & 0.645 & 0.454 & 0.645 \\
            \rowcolor{Gray}
            & Average & 0.719 & 0.560 & \textbf{0.713} & \textbf{0.512} & 0.719 & 0.560 & 0.724 & 0.663 & 0.720 & 0.559 & 0.718 & 0.516 \\
            \midrule
            \multicolumn{14}{c}{CDG} \\
            \midrule
            \multirow{4}{*}{\rotatebox{90}{FRED}}
            & Commodity & 0.102 & 0.046 & \textbf{0.081} & \textbf{0.044} & 0.105 & 0.046 & 0.197 & 0.059 & 0.108 & 0.047 & 0.107 & 0.046 \\
            & Income & 0.301 & 0.054 & \textbf{0.295} & \textbf{0.053} & 0.301 & 0.055 & 0.323 & 0.070 & 0.301 & 0.054 & \textbf{0.295} & \textbf{0.053} \\
            & Interest rate & 0.099 & 0.076 & \textbf{0.085} & 0.074 & 0.097 & 0.076 & 0.146 & \textbf{0.054} & 0.101 & 0.078 & 0.100 & 0.077 \\
            & Exchange rate & 0.031 & 0.056 & \textbf{0.029} & \textbf{0.055} & 0.031 & 0.098 & 0.055 & 0.063 & 0.032 & \textbf{0.055} & 0.031 & \textbf{0.055} \\
            \rowcolor{Gray}
            & Average & 0.133 & 0.058 & \textbf{0.123} & \textbf{0.057} & 0.134 & 0.069 & 0.179 & 0.062 & 0.136 & 0.059 & 0.133 & 0.058 \\
            \multirow{4}{*}{\rotatebox{90}{NCEI}}
            & Pressure & 0.371 & 0.263 & 0.372 & \textbf{0.260} & \textbf{0.370} & 0.262 & 0.405 & 0.299 & 0.373 & 0.264 & 0.372 & 0.263 \\
            & Rain & 1.809 & 1.144 & 1.803 & \textbf{0.783} & 1.808 & 1.148 & \textbf{1.787} & 1.831 & 1.804 & 1.181 & 1.804 & 1.178 \\
            & Temperature & 0.246 & 0.244 & 0.245 & \textbf{0.242} & 0.246 & 0.244 & \textbf{0.240} & 0.244 & 0.246 & 0.244 & 0.245 & 0.243 \\
            & Wind & 0.453 & 0.644 & 0.452 & \textbf{0.643} & 0.454 & 0.644 & \textbf{0.439} & 0.646 & 0.453 & \textbf{0.643} & 0.452 & \textbf{0.643} \\
            \rowcolor{Gray}
            & Average & 0.720 & 0.574 & \textbf{0.718} & \textbf{0.482} & 0.720 & 0.575 & \textbf{0.718} & 0.755 & 0.719 & 0.583 & \textbf{0.718} & 0.582 \\
            \midrule
            \multicolumn{14}{c}{IDG} \\
            \midrule
            \multirow{4}{*}{\rotatebox{90}{FRED}}
            & Commodity & 0.081 & 0.045 & \textbf{0.068} & \textbf{0.043} & 0.075 & 0.044 & 0.126 & 0.053 & 0.074 & 0.044 & 0.080 & 0.044 \\
            & Income & 0.302 & 0.056 & \textbf{0.297} & \textbf{0.054} & 0.301 & 0.056 & 0.302 & 0.055 & 0.302 & 0.056 & 0.298 & 0.055 \\
            & Interest rate & 0.079 & 0.083 & \textbf{0.071} & \textbf{0.081} & 0.079 & 0.082 & 0.086 & 0.091 & 0.079 & 0.083 & 0.074 & \textbf{0.081} \\
            & Exchange rate & 0.025 & 0.051 & \textbf{0.024} & \textbf{0.050} & 0.025 & 0.051 & 0.029 & 0.056 & 0.025 & 0.051 & \textbf{0.024} & \textbf{0.050} \\
            \rowcolor{Gray}
            & Average & 0.122 & 0.059 & \textbf{0.115} & \textbf{0.057} & 0.120 & 0.058 & 0.136 & 0.064 & 0.120 & 0.059 & 0.119 & 0.058 \\
            \multirow{4}{*}{\rotatebox{90}{NCEI}}
            & Pressure & 0.384 & 0.276 & 0.384 & 0.272 & \textbf{0.383} & 0.277 & 0.389 & \textbf{0.263} & 0.384 & 0.276 & 0.384 & 0.276 \\
            & Rain & 1.818 & 1.681 & \textbf{1.776} & \textbf{1.208} & 1.798 & 1.535 & 1.805 & 3.046 & 1.817 & 1.676 & \textbf{1.776} & \textbf{1.208} \\
            & Temperature & 0.247 & 0.243 & 0.247 & 0.242 & 0.245 & 0.242 & \textbf{0.231} & \textbf{0.227} & 0.246 & 0.243 & 0.245 & 0.241 \\
            & Wind & 0.453 & 0.641 & 0.452 & 0.640 & 0.453 & 0.641 & \textbf{0.434} & \textbf{0.623} & 0.453 & 0.641 & 0.452 & 0.640 \\
            \rowcolor{Gray}
            & Average & 0.726 & 0.710 & 0.715 & \textbf{0.591} & 0.720 & 0.674 & 0.715 & 1.039 & 0.725 & 0.709 & \textbf{0.714} & \textbf{0.591} \\
            \bottomrule
        \end{tabular}
    }
    \vspace{-\baselineskip}
\end{table}

\clearpage

\subsection{Normalization Functions} \label{apen:normalizing}
According to the Table \ref{apen:table:normalizing}, Feature-aligned N-BEATS generally achieves superior performance when utilizing $\operatorname{softmax}$ function.
However, there are instances where $\tanh$ function or even the absence of a normalization yields better results compared to the $\operatorname{softmax}$.
This suggests that while scale is predominant instance-wise attribute, it may exhibit domain-dependent characteristics under certain conditions.
Aligning this scale is therefore necessary.
This entails that the $\operatorname{softmax}$, $\tanh$, and to not normalize offer different levels of flexibility in modulating or completely disregarding the scale information, implying a spectrum of capacities in aligning domain-specific attributes.

\begin{table}[ht]
    \caption{Ablation study on normalization functions.}
    \label{apen:table:normalizing}
    \vspace{.5\baselineskip}
    \centering
    \resizebox{\textwidth}{!}{
        \begin{tabular}{cc|cccccc|cccccc|cccccc}
            \toprule
            \multicolumn{2}{c|}{Models} & \multicolumn{6}{c|}{N-HiTS} & \multicolumn{6}{c|}{N-BEATS-I} & \multicolumn{6}{c}{N-BEATS-G} \\
            \midrule
            \multicolumn{2}{c|}{Normalizers} & \multicolumn{2}{c}{None} & \multicolumn{2}{c}{$\tanh$} & \multicolumn{2}{c|}{$\operatorname{softmax}$ (Ours)} & \multicolumn{2}{c}{None} & \multicolumn{2}{c}{$\tanh$} & \multicolumn{2}{c|}{$\operatorname{softmax}$ (Ours)} & \multicolumn{2}{c}{None} & \multicolumn{2}{c}{$\tanh$} & \multicolumn{2}{c}{$\operatorname{softmax}$ (Ours)} \\
            \midrule
            \multicolumn{2}{c|}{Metrics} & $\smape$ & $\mase$ & $\smape$ & $\mase$ & $\smape$ & $\mase$ & $\smape$ & $\mase$ & $\smape$ & $\mase$ & $\smape$ & $\mase$ & $\smape$ & $\mase$ & $\smape$ & $\mase$ & $\smape$ & $\mase$ \\
            \midrule\midrule
            \multicolumn{20}{c}{ODG} \\
            \midrule
            \multirow{4}{*}{\rotatebox{90}{FRED}}
            & Commodity & 0.103 & \textbf{0.046} & 0.104 & \textbf{0.046} & 0.103 & \textbf{0.046} & 0.265 & 0.070 & 0.270 & 0.069 & 0.258 & 0.069 & \textbf{0.050} & 0.136 & \textbf{0.050} & 0.135 & 0.136 & 0.049 \\
            & Income & 0.299 & 0.056 & 0.299 & 0.056 & \textbf{0.298} & \textbf{0.055} & 0.319 & 0.060 & 0.324 & 0.063 & 0.335 & 0.075 & 0.306 & 0.056 & 0.305 & 0.056 & 0.304 & \textbf{0.055} \\
            & Interest rate & 0.101 & 0.071 & 0.101 & 0.071 & \textbf{0.100} & 0.070 & 0.191 & 0.073 & 0.193 & 0.048 & 0.189 & \textbf{0.046} & 0.120 & 0.072 & 0.123 & 0.074 & 0.120 & 0.073 \\
            & Exchange rate & \textbf{0.034} & \textbf{0.058} & \textbf{0.034} & \textbf{0.058} & \textbf{0.034} & \textbf{0.058} & 0.072 & 0.061 & 0.077 & 0.074 & 0.075 & 0.069 & 0.041 & 0.061 & 0.043 & 0.059 & 0.039 & 0.060 \\
            \rowcolor{Gray}
            & Average & 0.134 & 0.058 & 0.135 & 0.058 & 0.134 & \textbf{0.057} & 0.212 & 0.066 & 0.216 & 0.064 & 0.214 & 0.065 & \textbf{0.129} & 0.081 & 0.130 & 0.081 & 0.150 & 0.059 \\
            \multirow{4}{*}{\rotatebox{90}{NCEI}}
            & Pressure & 0.349 & 0.250 & \textbf{0.348} & 0.249 & 0.350 & 0.250 & 0.398 & 0.289 & 0.411 & 0.300 & 0.409 & 0.305 & 0.352 & \textbf{0.247} & 0.361 & 0.253 & 0.367 & 0.255 \\
            & Rain & 1.819 & 0.918 & 1.820 & 0.917 & 1.804 & \textbf{0.910} & 1.808 & 2.087 & 1.807 & 1.841 & \textbf{1.793} & 1.430 & 1.814 & 1.075 & 1.814 & 1.071 & 1.806 & 0.918 \\
            & Temperature & 0.247 & 0.244 & 0.246 & 0.244 & \textbf{0.245} & \textbf{0.243} & 0.248 & 0.253 & 0.249 & 0.256 & 0.246 & 0.255 & 0.247 & 0.245 & 0.248 & 0.245 & 0.246 & 0.244 \\
            & Wind & 0.455 & 0.645 & 0.455 & \textbf{0.644} & 0.454 & \textbf{0.644} & 0.452 & 0.660 & 0.451 & 0.661 & \textbf{0.449} & 0.662 & 0.456 & 0.645 & 0.457 & 0.645 & 0.454 & 0.645 \\
            \rowcolor{Gray}
            & Average & 0.718 & 0.514 & 0.717 & 0.514 & \textbf{0.713} & \textbf{0.512} & 0.727 & 0.822 & 0.730 & 0.765 & 0.724 & 0.663 & 0.717 & 0.553 & 0.720 & 0.554 & 0.718 & 0.516 \\
            \midrule
            \multicolumn{20}{c}{CDG} \\
            \midrule
            \multirow{4}{*}{\rotatebox{90}{FRED}}
            & Commodity & 0.082 & 0.045 & \textbf{0.081} & \textbf{0.044} & \textbf{0.081} & \textbf{0.044} & 0.189 & 0.059 & 0.203 & 0.061 & 0.197 & 0.059 & 0.108 & 0.047 & 0.109 & 0.047 & 0.107 & 0.046 \\
            & Income & 0.296 & 0.055 & 0.296 & 0.055 & \textbf{0.295} & \textbf{0.053} & 0.323 & 0.088 & 0.319 & 0.064 & 0.323 & 0.070 & 0.302 & 0.054 & 0.301 & 0.054 & \textbf{0.295} & \textbf{0.053} \\
            & Interest rate & \textbf{0.085} & 0.074 & \textbf{0.085} & 0.075 & \textbf{0.085} & 0.074 & 0.145 & \textbf{0.052} & 0.149 & 0.058 & 0.146 & 0.054 & 0.101 & 0.078 & 0.101 & 0.077 & 0.100 & 0.077 \\
            & Exchange rate & \textbf{0.029} & 0.056 & \textbf{0.029} & 0.056 & \textbf{0.029} & \textbf{0.055} & 0.053 & 0.066 & 0.055 & 0.065 & 0.055 & 0.063 & 0.032 & 0.056 & 0.032 & 0.056 & 0.031 & \textbf{0.055} \\
            \rowcolor{Gray}
            & Average & \textbf{0.123} & 0.058 & \textbf{0.123} & 0.058 & \textbf{0.123} & \textbf{0.057} & 0.178 & 0.066 & 0.182 & 0.062 & 0.179 & 0.062 & 0.136 & 0.059 & 0.136 & 0.059 & 0.133 & 0.058 \\
            \multirow{4}{*}{\rotatebox{90}{NCEI}}
            & Pressure & 0.373 & \textbf{0.260} & 0.374 & \textbf{0.260} & 0.372 & \textbf{0.260} & 0.405 & 0.316 & 0.410 & 0.313 & 0.405 & 0.299 & \textbf{0.255} & 0.372 & 0.257 & 0.356 & 0.372 & 0.263 \\
            & Rain & 1.808 & 0.931 & 1.808 & 0.931 & 1.803 & \textbf{0.783} & 1.802 & 2.152 & 1.802 & 2.144 & \textbf{1.787} & 1.831 & 1.805 & 1.18 & 1.805 & 1.186 & 1.804 & 1.178 \\
            & Temperature & 0.246 & 0.243 & 0.246 & 0.243 & 0.245 & \textbf{0.242} & 0.242 & 0.246 & 0.244 & 0.248 & \textbf{0.240} & 0.244 & 0.245 & 0.243 & 0.246 & 0.243 & 0.245 & 0.243 \\
            & Wind & 0.453 & \textbf{0.643} & 0.453 & \textbf{0.643} & 0.453 & \textbf{0.643} & 0.442 & 0.649 & 0.443 & 0.649 & \textbf{0.439} & 0.646 & 0.452 & \textbf{0.643} & 0.453 & \textbf{0.643} & 0.452 & \textbf{0.643} \\
            \rowcolor{Gray}
            & Average & 0.720 & 0.519 & 0.720 & 0.519 & \textbf{0.718} & \textbf{0.482} & 0.723 & 0.841 & 0.725 & 0.839 & \textbf{0.718} & 0.755 & 0.737 & 0.562 & 0.738 & 0.560 & \textbf{0.718} & 0.582 \\
            \midrule
            \multicolumn{20}{c}{IDG} \\
            \midrule
            \multirow{4}{*}{\rotatebox{90}{FRED}}
            & Commodity & \textbf{0.068} & 0.044 & \textbf{0.068} & 0.044 & \textbf{0.068} & \textbf{0.043} & 0.124 & 0.052 & 0.142 & 0.055 & 0.126 & 0.053 & 0.083 & 0.045 & 0.083 & 0.045 & 0.080 & 0.044 \\
            & Income & 0.299 & 0.057 & 0.299 & 0.058 & \textbf{0.297} & \textbf{0.054} & 0.310 & 0.059 & 0.308 & 0.058 & 0.302 & 0.055 & 0.302 & 0.056 & 0.302 & 0.057 & 0.298 & 0.055 \\
            & Interest rate & 0.072 & 0.081 & 0.072 & \textbf{0.077} & \textbf{0.071} & 0.081 & 0.097 & 0.087 & 0.104 & 0.079 & 0.086 & 0.091 & 0.080 & 0.081 & 0.080 & 0.080 & 0.074 & 0.081 \\
            & Exchange rate & 0.025 & 0.051 & 0.025 & \textbf{0.050} & \textbf{0.024} & \textbf{0.050} & 0.033 & 0.055 & 0.040 & 0.055 & 0.029 & 0.056 & 0.026 & 0.052 & 0.026 & 0.051 & \textbf{0.024} & \textbf{0.050} \\
            \rowcolor{Gray}
            & Average & 0.116 & 0.058 & 0.116 & \textbf{0.057} & \textbf{0.115} & \textbf{0.057} & 0.141 & 0.063 & 0.149 & 0.062 & 0.136 & 0.064 & 0.123 & 0.059 & 0.123 & 0.058 & 0.119 & 0.058 \\
            \multirow{4}{*}{\rotatebox{90}{NCEI}}
            & Pressure & 0.393 & 0.273 & 0.393 & 0.273 & 0.384 & 0.272 & 0.373 & \textbf{0.256} & 0.390 & 0.266 & 0.389 & 0.263 & \textbf{0.366} & 0.274 & 0.367 & 0.283 & 0.384 & 0.276 \\
            & Rain & \textbf{1.776} & 1.211 & \textbf{1.776} & \textbf{1.205} & \textbf{1.776} & 1.208 & 1.883 & 3.222 & 1.873 & 3.848 & 1.805 & 3.046 & 1.818 & 1.671 & 1.818 & 1.695 & \textbf{1.776} & 1.208 \\
            & Temperature & 0.247 & 0.242 & 0.247 & 0.242 & 0.247 & 0.242 & 0.236 & 0.232 & 0.235 & 0.231 & \textbf{0.231} & \textbf{0.227} & 0.246 & 0.241 & 0.246 & 0.242 & 0.245 & 0.241 \\
            & Wind & 0.454 & 0.641 & 0.454 & 0.640 & 0.452 & 0.640 & 0.441 & 0.631 & 0.441 & 0.632 & \textbf{0.434} & \textbf{0.623} & 0.453 & 0.642 & 0.452 & 0.642 & 0.452 & 0.640 \\
            \rowcolor{Gray}
            & Average & 0.718 & 0.592 & 0.718 & \textbf{0.590} & 0.715 & 0.591 & 0.733 & 1.085 & 0.735 & 1.244 & 0.715 & 1.039 & 0.721 & 0.707 & 0.721 & 0.716 & \textbf{0.714} & 0.591 \\
            \bottomrule
        \end{tabular}
    }
    \vspace{-\baselineskip}
\end{table}

\subsection{Subtle Domain Shift} \label{apen:subtle}
Although the domain generalization commonly focuses on the domain shift problems, models may not perform as expected when the domain shift between source and target data is minimal.
In some cases where the data from both domains align closely, fitting to source domain without invariant feature learning even can be beneficial.
To examine this concern, we extend our analysis to the generalizability of Feature-aligned N-BEATS under such conditions.
Table \ref{apen:table:subtle} demonstrates, while our model remains competitive, there is performance degradation observed in certain instances.

\begin{table}[ht]
    \caption{
        Evaluation under subtle domain shift.
        `F' and `N' represent the FRED and NCEI datasets, respectively.
        The number of domains associated with each dataset is denoted accordingly, e.g., `F3' represents three source domains from FRED.
        We conduct experiments by considering all possible combinations for each case.
    } \label{apen:table:subtle}
    \vspace{.5\baselineskip}
    \centering
    \resizebox{\textwidth}{!}{
        \begin{tabular}{c|cccc|cccc|cccc}
            \toprule
            Methods & \multicolumn{2}{c}{N-HiTS} & \multicolumn{2}{c|}{+ FA (Ours)} & \multicolumn{2}{c}{N-BEATS-I} & \multicolumn{2}{c|}{+ FA (Ours)} & \multicolumn{2}{c}{N-BEATS-G} & \multicolumn{2}{c}{+ FA (Ours)} \\
            \midrule
            Metrics & $\smape$ & $\mase$ & $\smape$ & $\mase$ & $\smape$ & $\mase$ & $\smape$ & $\mase$ & $\smape$ & $\mase$ & $\smape$ & $\mase$ \\
            \midrule\midrule
            F3 & \textbf{0.023} & 2.055 & \textbf{0.023} & 2.055 & 0.025 & \textbf{2.028} & 0.027 & 2.061 & 0.024 & 2.064 & 0.024 & 2.066 \\
            F2N1 & 0.236 & 0.244 & 0.236 & 0.240 & 0.210 & 0.221 & \textbf{0.209} & \textbf{0.220} & 0.236 & 0.241 & 0.236 & 0.244 \\
            F1N2 & 0.235 & 0.240 & 0.235 & 0.240 & \textbf{0.209} & 0.220 & \textbf{0.209} & \textbf{0.219} & 0.235 & 0.240 & 0.234 & 0.241 \\
            N3 & 0.243 & 0.243 & 0.243 & 0.243 & \textbf{0.220} & \textbf{0.224} & 0.221 & 0.225 & 0.242 & 0.243 & 0.241 & 0.242 \\
            \rowcolor{Gray}
            Average & 0.184 & 0.695 & 0.184 & 0.694 & \textbf{0.166} & \textbf{0.673} & \textbf{0.166} & 0.680 & 0.184 & 0.697 & 0.184 & 0.698 \\
            \bottomrule
        \end{tabular}
    }
    \vspace{-\baselineskip}
\end{table}

\clearpage

\subsection{Tourism, M3 and M4 Datasets} \label{apen:common}
We extend our experimental scope to include three additional datasets: Tourism \cite{athanasopoulos2011tourism}, M3 \cite{makridakis2000m3}, and M4 \cite{makridakis2018m4}. Models are trained on two datasets and tested on the remaining dataset, enabling us to evaluate both ODG (M3, M4 $\to$ Tourism) and CDG (M3, Tourism $\to$ M4 and M4, Tourism $\to$ M3) scenarios. Our proposed methods consistently outperform N-BEATS models, demonstrating their generalization ability.

\begin{table}[ht]
    \caption{
        Domain generalization performance on Tourism, M3 and M4 datasets.
        The first column indicates the target domain.
    } \label{apen:table:common}
    \vspace{.5\baselineskip}
    \centering
    \resizebox{\textwidth}{!}{
        \begin{tabular}{c|cccc|cccc|cccc}
            \toprule
            Methods & \multicolumn{2}{c}{N-HiTS} & \multicolumn{2}{c|}{+ FA (Ours)} & \multicolumn{2}{c}{N-BEATS-I} & \multicolumn{2}{c|}{+ FA (Ours)} & \multicolumn{2}{c}{N-BEATS-G} & \multicolumn{2}{c}{+ FA (Ours)} \\
            \midrule
            Metrics & $\smape$ & $\mase$ & $\smape$ & $\mase$ & $\smape$ & $\mase$ & $\smape$ & $\mase$ & $\smape$ & $\mase$ & $\smape$ & $\mase$ \\
            \midrule \midrule
            \multicolumn{13}{c}{ODG} \\
            \midrule
            Tourism & 0.437 & 0.122 & 0.427 & 0.117 & 0.382 & 0.104 & \textbf{0.372} & \textbf{0.098} & 0.440 & 0.125 & 0.427 & 0.121 \\
            \midrule
            \multicolumn{13}{c}{CDG} \\
            \midrule
            M3 & 0.357 & 0.296 & 0.356 & 0.286 & 0.294 & 0.355 & \textbf{0.284} & 0.343 & 0.364 & 0.296 & 0.352 & \textbf{0.285} \\
            M4 & 0.097 & 0.015 & 0.091 & \textbf{0.009} & 0.152 & 0.093 & 0.148 & 0.086 & 0.091 & 0.014 & \textbf{0.084} & \textbf{0.009} \\
            \bottomrule
        \end{tabular}
    }
    \vspace{-\baselineskip}
\end{table}

\end{document}